\newtheorem{lemma}{Lemma}
\newtheorem{remark}{Remark}
\newtheorem{theorem}{Theorem}
\newtheorem{proof-sketch}{Proof sketch}
\newtheorem{corollary}{Corollary}
\newtheorem*{theorem*}{Theorem}
\newtheorem*{proposition*}{Proposition}
\newtheorem*{corollary*}{Corollary}
\newtheorem*{lemma*}{Lemma}
\newcommand{\tr}[1]{\textcolor{red}{#1}}
\newcommand{\tg}[1]{\textcolor[RGB]{34,139,34}{#1}}
\newcommand{\sensors}{s_*, s_{\mathcal{I}}, s_{\mathcal{J}}}
\newcommand{\recallthm}{\text{Recall}}
\newcommand{\pipelineaccuracy}{\mathcal{A}^{\textrm{\sys}}}
\newcommand{\mainaccuracy}{\mathcal{A}^{\textrm{main}}}
\newcommand{\decisionmargin}{distance to the optimal decision boundary}
\newcommand{\benign}{\mathcal{D}_{b}}
\newcommand{\adv}{\mathcal{D}_{a}}
\newcommand{\ci}{\mathcal{I}}
\newcommand{\cj}{\mathcal{J}}
\newcommand{\ck}{\mathcal{K}}
\newcommand{\mainsensor}{s_*}
\newcommand{\outputvariable}{o}
\newcommand{\imply}{\hspace{-.4em}\implies \hspace{-.4em}}
\newcommand{\dist}{\mathcal{D}}
\newcommand{\distset}{\small \mathcal{D}\in \hspace{-.1em} \{\mathcal{D}_{b}, \mathcal{D}_{a}\}}
\newcommand{\mainsensorweight}{w_*}
\newcommand{\prob}{\mathbb{P}}
\newcommand{\indicator}{\mathbbm{1}}
\newcommand{\deltaw}{\Delta_{\bf w}(y, s_*, s_{\mathcal{I}}, s_{\mathcal{J}})}
\def\eqref#1{equation~\ref{#1}}
\def\1{\bm{1}}
\DeclareMathAlphabet{\mathsfit}{\encodingdefault}{\sfdefault}{m}{sl}
\SetMathAlphabet{\mathsfit}{bold}{\encodingdefault}{\sfdefault}{bx}{n}
\newcommand{\E}{\mathbb{E}}
\DeclareMathOperator*{\argmax}{arg\,max}
\DeclareMathOperator*{\argmin}{arg\,min}
\newcommand\framework{Knowledge Enhanced Machine Learning Pipeline\xspace}
\newcommand\sys{KEMLP\xspace}
\newcommand\acc{weighted robust accuracy\xspace}
\newcommand\Acc{Weighted Robust Accuracy\xspace}
\newcommand{\equalContributionAndInternship}{\textsuperscript{*}Equal contribution
}
\icmltitlerunning{Knowledge Enhanced Machine Learning Pipeline against Diverse Adversarial Attacks}
\begin{document}

\twocolumn[
\icmltitle{Knowledge Enhanced Machine Learning Pipeline \\ against Diverse Adversarial Attacks}




\icmlsetsymbol{equal}{*}

\begin{icmlauthorlist}
\icmlauthor{Nezihe Merve G\"urel}{eth,equal}
\icmlauthor{Xiangyu Qi}{zju,equal}
\icmlauthor{Luka Rimanic}{eth}
\icmlauthor{Ce Zhang}{eth}
\icmlauthor{Bo Li}{illinois}
\end{icmlauthorlist}

\icmlaffiliation{eth}{ETH Zurich, Zurich, Switzerland}
\icmlaffiliation{illinois}{University of Illinois at Urbana-Champaign, Illinois, USA}
\icmlaffiliation{zju}{Zhejiang University, China (work done during remote internship at UIUC)}

\icmlcorrespondingauthor{Nezihe Merve G\"urel}{nezihe.guerel@inf.ethz.ch}
\icmlcorrespondingauthor{Xiangyu Qi}{unispac@zju.edu.cn} 
\icmlcorrespondingauthor{Ce Zhang}{ce.zhang@inf.ethz.ch}
\icmlcorrespondingauthor{Bo Li}{lbo@illinois.edu}

\icmlkeywords{Machine Learning, ICML}

\vskip 0.3in
]



\printAffiliationsAndNotice{\equalContributionAndInternship}

\begin{abstract}
Despite the great successes achieved by deep neural networks (DNNs), recent studies show that they are vulnerable against adversarial examples, which aim to mislead DNNs by adding small adversarial perturbations.
Several defenses have been proposed against such attacks, while many of them have been adaptively attacked.
In this work, we aim to enhance the ML robustness from a different perspective by leveraging \textit{domain knowledge}:
We propose a \framework (\sys) to integrate domain knowledge (i.e., logic relationships among different predictions) into a probabilistic graphical model via first-order logic rules.
In particular, we develop \sys by integrating a diverse set of weak auxiliary models based on their logical relationships to the main DNN model that performs the target task.
Theoretically, we provide convergence results and prove that, under mild conditions, the prediction of \sys is more robust than that of the main DNN model.
Empirically, we take road sign recognition as an example and leverage the relationships between road signs and their shapes and contents as domain knowledge. 
We show that compared with adversarial training and other baselines, \sys achieves higher robustness against physical attacks, $\mathcal{L}_p$ bounded attacks, unforeseen attacks, and natural corruptions under both whitebox and blackbox settings, while still maintaining high clean accuracy.
\end{abstract}

\section{Introduction}
\label{sec:introduction}
Recent studies show that machine learning (ML) models are vulnerable to different types of adversarial examples, which are adversarially manipulated inputs aiming to mislead ML models to make arbitrarily incorrect predictions~\cite{szegedy2013intriguing,goodfellow2014explaining,bhattad2019unrestricted,eykholt2018robust}. Different defense strategies have been proposed against such attacks, including adversarial training~\cite{shafahi2019adversarial,madry2017towards}, input processing~\cite{ross2018improving}, and approaches with certified robustness against $\mathcal{L}_p$ bounded attacks~\cite{cohen2019certified,yang2020randomized}.
However, these defenses have either been adaptively attacked again~\cite{carlini2017adversarial,athalye2018obfuscated} or can only certify the robustness within a small $\ell_p$ perturbation radius.
In addition, when models are trained to be robust against one type of attack, their robustness is typically not preserved against other attacks~\citep{schott2018towards,kang2019testing}. 
Thus, despite the rapid
recent progress on robust learning, 
it is still {challenging} to provide 
robust ML models against a diverse set of 
adversarial attacks in practice.


In this paper,  
we take a different perspective towards training robust ML models against diverse adversarial attacks by integrating \textit{domain knowledge} during prediction, given the observation that human with knowledge is quite resilient against these attacks. 
We will first take stop sign recognition as a simple example to illustrate the potential role of knowledge in ML prediction.
In this example, the \textbf{main task} is to predict
whether a stop sign appears in the input image. Training a DNN model for this task is known to be vulnerable 
against a range of adversarial attacks~\cite{eykholt2018robust,xiao2018characterizing}.
However, upon such a DNN model, if we could (1)
build a detector for a different \textbf{auxiliary task},
e.g., detecting whether an octagon 
appears in the input by using other learning strategies such as traditional computer vision 
techniques, and (2) integrate the \textbf{domain knowledge} such that \textit{``A stop sign should be of 
an octagon shape''}, it is possible that additional information could enable the ML system to detect or defend against attacks, which lead to conflicts between the DNN prediction and domain knowledge.
For instance, if a speed limit sign with \textit{rectangle} shape is misrecognized as a stop sign, the ML system would identify this conflict and try to correct the prediction. 

Inspired by this 
intuition, we aim to understand how to
\textit{enhance the robustness of ML models
via domain knowledge integration}. Despite the natural 
intuition in the previous simple example,
providing a technically rigorous treatment 
to this problem is far from trivial, yielding the following questions: 
How should we integrate domain knowledge
in a principled way? When will 
integrating domain knowledge help with robustness
and will there be a tradeoff between robustness and clean accuracy?
Can integration of domain knowledge 
genuinely bring additional robustness benefits against practical attacks
when compared with state-of-the-art defenses?

\begin{figure}
\centering
\includegraphics[width=0.5\textwidth]{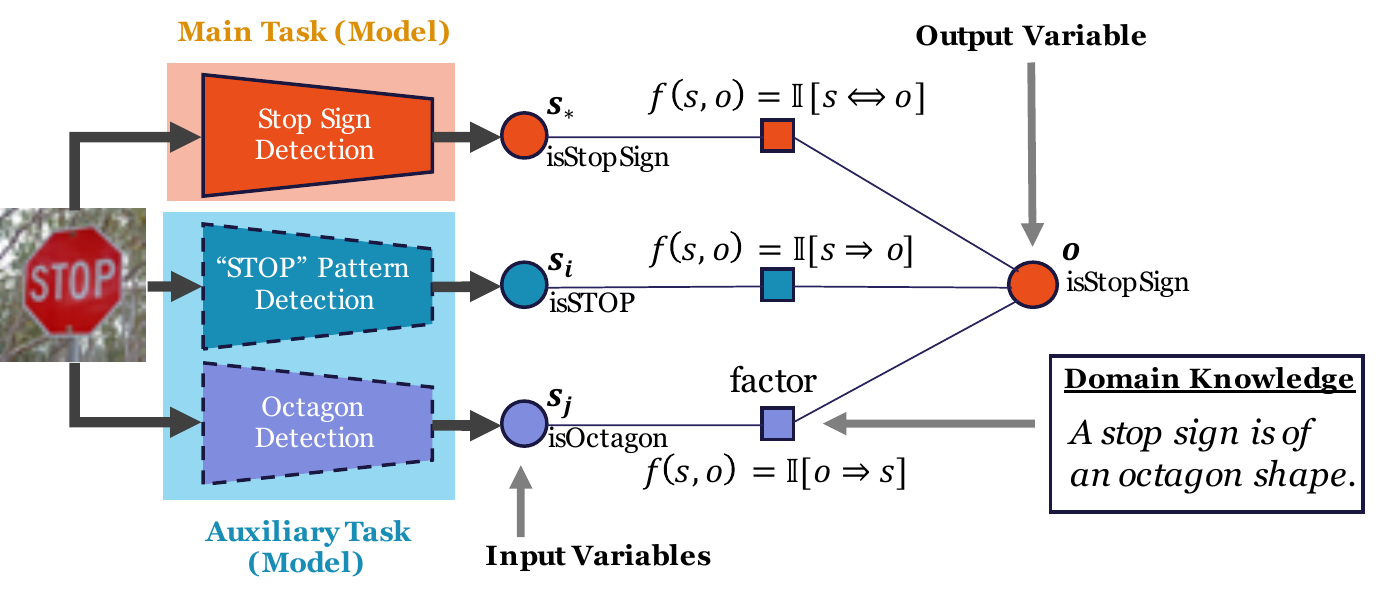}
\vspace{-2.5em}
\caption{An overview of the \sys framework. \sys 
constructs a factor graph by modeling 
the output of ML models as random input variables,
and the \sys prediction as a random output variable.
It integrates domain knowledge via factors connecting different random variables.}
\label{fig:kemlpoverview}
\vspace{-1.5em}
\end{figure}

In this work, we propose \sys, a
framework that facilitates the integration of
\textit{domain knowledge} in order to improve the robustness
of ML models. Figure~\ref{fig:kemlpoverview} illustrates
the \sys framework. In \sys, the 
outputs of different ML models are modeled as 
{random input variables}, whereas
the output of \sys is modeled as another 
variable. To integrate domain knowledge,
\sys introduces corresponding factors connecting 
these random variables. For example,
as illustrated in Figure~\ref{fig:kemlpoverview},
the knowledge rule ``\textit{A stop sign is 
of an octagon shape}'' introduces a factor
between the input variable (i.e., the output of the {octagon detector}) and the output 
variable (i.e., output of the {stop sign detector}) with a factor function that
\textit{the former implies the latter}. 
To make predictions, \sys runs statistical 
inference over the factor graph constructed
by integrating all such domain knowledge expressed as first-order logic rules, and 
output the marginal probability of 
the output variable.

Based on \sys, our main goal is to 
understand two fundamental questions based on \sys : (1) \textit{What type of knowledge is needed to improve the robustness of
the joint inference results from KEMLP, and can we prove it?} (2) \textit{Can we show that knowledge integration in
the KEMLP framework can provide significant robustness gain 
over powerful state-of-the-art models?}

We conduct theoretical analysis to understand the
first question, focusing on two specific types of knowledge rules:\\
(1) \textit{permissive knowledge}
of the form ``$B\implies A$'', and (2) \textit{preventive knowledge} of the form ``$A \implies B$'', 
where $A$
represents the main task, $B$  
an auxiliary task and $\implies$ denotes
logical implication. 
We focus on the \textit{\acc}, which is
a weighted average of accuracies on benign and adversarial examples, respectively, and we
derive sufficient conditions under which 
KEMLP outperforms the main task model alone. 
Under mild conditions, we show
that integrating multiple weak auxiliary  models, 
both in their robustness and quality,
together with the
permissive and preventive
rules, the \acc of \sys can be guaranteed to improve over the single main task model. 
To our best knowledge, this is the first
analysis of proposed form, focusing on 
the intersection of knowledge integration,
joint inference, and robustness. 

We then conduct extensive empirical studies to understand the second question. We focus on the road sign classification task and consider the state-of-the-art adversarial training models based on both the $\mathcal{L}_p$ bounded perturbation and occlusion perturbations~\cite{wu2019defending} as our baselines as well as the main task model.
We show that by training weak auxiliary models for recognizing the shapes and contents of road signs, together with the corresponding knowledge rules as illustrated in Figure~\ref{fig:kemlpoverview}, \sys achieves significant improvements on their robustness compared with baseline main task models against a \textit{diverse} set of adversarial attacks while maintaining similar or even higher clean accuracy, given its improvement on the tradeoff between clean accuracy and robustness.
In particular, we consider existing physical attacks~\cite{eykholt2018robust}, $\mathcal{L}_p$ bounded attacks~\cite{madry2017towards}, unforeseen attacks~\cite{kang2019testing}, and common corruptions~\cite{hendrycks2019benchmarking}, under both whitebox and blackbox settings. To our best knowledge, \sys is the first ML model robust to diverse attacks in practice with high clean accuracy. Our code is publicly available for reputability~\footnote{\url{https://github.com/AI-secure/Knowledge-Enhanced-Machine-Learning-Pipeline}}.

\underline{\bf Technical Contributions.} In this paper, we take the \textit{first} step
towards integrating \textit{domain knowledge} with ML to improve its robustness against different attacks.
We make contributions on both theoretical
and empirical fronts.   
\begin{itemize}[leftmargin=*,itemsep=-0mm]
\item We propose \sys, which integrates a main task ML model with a set of weak auxiliary task models, together with different knowledge rules connecting them. 
\item Theoretically, we provide the robustness guarantees for \sys and prove that under mild conditions, the prediction of \sys is more robust than that of a single main task model.
\item Empirically, we develop \sys based on different main task models, and evaluate them against a diverse set of attacks, including physical attacks, $\mathcal{L}_p$ bounded attacks, unforeseen attacks, and common corruptions. 
We show that the robustness of \sys outperforms all baselines by a wide margin,
with comparable and often higher  clean accuracy.
\end{itemize}



\section{Related Work}
\label{sec:related work}
In the following, we review several bodies of literature that are relevant to the objective of our paper.

\textbf{Adversarial examples} are carefully crafted inputs aiming to mislead well-trained ML models~\citep{goodfellow2014explaining,szegedy2013intriguing}.
A variety of approaches to generate such adversarial examples have also been proposed based on different perturbation measurement metrics, including  $\mathcal{L}_p$ bounded, unrestricted, and physical attacks~\citep{wong2019wasserstein,bhattad2019unrestricted,xiao2018generating,xiao2018spatially,eykholt2018robust}.

\textbf{Defense methods} against such attacks have been proposed. Empirically, \textit{adversarial training}~\citep{madry2017towards} has shown to be effective, together with feature quantization~\cite{xu2017feature} and reconstruction approaches~\citep{samangouei2018defense}. 
Certified robustness has also been studied by propagating the interval bound of a NN~\citep{gowal2018effectiveness}, or randomized smoothing of a given model~\citep{cohen2019certified}. Several approaches have further improved it: by
choosing different smoothing distributions for different $L_p$ norms~\citep{dvijotham2019framework,zhang2020black,yang2020randomized}, or training more robust smoothed classifiers via data augmentation~\citep{cohen2019certified}, unlabeled data~\citep{carmon2019unlabeled}, adversarial training~\citep{salman2019provably}, and regularization~\citep{li2019certified,zhai2019macer}.
While most prior defenses focus on leveraging statistical properties of an ML model to improve its robustness, they can only be robust towards a specific type of attack, such as $\ell_p$ bounded attacks.
This paper aims to explore how to utilize knowledge inference information to improve the robustness of a logically connected ML pipeline against a diverse set of attacks.

\textbf{Joint inference} has been studied to take multiple predictions made by different models, together with the relations among them, to make a final prediction~\citep{xu2020joint,deng2014large,poon2007joint,mccallum2009joint,chen2014joint,chakrabarti2014joint,biba2011protein}.
These approaches usually use different inference models, such as factor graphs~\citep{wainwright2008graphical}, Markov logic networks~\citep{richardson2006markov} and Bayesian networks~\citep{neuberg2003causality}, as a way to characterize their relationships. 
The programmatic weak supervision approaches~\cite{ratner2016data, ratner2017snorkel} also perform joint inference by employing labeling functions and using generative modeling techniques, which aims to create noisy training data. 
In this paper, we take a different perspective on this problem --- we explore the potential of using joint inference with the objective of integrating domain knowledge and to eventually improving the ML robustness.
As we will see, by integrating domain knowledge, it is possible to improve the learning robustness by a wide margin.

\section{KEMLP: Knowledge Enhanced Machine Learning Pipeline}
\label{sec:knowledge enhanced ml pipeline}
We first present the proposed framework \sys, which aims to improve the robustness of an ML model by integrating a diverse set of domain knowledge. In this section, we formally
define the \sys framework.

%

We consider a classification problem under a supervised learning setting, defined on a feature space $\mathcal{X}$ and a finite label space $\mathcal{Y}$. We refer to $x\in \mathcal{X}$ as an input and $y\in \mathcal{Y}$ as the target variable. 
An input $x$ can be a benign example or an
adversarial example. To model this,
we use $z \in \{0, 1\}$, a latent variable that is 
not exposed to \sys.
That is, $x$ is an adversarial example with $(x, y)\sim \adv$ whenever $z=1$, and $(x, y)\sim \benign$ otherwise, where $\adv$ and $\benign$ represent the adversarial and benign data distributions.
We let $\pi_{\adv}=\prob(z=1)$ and $\pi_{\benign}=\prob(z=0)$, implying $\pi_{\adv}+\pi_{\benign}=1$. 
For convenience, we denote $\prob_{\adv}(x, y)=\prob(x, y|z=1)$ and $\prob_{\benign}(x, y)=\prob(x, y|z=0)$. In the following, to ease the exposition, we slightly abuse the notation and use probability densities for discrete distributions.


Given an input $x$ whose corresponding
$z$ is unknown (benign or adversarial), \sys aims to predict the target variable $y$ by employing a set of \textit{models}.
These predictive models are constructed, say, using ML or some other traditional rule-based methods (e.g., edge detector). 
For simplicity, we describe 
the \sys framework as a binary classification task, in which case $\mathcal{Y}=\{0, 1\}$, noting that the multi-class scenario is a simple extension of it. We introduce the \sys framework as follows.


\paragraph*{Models} Models are a collection of predictive ML models, each of which takes as input 
$x$ and outputs some predictions. In \sys, we distinguish three different type of models.
\begin{itemize}[leftmargin=*, wide = 0pt]
\item \textit{Main task model}: We call the (untrusted) ML  model whose robustness users want to enhance as the \textit{main task model}, denoting its predictions by $\mainsensor \in \mathcal{Y}$. 
\item \textit{Permissive models}:
Let $s_\ci =\{s_i \colon i\in \ci\}$
be a set of $m$ permissive
models, each of which corresponds
to the prediction of 
one ML model.
Conceptually, permissive models are usually designed for specific events which are \textit{sufficient} for inferring $y=1$: $s_i \implies y$.
\item \textit{Preventative models}: Similarly,
we have $n$ preventative models:
$s_\cj=\{s_j\colon j\in \cj\}$,
each of which 
corresponds
to the prediction of 
one ML model.
Conceptually, preventative models
capture the events that are \textit{necessary} for the event {$y=1$}: $y \implies s_j$.
\end{itemize}

\paragraph*{Knowledge Integration}
Given 
a data example $(x, y)\sim{\benign}$ or $(x, y)\sim{\adv}$, $y$ is unknown to 
\sys. 
We create 
a factor graph to embed the domain knowledge as follows. 
The outputs of each  model over 
$x$ 
become \textit{input variables}:
$s_*, 
s_{\mathcal{I}}=\{s_i: i \in \mathcal{I}\}, 
s_{\mathcal{J}}=\{s_j: j \in \mathcal{J}\}$.
\sys also has an output variable  $\outputvariable \in \mathcal{Y}$, which corresponds to 
its prediction.
Different models introduce
different types of factors 
connecting these variables:
\begin{itemize}[leftmargin=*, wide = 0pt]
\item Main model:
\sys introduces a factor between 
the main model $\mainsensor$
and the output variable $\outputvariable$ with
factor function $f_{*}(\outputvariable, \mainsensor) = \indicator\{\outputvariable=\mainsensor\}$;
\item Permissive model:
\sys introduces a factor between
each permissive model $s_i$ and
the output variable $\outputvariable$ with
factor function 
$f_{i}(\outputvariable, s_i) = \indicator\{s_i \imply \outputvariable\}$.
\item Preventative model:
\sys introduces a factor between
each preventative model $s_j$ and 
the output variable $\outputvariable$
with factor function $f_{j}(\outputvariable, s_j) = \indicator\{\outputvariable \imply s_j\}$.
\end{itemize}

\paragraph*{Learning with \sys}
To make a prediction, \sys 
outputs the \textit{probability}
of the output variable $o$.
\sys assigns a weight for each
model and constructs the 
following statistical model:
\begin{align*}
& \mathbb{P}[o| s_*,
s_{\mathcal{I}}, s_{\mathcal{J}}, w_*, w_{\mathcal{I}}, w_{\mathcal{J}}, b_o]\ \ \propto \ \ \\
&\hspace{20mm}\exp
\{
b_{o} + w_* f_*(o, s_*) \} \times \\
&\hspace{5mm}\exp\big\{
\sum_{i \in \mathcal{I}}
w_i f_i(o, s_i)\big\} \times \exp\big\{
\sum_{j \in \mathcal{J}}
w_j f_j(o, s_j)
\big\}
\end{align*}
where 
$w_*, w_i, w_j$ are
the corresponding weights 
for models $s_*, s_i, s_j$, $w_{\mathcal{I}}=\{w_i: i \in \mathcal{I}\}, 
w_{\mathcal{J}}=\{w_j: j \in \mathcal{J}\}$ and $b_o$ is some bias parameter that depends on $o$.
For the simplicity of exposition,
we use an equivalent notation 
by putting all the weights and  outputs of factor functions
into vectors using an ordering of models. More precisely, we define

{\vspace{-2em}\small
\begin{align*}
 \mathbf{w}&=[1;w_*; ( w_i)_{i\in\ci}; (w_j)_{j\in\cj}],\\
 \mathbf{f}_{o}(s_*,
s_{\mathcal{I}}, s_{\mathcal{J}})&=[b_o; f_*(o, s_*); (f_i(o, s_i))_{i\in\ci}; (f_j(o, s_j))_{j\in\cj}],
\end{align*}
}\normalsize
for $o\in\mathcal{Y}$.
All concatenated vectors from above are in $\mathbb{R}^{m+n+2}$. 
Given this, 
an equivalent form of \sys's 
statistical model is
\begin{align}\label{eqn: class likelihoods}
\mathbb{P}[o | s_*,
s_{\mathcal{I}}, s_{\mathcal{J}}, \mathbf{w}] 
=\frac{1}{Z_{\bf w}} 
\exp(\langle{\bf w}, \mathbf{f}_{o}(s_*,
s_{\mathcal{I}}, s_{\mathcal{J}})\rangle)
\end{align}
where $Z_{\bf w}$ is the normalization constant over $o\in \mathcal{Y}$. With some abuse of notation, $\mathbf{w}$ is meant to govern all parameters including weights and biases whenever used with probabilities.

\paragraph*{Weight Learning}
During the training phase of \sys, we choose parameters {\bf w} by performing standard maximum likelihood estimation over a training dataset. Given a particular input instance $x^{\scriptscriptstyle(n)}$, respective model predictions $s_*^{\scriptscriptstyle(n)},
s^{\scriptscriptstyle(n)}_{\mathcal{I}}, s^{\scriptscriptstyle(n)}_{\mathcal{J}}$, and the ground truth label $y^{\scriptscriptstyle(n)}$, we minimize the negative log-likelihood function in view of
\small
\[
 \hat{{\bf w}}=\argmin_{{\bf w}}\Big\{- \sum_n \log \Big( \mathbb{P}[o^{\scriptscriptstyle(n)}=y^{\scriptscriptstyle(n)} |s_*^{\scriptscriptstyle(n)},
s^{\scriptscriptstyle(n)}_{\mathcal{I}}, s^{\scriptscriptstyle(n)}_{\mathcal{J}}, \mathbf{w}]\Big) \Big\}.
\]
\normalsize

\paragraph*{Inference}
During the inference phase of \sys, given an input example $\hat x$, we predict $\hat y$ that has the largest probability given the respective model predictions $\hat s_*,
\hat s_{\mathcal{I}}, \hat  s_{\mathcal{J}}$, namely, $\hat y=\argmax_{\tilde y\in\mathcal{Y}} \mathbb{P}[o =\tilde y| \hat  s_*, \hat s_{\mathcal{I}},\hat  s_{\mathcal{J}}, \hat{\mathbf{w}}]$.

\section{Theoretical Analysis}
\textit{How does knowledge integration impact the robustness of \sys?}
In this section, we provide
theoretical analysis about the impact of
domain knowledge integration on 
the robustness of \sys.
We hope to (1) depict the regime under which knowledge
integration can help with robustness; (2) explain how a collection of ``weak''
(in terms of prediction accuracy) but 
``robust'' auxiliary models, on 
tasks different from the main one,
can be used to boost overall robustness.
Here we state the main results, whereas we refer interested readers to Appendix~\ref{sec: proofs} where we provide all relevant details.

\paragraph*{Weighted Robust Accuracy}

Previous theoretical analysis on 
ML robustness~\cite{javanmard2020precise,xu2009robustness,raghunathan2020understanding} have identified 
two natural dimensions of model 
quality: \textit{clean accuracy}
and \textit{robust accuracy},
which are the accuracy of a given ML
model on inputs $x$ drawn from 
either the benign distribution 
$\mathcal{D}_b$ or 
 adversarial distribution $\mathcal{D}_a$.
In this paper, to balance their tradeoff, we use their weighted
average as
our main metric of interest. That is, 
given a classifier $h: \mathcal{X} \rightarrow \mathcal{Y}$ we define its \emph{\Acc} as
\begin{align*}
\mathcal{A}_h = \pi_{D_a} 
   \mathbb{P}_{\mathcal{D}_a}[h(x)=y]
   +
   \pi_{D_b} 
   \mathbb{P}_{\mathcal{D}_b}[h(x)=y].
\end{align*}
We use 
$\mathcal{A}^{\text{\sys}}$ and 
$\mathcal{A}^{\text{main}}$ to denote 
the weighted robust accuracies of 
\sys and main task model, respectively.

\subsection{$\mathcal{A}^{\text{\sys}}$: \Acc of 
\sys} The goal 
of our analysis is to identify 
the regime under which 
$\mathcal{A}^{\text{\sys}} > \mathcal{A}^{\text{main}}$ is guaranteed.
The main analysis
to achieve this hinges on 
deriving the weighted
robust accuracy $\mathcal{A}^{\text{\sys}}$
for \sys. We first 
describe the modeling assumptions 
of our analysis, and then describe two 
key characteristics of models, culminating in 
a lower bound  of $\mathcal{A}^{\text{\sys}}$.

\paragraph*{Modeling Assumptions} We assume that for a fixed $z$, that is, for a fixed $\distset$, the models make independent errors given the target variable. Thus, for all $\distset$, the class conditional distribution can be decomposed as
\[
\prob_{\dist}[s_*,
s_{\mathcal{I}}, s_{\mathcal{J}}|y]=\prob_{\dist}[s_*|y]\prod_{i\in\mathcal{I}}\prob_{\dist}[s_{i}|y]\prod_{j\in\mathcal{J}}\prob_{\dist}[s_{j}|y].
\]

We also assume for simplicity that the main task model makes symmetric errors given the class of target variable, that is, ${\prob_{\dist}[s_*\neq y|y]}$ is fixed with respect to $y$ for all $\distset$. 

\paragraph*{Characterizing Models: Truth Rate ($\alpha$) and False Rate ($\epsilon$)}
Each auxiliary 
model $k \in \mathcal{I} \cup \mathcal{J}$ is characterized 
by two values, their
truth rate ($\alpha$) and
false rate 
($\epsilon$)
over benign and adversarial 
distributions. These
values measure the 
\textit{consistency} 
of the model with 
the ground truth:
\begin{align*}
& \text{Permissive Models:} \\
& \alpha_{i,\dist}:=\prob_{\dist}[s_i=y|y=1], \hspace{.6em} \epsilon_{i,\dist}:=\prob_{\dist}[s_i\neq y|y=0]
\end{align*}
\vspace{-2em}
\begin{align*}
& \text{Preventative Models:} \\
& \alpha_{j,\dist}:=\prob_{\dist}[s_j=y|y=0], \hspace{.6em} \epsilon_{j,\dist}:=\prob_{\dist}[s_j\neq y|y=1]
\end{align*}

Note that, given the 
asymmetric nature 
of these auxiliary models,
we do \textit{not}
necessarily have 
$\epsilon_{k, \mathcal{D}}
= 1 - \alpha_{k, \mathcal{D}}$.
In addition, for a high quality permissive 
model ($k\in\ci)$, or a high quality preventative 
model ($k\in\cj$)
for which the logic rules mostly 
hold, we expect $\alpha_{k,\dist}$ to be large and $\epsilon_{k,\dist}$ to be small. 

We define the truth rate of main model over data examples drawn from $\distset$ as $\alpha_{*,\dist}:=\prob_{\dist}(s_*=y)$, and its false rate as $\epsilon_{*,\dist}:=\prob_{\dist}(s_*\neq y)=1-\alpha_{*,\dist}$. 


These characteristics are of integral importance to weighted robust accuracy of \sys. To combine all the models together, we define upper and lower bounds to truth rates and false rates. For the main model, we have ${}_{\scriptscriptstyle \wedge}\alpha_{*}:=\min_{\dist}\alpha_{*,\dist}$ and ${}_{\scriptscriptstyle \vee}\alpha_{*}:=\max_{\dist}\alpha_{*,\dist}$. For the auxiliary models, on the other hand, for each model index $k \in \ci \cup \cj$, we have
\begin{align*}
   & {}_{\scriptscriptstyle \wedge}\alpha_{k}:=\min_{\dist}\alpha_{k,\dist}, \hspace{.8em} {}_{\scriptscriptstyle \wedge}\epsilon_{k}:=\min_{\dist}\epsilon_{k,\dist}\\
   & {}_{\scriptscriptstyle \vee}\alpha_{k}:=\max_{\dist}\alpha_{k,\dist}, \hspace{.8em} {}_{\scriptscriptstyle \vee}\epsilon_{k}:=\max_{\dist}\epsilon_{k,\dist}.
\end{align*}
Intuitively, the difference between 
${}_{\scriptscriptstyle \wedge}\alpha$
and 
${}_{\scriptscriptstyle \vee}\alpha$
(resp. ${}_{\scriptscriptstyle \wedge}\epsilon$
and 
${}_{\scriptscriptstyle \vee}\epsilon$) indicates 
the ``robustness'' of each individual model. If a model 
performs very similarly 
when it is given a benign and
an adversarial example,
we have that 
${}_{\scriptscriptstyle \wedge}\alpha$
should be similar to 
${}_{\scriptscriptstyle \vee}\alpha$
(resp. ${}_{\scriptscriptstyle \wedge}\epsilon$
to 
${}_{\scriptscriptstyle \vee}\epsilon$).

The truth and false rates of models directly influence the factor weights which govern the influence of models in the main task. In Appendix~\ref{subsec: generative model parameters} we prove that the optimal weight of an auxiliary model is bounded by $w_k\geq \log{}_{\scriptscriptstyle\wedge}\alpha_k(1-{}_{\scriptscriptstyle\vee}\epsilon_k)/(1-{}_{\scriptscriptstyle\wedge}\alpha_k){}_{\scriptscriptstyle\vee}\epsilon_k$, for all $k\in\mathcal{I}\cup\mathcal{J}$. That is, the lowest truth rate and highest false rate of an auxiliary model (resp. ${}_{\scriptscriptstyle\wedge}\alpha_k$ and ${}_{\scriptscriptstyle\vee}\epsilon_k$) are indicative of its influence in the main task. By taking partial derivatives, this lower bound can be shown to be increasing in ${}_{\scriptscriptstyle\wedge}\alpha_k$ and decreasing in ${}_{\scriptscriptstyle\vee}\epsilon_k$. That is, as the lowest truth rate of a model gets higher, \sys increases its influence in the weighted majority voting accordingly -- in the above nonlinear fashion. The lowest truth rate is often determined by the \textit{robust accuracy}. As a result,
 the more ``robust'' an auxiliary model is,
 the larger the influence on 
 \sys, which naturally contributes to its robustness.

\paragraph*{\Acc of \sys}
We now provide a lower bound 
on the weighted robust accuracy of
 \sys, which can
be written as
\begin{equation}\label{eqn:Apipeline}
\pipelineaccuracy=\mathbb{E}_{\dist\sim \{\dist_a,\dist_b\}}\mathbb{E}_{y\sim\mathcal{Y}}\big[\prob_{\dist}[o=y|y, \mathbf{w}]\big].
\end{equation} 

We first provide one key 
technical lemma 
followed by the general 
theorem.

We see that the key component 
in  $\pipelineaccuracy$ 
is $\prob_{\dist}[o=y|y, \mathbf{w}]$,
the conditional probability that a 
\sys pipeline outputs
the correct prediction.
Using knowledge aggregation rules $f_*, f_i$ and $f_j$, as well as~(\ref{eqn: class likelihoods}), for each $\distset$ we have
\begin{align*}
    &\prob_{\dist}[o=y|y, \mathbf{w}]=\prob_{\dist}\Big[\prob[o=y|s_*,s_{\ci},s_{\cj}, \mathbf{w}]>1/2\big| y\Big]\\
    &=\prob_{\dist}\big[\langle{\bf w}, \mathbf{f}_{y}(s_*,s_{\mathcal{I}}, s_{\mathcal{J}})- \mathbf{f}_{1-y}(s_*,s_{\mathcal{I}}, s_{\mathcal{J}})\rangle > 0 |y\big].
\end{align*}
To bound the above value, we need to 
characterize the concentration 
behavior of the random variable
\[
\Delta_{\bf w} (y,s_*,s_{\mathcal{I}}, s_{\mathcal{J}}):=\langle{\bf w}, \mathbf{f}_{y}(s_*,s_{\mathcal{I}}, s_{\mathcal{J}})-\mathbf{f}_{1-y}(s_*,s_{\mathcal{I}}, s_{\mathcal{J}})\rangle.
\]
That is, we need to bound its left tail
below zero. For this purpose,
we reason about its expectation,
leading to the following lemma.
\begin{lemma}\label{lemma: expected tail}
Let $\Delta_{\bf w}$ be a random variable defined above. Suppose that \sys uses optimal parameters {\bf w} such that $\prob[y|s_*,s_{\mathcal{I}}, s_{\mathcal{J}}]=\prob[o|s_*,s_{\mathcal{I}}, s_{\mathcal{J}}, \mathbf{w}]$. Let also $r_y$ denote the log-ratio of class imbalance $\log\frac{\prob[y=1]}{\prob[y=0]}$. For a fixed $y\in\mathcal{Y}$ and $\distset$, one has
\begin{align*}
& \mathbb{E}_{s_*,s_{\mathcal{I}}, s_{\mathcal{J}}}[\Delta_{\bf w}(y,s_*,s_{\mathcal{I}}, s_{\mathcal{J}})|y] \nonumber \\
&\geq  \mu_{d_{*,\dist}} \hspace{-0.2em}+\hspace{-0.2em} y \mu_{d_{\mathcal{I},\dist}} \hspace{-0.2em}+\hspace{-0.2em} (1-y) \mu_{d_{\mathcal{J},\dist}}\hspace{-0.2em}+\hspace{-0.2em}(2y-1)r_y:=\mu_{y,\dist},
\end{align*} 
where
\begin{equation*}
    \mu_{d_{*,\dist}}=\alpha_{*, \dist}\log\frac{{}_{\scriptscriptstyle\wedge}\alpha_{*}}{1-{}_{\scriptscriptstyle\wedge}\alpha_{*}}+(1-\alpha_{*, \dist})\log\frac{1-{}_{\scriptscriptstyle\vee}\alpha_{*}}{{}_{\scriptscriptstyle\vee}\alpha_{*}},
\end{equation*}
\small
\begin{equation*}
    \begin{split}
        \mu_{d_{\mathcal{I},\dist}}&=\sum_{i\in \mathcal{I}}\alpha_{i, \dist}\log\frac{{}_{\scriptscriptstyle\wedge}\alpha_{i}}{{}_{\scriptscriptstyle\vee}\epsilon_{i}}+(1-\alpha_{i, \dist})\log\frac{1-{}_{\scriptscriptstyle\vee}\alpha_{i}}{1-{}_{\scriptscriptstyle\wedge}\epsilon_{i}}\\
        &-\sum_{j\in \mathcal{J}}\epsilon_{j, \dist}\log\frac{{}_{\scriptscriptstyle\vee}\alpha_{j}}{{}_{\scriptscriptstyle\wedge}\epsilon_{j}}-(1-\epsilon_{j, \dist})\log\frac{1-{}_{\scriptscriptstyle\wedge}\alpha_{j}}{1-{}_{\scriptscriptstyle\vee}\epsilon_{j}},
    \end{split}
\end{equation*}
\normalsize
and
\small
\begin{equation*}
    \begin{split}
        \mu_{d_{\mathcal{J},\dist}}&=\sum_{j\in \mathcal{J}}\alpha_{j, \dist}\log\frac{{}_{\scriptscriptstyle\wedge}\alpha_{j}}{{}_{\scriptscriptstyle\vee}\epsilon_{j}}+(1-\alpha_{j, \dist})\log\frac{1-{}_{\scriptscriptstyle\vee}\alpha_{j}}{1-{}_{\scriptscriptstyle\wedge}\epsilon_{j}}\\
        &-\sum_{i\in \mathcal{I}}\epsilon_{i, \dist}\log\frac{{}_{\scriptscriptstyle\vee}\alpha_{i}}{{}_{\scriptscriptstyle\wedge}\epsilon_{i}}-(1-\epsilon_{i, \dist})\log\frac{1-{}_{\scriptscriptstyle\wedge}\alpha_{i}}{1-{}_{\scriptscriptstyle\vee}\epsilon_{i}}.
    \end{split}
\end{equation*}
\normalsize
\end{lemma}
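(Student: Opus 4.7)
The plan is to exploit the exponential-family form of~(\ref{eqn: class likelihoods}), under which $\Delta_\mathbf{w}$ is precisely the (learned) conditional log-odds of $o$ given $(s_*,s_{\mathcal{I}},s_{\mathcal{J}})$. The optimality hypothesis identifies this with the true log-odds of $y$, and Bayes' rule together with class-conditional independence decomposes the result into per-model contributions plus a class-prior term. After taking conditional expectations, a worst-case substitution using the extremal rates ${}_{\scriptscriptstyle\wedge}\alpha_k,{}_{\scriptscriptstyle\vee}\alpha_k,{}_{\scriptscriptstyle\wedge}\epsilon_k,{}_{\scriptscriptstyle\vee}\epsilon_k$ yields the stated bound.

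First, I would use~(\ref{eqn: class likelihoods}) to write $\Delta_\mathbf{w}(y,s_*,s_{\mathcal{I}},s_{\mathcal{J}}) = \log\bigl(\mathbb{P}[o=y\mid s_*,s_{\mathcal{I}},s_{\mathcal{J}},\mathbf{w}]/\mathbb{P}[o=1-y\mid s_*,s_{\mathcal{I}},s_{\mathcal{J}},\mathbf{w}]\bigr)$, since the normalization $Z_\mathbf{w}$ cancels. The optimality assumption equates this with $\log\bigl(\mathbb{P}[y\mid s_*,s_{\mathcal{I}},s_{\mathcal{J}}]/\mathbb{P}[1-y\mid s_*,s_{\mathcal{I}},s_{\mathcal{J}}]\bigr)$. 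Applying Bayes' rule separates the latter into a likelihood ratio plus $\log(\mathbb{P}[y]/\mathbb{P}[1-y])=(2y-1)r_y$, accounting for the class-prior term of the conclusion. The class-conditional independence assumption then factors the likelihood ratio into a sum of log-ratios, one for the main model and one for each auxiliary model $k\in\mathcal{I}\cup\mathcal{J}$.

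Next, I would take $\mathbb{E}_\dist[\,\cdot\mid y]$ of each summand. For the main model, the symmetric error assumption collapses the contribution to $w_*(2\alpha_{*,\dist}-1)$; rewriting this as $\alpha_{*,\dist}\cdot w_* + (1-\alpha_{*,\dist})\cdot(-w_*)$ sets the stage for the worst-case bound. For each permissive model ($k\in\mathcal{I}$), the contribution takes the form $\alpha_{k,\dist}\log(\cdot)+(1-\alpha_{k,\dist})\log(\cdot)$ when $y=1$ and $-\epsilon_{k,\dist}\log(\cdot)-(1-\epsilon_{k,\dist})\log(\cdot)$ when $y=0$; preventative models ($k\in\mathcal{J}$) mirror this with the roles of $y=1$ and $y=0$ swapped. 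The asymmetry of the logical factors $f_i,f_j$ is precisely what makes the $\mathcal{I}$-indexed sum attach to $y$ and the $\mathcal{J}$-indexed sum to $1-y$ in the final expression.

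Finally, since $\mathbf{w}$ encodes mixture-based log-odds while the expectation is under a single $\dist$, I would substitute the mixture class-conditionals by their ${}_{\scriptscriptstyle\wedge}$ or ${}_{\scriptscriptstyle\vee}$ surrogates, choosing in each case the one that decreases the log-ratio so as to preserve the $\geq$ direction. Collecting the surviving pieces reproduces $\mu_{d_{*,\dist}}$, $\mu_{d_{\mathcal{I},\dist}}$, and $\mu_{d_{\mathcal{J},\dist}}$ verbatim, and adding $(2y-1)r_y$ closes the inequality. The main obstacle is this substitution step: because $\mathbf{w}$ is fit on the mixture $\pi_{\adv}\adv+\pi_{\benign}\benign$ rather than on $\dist$ directly, one has to carefully track the sign of each log-ratio coefficient. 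The appendix weight bound $w_k\ge\log\tfrac{{}_{\scriptscriptstyle\wedge}\alpha_k(1-{}_{\scriptscriptstyle\vee}\epsilon_k)}{(1-{}_{\scriptscriptstyle\wedge}\alpha_k){}_{\scriptscriptstyle\vee}\epsilon_k}$ cited just before the lemma ensures that in the ``good model'' regime the signs align so that the worst-case substitution lowers rather than raises each term.
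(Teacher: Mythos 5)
Your route is the same one the paper takes: read $\Delta_{\bf w}$ off the exponential-family form as the conditional log-odds of $o$, use the optimality hypothesis to equate it with the true log-odds of $y$, peel off the prior term $(2y-1)r_y$ via Bayes, attribute the remaining likelihood ratio to individual models, take expectations under a fixed $\dist$, and replace each parameter by its extremal surrogate to preserve the inequality. The resulting decomposition $\mu_{d_{*,\dist}}+y\,\mu_{d_{\ci,\dist}}+(1-y)\,\mu_{d_{\cj,\dist}}+(2y-1)r_y$ and the worst-case substitution coincide with the paper's Steps 2 and 3.

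The one step that would fail as literally stated is ``the class-conditional independence assumption then factors the likelihood ratio into a sum of log-ratios.'' The modeling assumption gives independence of the models given $y$ only for each fixed $\dist\in\{\mathcal{D}_b,\mathcal{D}_a\}$; the marginal law $\prob[\,\cdot\,|y]$ is a mixture of product distributions and is not itself a product, so the mixture likelihood ratio does not split into marginal log-ratios, and the exact identity with the optimal ${\bf w}$ --- which is what licenses the entire argument --- cannot be asserted term by term in that form. The paper circumvents this with an exact chain-rule (telescoping) decomposition, $\log\tfrac{\prob[y,\,\sensors]}{\prob[1-y,\,\sensors]}=\log\tfrac{\prob[y,s_*]}{\prob[1-y,s_*]}+\sum_{i\in\ci}\log\tfrac{\prob[s_i|y,s_{I_i}]}{\prob[s_i|1-y,s_{I_i}]}+\sum_{j\in\cj}\log\tfrac{\prob[s_j|y,s_{\ci},s_{J_j}]}{\prob[s_j|1-y,s_{\ci},s_{J_j}]}$, and then uses the per-$\dist$ independence to show each conditional factor is a posterior-weighted convex combination of the per-$\dist$ rates, hence sandwiched between ${}_{\scriptscriptstyle\wedge}\alpha_k,{}_{\scriptscriptstyle\vee}\alpha_k$ and ${}_{\scriptscriptstyle\wedge}\epsilon_k,{}_{\scriptscriptstyle\vee}\epsilon_k$. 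That two-sided sandwich is also what makes your final substitution go through: since the expectation attaches nonnegative coefficients ($\alpha_{k,\dist}$, $1-\alpha_{k,\dist}$, $\epsilon_{k,\dist}$, $1-\epsilon_{k,\dist}$) to each increment $w_k s_k+b_k$ or $-(w_k(1-s_k)+b_k)$, you only need the lower end of the interval for positively-signed increments and the upper end for negatively-signed ones; the cited lower bound on $w_k$ is not what is needed here. With the chain-rule repair in place of the naive factorization, your argument reproduces the paper's proof.
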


\emph{Proof Sketch.} This lemma can be derived by first decomposing $\Delta_{\bf w}$ into parts that are relevant for $s_*,s_{\ci},s_{\cj}$, namely there exist $d_{*,\dist}, d_{\mathcal{I},\dist}, d_{\mathcal{J},\dist}$ such that
\small
\begin{align*}
    \Delta_{\bf w}(y,s_*,s_{\mathcal{I}}, s_{\mathcal{J}})\hspace{-0.2em}=\hspace{-0.2em} d_{*,\dist}\hspace{-0.1em}+\hspace{-0.1em}y d_{\mathcal{I},\dist}\hspace{-0.1em}+\hspace{-0.1em}(1-y) d_{\mathcal{J},\dist} \hspace{-0.2em}+\hspace{-0.2em}(2y-1)r_y.
\end{align*}
\normalsize
Then we prove that 
$\mu_{*,\dist}\leq\mathbb{E}[d_{*,\dist}]$ for the main model, and 
$\mu_{d_{\mathcal{K},\dist}}\leq\mathbb{E}[d_{\mathcal{K},\dist}]$ for $\mathcal{K}\in\{\ci, \cj\}$, the permissive and preventative models. The full proof is presented in Appendix~\ref{subsec: proof of lemma}.

\paragraph*{Discussion} 
The above lemma illustrates the 
relationship between the models
and $\pipelineaccuracy$.
Intuitively, the larger
$\mu_{y,\dist}$ is, the further away the expectation 
of 
$\Delta_{\bf w}(y,s_*,s_{\mathcal{I}}, s_{\mathcal{J}})$ is from 0, and thus, the larger 
the probability that
$\Delta_{\bf w}(y,s_*,s_{\mathcal{I}}, s_{\mathcal{J}})>0$.
We see that $\mu_{y,\dist}$ consists of three
terms: $\mu_{d_{*,\dist}}$, $ \mu_{d_{\mathcal{I},\dist}}$, $ \mu_{d_{\mathcal{J},\dist}}$, measuring the contributions 
from the main model for all $y$, 
permissive models and preventative models for $y=1$ and $y=0$, respectively. More specifically, $\mu_{y,\dist}$ is increasing in terms of a weighted sum of $\alpha_{i}$, and decreasing in terms of a weighted sum of $\epsilon_{j}$. 
When $s_i \implies y$ holds
(permissive models),
it implies a large $\alpha_{i}$
for $y=1$,
whereas when $y \implies s_j$
holds (preventative model)
it implies a small $\epsilon_j$
for $y=1$. Thus, this lemma 
connects the property of auxiliary 
models to the weighted robust accuracy 
of \sys.
\subsection{Convergence of $\pipelineaccuracy$}
Now we are ready to present our convergence result.
\begin{theorem}[Convergence of $\pipelineaccuracy$]\label{thm: main theorem}
For $y\in \mathcal{Y}$ and $\distset$, let $\mu_{y,\dist}$ be 
defined as in Lemma~\ref{lemma: expected tail}. Suppose that the \textit{modeling assumption} holds, and suppose that $\mu_{d_{\mathcal{K}, \dist}}>0$, for all $\mathcal{K}\in\{\mathcal{I}, \mathcal{J}\}$ and $\distset$. Then 
\begin{equation}\label{eqn: generic lower bound}
    \begin{split}
       \pipelineaccuracy \geq 1-\mathbb{E}_{\mu_{y, \dist}}[\exp\big({-{2}{\mu_{y, \dist}^2}/{v^2}}\big)] ,
    \end{split}
\end{equation}
where $v^2$ is the variance upper bound to $\prob[o=y|y, \mathbf{w}]$ with
\small
\begin{equation*}
\begin{split}
v^2\hspace{-0.2em}= 4\Big(\log\frac{{}_{\scriptscriptstyle \vee}\alpha_{*}}{1-{}_{\scriptscriptstyle \wedge}\alpha_{*}}\Big)^2\hspace{-0.5em}+\hspace{-0.5em}\sum_{k\in \mathcal{I}\cup\mathcal{J}} \Big(\log\frac{{}_{\scriptscriptstyle \vee}\alpha_{k}(1-{}_{\scriptscriptstyle \wedge}\epsilon_{k})}{{}_{\scriptscriptstyle \wedge}\epsilon_{k}(1-{}_{\scriptscriptstyle \vee}\alpha_{k})} \Big)^2\hspace{-0.5em}.
\end{split}
\end{equation*}
\normalsize
\end{theorem}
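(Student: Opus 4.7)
The plan is to express $\pipelineaccuracy$ as $1$ minus an expected left-tail probability of $\deltaw$ and to control that tail by a one-sided Hoeffding argument, using Lemma~\ref{lemma: expected tail} for the mean. The derivation preceding Lemma~\ref{lemma: expected tail} already gave $\prob_{\dist}[o=y\mid y,\mathbf{w}] = \prob_{\dist}[\deltaw>0\mid y]$, so~(\ref{eqn:Apipeline}) becomes $\pipelineaccuracy = 1 - \mathbb{E}_{\dist,y}\big[\prob_{\dist}[\deltaw\leq 0\mid y]\big]$, and it suffices to prove the pointwise tail bound $\prob_{\dist}[\deltaw\leq 0\mid y]\leq \exp(-2\mu_{y,\dist}^2/v^2)$ for every $y\in\mathcal{Y}$ and $\dist\in\{\dist_a,\dist_b\}$.

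For the concentration step I would reuse the additive decomposition from the proof of Lemma~\ref{lemma: expected tail}:
\[
\deltaw = (b_y-b_{1-y}) + w_* D_* + \sum_{i\in\ci} w_i D_i + \sum_{j\in\cj} w_j D_j,
\]
with $D_k := f_k(y,s_k)-f_k(1-y,s_k)$. The modeling assumption of class-conditional independence renders $D_*$, $\{D_i\}_{i\in\ci}$, $\{D_j\}_{j\in\cj}$ mutually independent given $(y,\dist)$, so Hoeffding's inequality for bounded independent sums applies. Enumerating the indicator factor functions yields $D_*\in\{-1,+1\}$ (range $2$), whereas each auxiliary $D_k$ sits in $\{-1,0\}$ or $\{0,+1\}$ depending on $y$ (range $1$). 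Combining this with the optimal-weight bounds $|w_*|\leq \log\big({}_{\scriptscriptstyle\vee}\alpha_*/(1-{}_{\scriptscriptstyle\wedge}\alpha_*)\big)$ and $|w_k|\leq \log\big({}_{\scriptscriptstyle\vee}\alpha_k(1-{}_{\scriptscriptstyle\wedge}\epsilon_k)/({}_{\scriptscriptstyle\wedge}\epsilon_k(1-{}_{\scriptscriptstyle\vee}\alpha_k))\big)$ obtained from the MLE characterization in Appendix~\ref{subsec: generative model parameters}, the sum of squared summand ranges matches exactly the $v^2$ of the theorem, the coefficient $4$ on the main-model term coming from the doubled range $2|w_*|$. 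Hoeffding then gives $\prob_{\dist}[\deltaw\leq 0\mid y]\leq \exp\big(-2\,\mathbb{E}[\deltaw\mid y]^2/v^2\big)$ whenever $\mathbb{E}[\deltaw\mid y]>0$, and Lemma~\ref{lemma: expected tail} combined with the standing assumptions $\mu_{d_\mathcal{K},\dist}>0$ promotes this into the desired $\exp(-2\mu_{y,\dist}^2/v^2)$ via monotonicity of $t\mapsto \exp(-2t^2/v^2)$ on $t\geq 0$. Averaging over $y$ and $\dist$ and subtracting from $1$ reproduces~(\ref{eqn: generic lower bound}).

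The hard part will be the bounded-range verification above: one must carefully extract from the MLE characterization of $\mathbf{w}$ in Appendix~\ref{subsec: generative model parameters} that $|w_*|$ and $|w_k|$ are upper-bounded by precisely the log-ratios appearing in $v^2$, and one must keep straight the asymmetric range enumeration ($2|w_*|$ for the main model versus $|w_k|$ for each auxiliary) that produces the coefficient $4$ on the main term and $1$ on each auxiliary term. A secondary subtlety is that the bias offset $b_y-b_{1-y}$ is deterministic given $(y,\dist)$ and hence contributes nothing to the ranges entering $v^2$, yet it still feeds $\mu_{y,\dist}$ through the class-imbalance term $r_y$ isolated in Lemma~\ref{lemma: expected tail}; this asymmetric mean/variance treatment of the bias must be traced consistently when invoking Hoeffding.
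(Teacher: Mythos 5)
Your proposal is correct and follows essentially the same route as the paper's proof: the paper also reduces $\pipelineaccuracy$ to the left tail of $\deltaw$, decomposes $\deltaw$ into one bounded increment per model, bounds the increment ranges by $2w_*\leq 2\log\frac{{}_{\scriptscriptstyle\vee}\alpha_*}{1-{}_{\scriptscriptstyle\wedge}\alpha_*}$ and $w_k\leq\log\frac{{}_{\scriptscriptstyle\vee}\alpha_k(1-{}_{\scriptscriptstyle\wedge}\epsilon_k)}{{}_{\scriptscriptstyle\wedge}\epsilon_k(1-{}_{\scriptscriptstyle\vee}\alpha_k)}$ via the MLE weight bounds, and applies the (Azuma/bounded-differences form of the) Hoeffding inequality together with Lemma~\ref{lemma: expected tail} before averaging over $y$ and $\dist$. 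The only cosmetic difference is that you invoke Hoeffding for independent bounded sums directly, whereas the paper phrases the same step as a generalized bounded-difference/submartingale inequality.
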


\emph{Proof Sketch.} We begin by subtracting the term $\mu_{y,\dist}$ from $\prob_{\dist}(o=y|y,\mathbf{w})$, and then decomposing the result into individual summands, where each summand is induced by a single model. We then treat each summand as a bounded increment whose sum is a submartingale.
Followed by an application of generalized bounded difference inequality~\cite{van2002hoeffding}, we arrive at the proof, whose full details can be found in Appendix~\ref{subsec: proof of main theorem}.

\paragraph*{Discussion} 
In the following, we attempt to understand the scaling of the weighted robust accuracy of \sys\ in terms of models' characteristics. 

\textit{Impact of truth rates and false rates:} We note that $\mu_{d_{\mathcal{K}, \dist}}$ for $\mathcal{K}\in\{\mathcal{I}, \mathcal{J}\}$, which is an additive component of $\mu_{y, \dist}$, poses importance to understand the factors contributing to the performance of \sys. Generally, larger $\mu_{d_{\mathcal{K}, \dist}}$ (hence $\mu_{y, \dist}$) would increase the right tail probability of $\Delta_{\mathbf{w}}(y, s_*, s_{\ci}, s_{\cj})$ leading to a larger weighted accuracy for \sys. Although exceptions exist in cases where the variance increases disproportionally, here in our discussion we first focus on parameters that increase $\mu_{d_{\mathcal{K}, \dist}}$. Towards that, we simplify our exposition and let each auxiliary model have the same truth and false rate over both benign and adversarial examples, and within each type,
where the exact parameters are given by ${\alpha}_k:={\alpha}_{k, \dist}={}_{\scriptscriptstyle\wedge}{\alpha}_{k, \dist}={}_{\scriptscriptstyle\vee}{\alpha}_{k, \dist}$ and ${\epsilon}_{k}:={\epsilon}_{k, \dist}={}_{\scriptscriptstyle\wedge}{\epsilon}_{k, \dist}={}_{\scriptscriptstyle\vee}{\epsilon}_{k, \dist}$, for $k\in\mathcal{I}\cup\mathcal{J}$. 
In this simplified setting where the expected performance improvement by the auxiliary models is given by $\mu_{d_{\mathcal{K}, \dist}}$ for $\mathcal{K}\in\{\mathcal{I}, \mathcal{J}\}$ and fixed with respect to $\dist$, one can observe through partial derivatives that $\mu_{d_{\mathcal{K}, \dist}}$ is increasing over ${\alpha}_{k}$ and decreasing over ${\epsilon}_k$. 
This explains why the two types of 
knowledge rules would help: high-quality permissive models would have high truth rate and low false rate ($\alpha_i$ and $\epsilon_i$), as well as the preventative models ($\alpha_j$ and $\epsilon_j$), yet with different coverages for $y\in\mathcal{Y}$.

\textit{Auxiliary models in \sys\ - the more the merrier?} Next, we investigate the effect of the number of auxiliary models. To simplify, let $|\mathcal{I}|=|\mathcal{J}|$, and let $\hat{\mu}_{y, \dist}$ be a random variable with $\hat{\mu}_{y, \dist}={\mu_{y, \dist}}/({n+1})$, and $\hat{v}^2={v^2}/({n+1})$. The exponent thus becomes $-{\mu^2_{y, \dist}}/v^2=-(n+1){\hat{\mu}_{y, \dist}^2}/{\hat{v}^2}$. One can show that ${\hat{\mu}_{y, \dist}^2}/{\hat{v}^2}\geq c$ for some positive constant $c$, implying that $\pipelineaccuracy\geq 1-\exp(-2(n+1)c)$. That is, increasing the number of models generally improves the weighted robust accuracy of \sys. To demonstrate this, we now focus on understanding the scaling of weighted robust accuracy on a simplified setting. We assume that the auxiliary models are \emph{homogeneous} for each type: permissive or preventative. For example, $\alpha_k$ is fixed with respect to $k\in\mathcal{I}\cup\mathcal{J}$, hence we drop the subscripts, i.e., $\alpha_{k, \dist}=\alpha$ and $\epsilon_{k, \dist}=\epsilon$. 
We assume that the same number of auxiliary models are used, namely $|\mathcal{I}|=|\mathcal{J}|=n$, and that the classes are balanced with $\prob_{\dist}(y=1)=\prob_{\dist}(y=0)$, for all $\distset$. Finally, we let $\alpha_{*, \benign}=1$ and $\alpha_{*, \adv}=0$, and $\alpha-\epsilon>0$. Then, the following holds.
\begin{corollary}[Homogenous models]\label{cor: homogenous models}
The weighted robust accuracy of \sys in the homogeneous setting satisfies
\begin{equation*}\label{eqn: lower bound (main)}
    \begin{split}
      \mathcal{A}^{\text{\sys}} \geq1-\exp\big(-2n{(\alpha-\epsilon)^2}\big).
    \end{split}
\end{equation*}
In particular, one has $\lim_{n\rightarrow \infty}\mathcal{A}^{\text{\sys}}=1.$
\end{corollary}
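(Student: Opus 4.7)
The plan is to specialize Theorem~\ref{thm: main theorem} to the homogeneous regime and then convert the resulting KL-divergence terms into a clean $(\alpha-\epsilon)^{2}$ rate via Pinsker's inequality.

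First, I would substitute ${}_{\scriptscriptstyle\wedge}\alpha_{k}={}_{\scriptscriptstyle\vee}\alpha_{k}=\alpha$ and ${}_{\scriptscriptstyle\wedge}\epsilon_{k}={}_{\scriptscriptstyle\vee}\epsilon_{k}=\epsilon$ for every $k\in\mathcal{I}\cup\mathcal{J}$ into Lemma~\ref{lemma: expected tail}. A short algebraic manipulation shows that both margin components collapse to the single expression
\[
\mu_{d_{\mathcal{I},\dist}}=\mu_{d_{\mathcal{J},\dist}}=n\bigl[D(\alpha\|\epsilon)+D(\epsilon\|\alpha)\bigr],
\]
independent of $y$ and $\dist$, where $D(\cdot\|\cdot)$ is the binary KL divergence. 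The balanced-class assumption $\prob_{\dist}(y=0)=\prob_{\dist}(y=1)$ kills the log-ratio correction $r_{y}$. The auxiliary portion of the variance bound simplifies in parallel to $2nL^{2}$, where $L:=\log\tfrac{\alpha(1-\epsilon)}{\epsilon(1-\alpha)}$.

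Next, I would dispose of the main model. The combination $\alpha_{*,\benign}=1$, $\alpha_{*,\adv}=0$, and $\pi_{\benign}=\pi_{\adv}=1/2$ implies the main model has an overall truth rate of $1/2$, so the MLE training yields $w_{*}=0$ and the main-model increment is identically zero in the bounded-difference submartingale used to prove Theorem~\ref{thm: main theorem}. This matches the fact that $4\bigl(\log\tfrac{{}_{\scriptscriptstyle\vee}\alpha_{*}}{1-{}_{\scriptscriptstyle\wedge}\alpha_{*}}\bigr)^{2}=4(\log 1)^{2}=0$, so the main model drops out of $v^{2}$ as well. Hence $\mu_{y,\dist}$ and $v^{2}$ are independent of $(y,\dist)$, and the outer expectation in~\eqref{eqn: generic lower bound} is trivial.

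Finally, I would invoke Pinsker's inequality $D(p\|q)\geq 2(p-q)^{2}$ applied in both directions, which yields $D(\alpha\|\epsilon)+D(\epsilon\|\alpha)\geq 4(\alpha-\epsilon)^{2}$, together with the algebraic identity $D(\alpha\|\epsilon)+D(\epsilon\|\alpha)=(\alpha-\epsilon)L$. These two facts together let the ratio $2\mu_{y,\dist}^{2}/v^{2}$ be lower-bounded by $2n(\alpha-\epsilon)^{2}$, which when plugged back into Theorem~\ref{thm: main theorem} gives the claimed
\[
\pipelineaccuracy\geq 1-\exp\bigl(-2n(\alpha-\epsilon)^{2}\bigr).
\]
The limit $\lim_{n\to\infty}\pipelineaccuracy=1$ is immediate because $\alpha-\epsilon>0$ drives the exponent to $-\infty$.

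I expect the main obstacle to be the treatment of the degenerate main-model parameters: a naive substitution into the formula for $\mu_{d_{*,\dist}}$ in Lemma~\ref{lemma: expected tail} produces $\log 0$ and thus an apparent $-\infty$, even though the matching entry in $v^{2}$ is zero. The cleanest remedy is to use the MLE equation to conclude $w_{*}=0$ in this symmetric and balanced regime, and then verify that dropping the zero-weight main-model increment from the submartingale argument of Theorem~\ref{thm: main theorem} preserves the bound; once this bookkeeping is done, the rest is routine algebra plus Pinsker's inequality.
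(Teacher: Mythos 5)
Your overall strategy---specializing Theorem~\ref{thm: main theorem} to the homogeneous regime---is a genuinely different route from the paper's, and most of it is sound: the reduction $\mu_{d_{\ci,\dist}}=\mu_{d_{\cj,\dist}}=n(\alpha-\epsilon)\log\frac{\alpha(1-\epsilon)}{\epsilon(1-\alpha)}=n\bigl[D(\alpha\|\epsilon)+D(\epsilon\|\alpha)\bigr]$ is correct, the vanishing of $r_y$ under class balance is correct, and your diagnosis and repair of the degenerate main model (conclude $w_*=b_*=0$ from the optimality equations, so the main-model increment drops out of both $\mu_{y,\dist}$ and $v^2$) is exactly what the paper also does. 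The gap is in the final step. Writing $L:=\log\frac{\alpha(1-\epsilon)}{\epsilon(1-\alpha)}$, you have $\mu_{y,\dist}=n(\alpha-\epsilon)L$ and $v^2=2nL^2$, hence
\[
\frac{2\mu_{y,\dist}^2}{v^2}=\frac{2n^2(\alpha-\epsilon)^2L^2}{2nL^2}=n(\alpha-\epsilon)^2
\]
\emph{exactly}: the factor $L^2$ cancels, so Pinsker's inequality (which only gives $L\geq 4(\alpha-\epsilon)$) cannot raise the ratio to $2n(\alpha-\epsilon)^2$ as you claim. Your route therefore yields $\pipelineaccuracy\geq 1-\exp(-n(\alpha-\epsilon)^2)$, which is enough for the limit statement but is weaker by a factor of $2$ in the exponent than the bound you are asked to prove.

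The missing idea is the one the paper's proof uses: in the homogeneous case all auxiliary weights equal $\log(\alpha/\epsilon)>0$ and the biases cancel, so the sign of the decision statistic coincides with the sign of the \emph{unweighted} count difference $d_\alpha-d_\epsilon$, where $d_\alpha\sim B(n,\alpha)$ and $d_\epsilon\sim B(n,\epsilon)$. Applying the bounded-differences inequality directly to $d_\alpha-d_\epsilon$ (unit increments, deviation $t=n(\alpha-\epsilon)$) gives a better mean-to-increment ratio than the log-likelihood statistic does, and that is where the stated constant $2$ comes from; the paper then adds a separate two-binomial comparison lemma to conclude $\pipelineaccuracy>\mainaccuracy=1/2$, which the corollary as stated does not require. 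If you want the exact constant, replace the appeal to Theorem~\ref{thm: main theorem} by this direct majority-vote reduction.
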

For this particular case, the predicted class for the target variable $y$ is based upon an (unweighted) majority voting decision.
The above result suggests that for a setting where the auxiliary models are homogeneous with different coverage, the performance of \sys to predict the output variable $y$ robustly is determined by: (a) the difference between the probability of predicting the output variable correctly and that of making an erroneous prediction, that is, $\alpha-\epsilon$, and (b)~the number of auxiliary models. Consequently, $\pipelineaccuracy$ converges to $1$ exponentially fast in the number of auxiliary models as long as $\alpha-\epsilon>0$, which is naturally satisfied by the principle \sys employs while constructing the logical relations between the output variable and different knowledge.

\subsection{Comparing $\mathcal{A}^{\text{\sys}}$ and $\mathcal{A}^{\text{main}}$}\label{section: comparison}

Theorem~\ref{thm: main theorem} guarantees that the addition of models allows the weighted robust accuracy of \sys to converge to 1 exponentially fast. We now introduce a sufficient condition under which $\mathcal{A}^{\text{\sys}}$ is strictly better than $\mathcal{A}^{\text{main}}$. 

\begin{theorem}[Sufficient condition for $\pipelineaccuracy>\mainaccuracy$]\label{proposition}
Let the number of permissive and preventative models be the same and denoted by $n$ such that $n:=|\ci|=|\cj|$. Note that the weighted accuracy of the main model in terms of its truth rate is simply $\alpha_*:=\sum_{\distset}\pi_{\dist}\alpha_{*, \dist}$. Moreover, let $\ck, \ck'\in\{\ci, \cj\}$ with $\ck\neq\ck'$ and for any $\distset$, let 
\begin{equation*}
    \begin{split}
       \gamma_{\dist} :=\frac{1}{n+1} \min_{\ck}\Big\{ \alpha_{*, \dist}-1/2+\sum_{k\in\ck}\alpha_{k, \dist}-\sum_{k'\in\mathcal{K}'}\epsilon_{k', \dist}\Big\}. 
    \end{split}
\end{equation*}

If $\gamma_{\dist}>\sqrt{\frac{4}{n+1}\log\frac{1}{1-\alpha_{*}}}$ for all $\distset$, then $\pipelineaccuracy>\mainaccuracy$.
\end{theorem}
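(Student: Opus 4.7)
The plan is to leverage the concentration bound of Theorem~\ref{thm: main theorem} and relate its log-ratio exponent $2\mu_{y,\dist}^2/v^2$ to the linear-margin exponent implicit in the hypothesis. First, I rewrite the assumption $\gamma_{\dist} > \sqrt{\tfrac{4}{n+1}\log\tfrac{1}{1-\alpha_*}}$ in the equivalent exponential form $\exp\bigl(-(n+1)\gamma_{\dist}^2/4\bigr) < 1-\alpha_*$. It then suffices to establish the pointwise inequality $2\mu_{y,\dist}^2/v^2 \geq (n+1)\gamma_{\dist}^2/4$ for every $y\in\mathcal{Y}$ and $\distset$, because substituting into the bound of Theorem~\ref{thm: main theorem} yields $\pipelineaccuracy \geq 1 - \mathbb{E}_{y,\dist}[\exp(-(n+1)\gamma_{\dist}^2/4)] > 1-(1-\alpha_*) = \alpha_* = \mainaccuracy$, with strict inequality inherited from the strict hypothesis.

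To control $\mu_{y,\dist}$ in terms of $\gamma_{\dist}$, I would apply Lemma~\ref{lemma: expected tail} and then invoke elementary logarithmic inequalities of the form $\log(x/(1-x)) \geq c\,(x-1/2)$ on a suitable range, applied term by term to each log-ratio appearing in $\mu_{d_{*,\dist}}$, $\mu_{d_{\ci,\dist}}$, and $\mu_{d_{\cj,\dist}}$. This converts the expected log-margin into a linear combination of truth and false rates that matches the expression inside the minimum in the definition of $\gamma_{\dist}$. The minimum over $\ck\in\{\ci,\cj\}$ aligns precisely with the two cases of the lemma: one choice of $\ck$ matches the $y=1$ expansion of $\mu_{y,\dist}$ (which invokes $\mu_{d_{\ci,\dist}}$) and the other matches the $y=0$ expansion (which invokes $\mu_{d_{\cj,\dist}}$), so the worse of the two bounds both simultaneously. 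For the variance term $v^2$, I would upper-bound it by $(n+1)$ times the squared worst-case log-ratio, so that the $(n+1)$ factor in the target exponent emerges naturally from the counting of $1$ main model plus $n$ permissive or $n$ preventative models.

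The main obstacle is pinning down the precise numerical constant $4$ in the denominator: the linear lower bound $\log(x/(1-x))\geq c\,(x-1/2)$ is not tight when the parameters approach the boundary ($x\to 1$ or $x\to 0$), so the argument must either restrict to a regime where the slope $c$ can be chosen to yield exactly $4$, or handle the boundary regimes separately via a direct log-ratio computation that exploits the monotonicity of the optimal factor weights established earlier in the paper. Once the pointwise exponent inequality is verified, the conclusion $\pipelineaccuracy > \mainaccuracy$ follows immediately from Theorem~\ref{thm: main theorem}, the exponential reformulation of the hypothesis, and the identity $\mainaccuracy = \alpha_*$.
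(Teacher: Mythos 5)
Your overall assembly (exponential reformulation of the hypothesis, $\mainaccuracy=\alpha_*$, strictness of the final inequality) is fine, but the route through Theorem~\ref{thm: main theorem} is not the paper's route, and its central step --- the pointwise inequality $2\mu_{y,\dist}^2/v^2 \geq (n+1)\gamma_{\dist}^2/4$ --- has a structural gap, not merely a constant-chasing one. The quantities $\mu_{y,\dist}$ and $v^2$ from Lemma~\ref{lemma: expected tail} and Theorem~\ref{thm: main theorem} are built from log-odds of the \emph{extremal-over-distributions} rates (${}_{\scriptscriptstyle\wedge}\alpha_k$, ${}_{\scriptscriptstyle\vee}\epsilon_k$, etc.), whereas $\gamma_{\dist}$ is a plain arithmetic combination of the \emph{per-distribution} rates $\alpha_{k,\dist},\epsilon_{k,\dist}$. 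These need not track each other. Concretely, a permissive model with $\alpha_{i,\benign}=0.99$, $\epsilon_{i,\benign}=0.01$ but $\alpha_{i,\adv}=\epsilon_{i,\adv}=0.5$ has ${}_{\scriptscriptstyle\wedge}\alpha_i={}_{\scriptscriptstyle\vee}\epsilon_i=0.5$, so $\log({}_{\scriptscriptstyle\wedge}\alpha_i/{}_{\scriptscriptstyle\vee}\epsilon_i)=0$ while $\log\big((1-{}_{\scriptscriptstyle\vee}\alpha_i)/(1-{}_{\scriptscriptstyle\wedge}\epsilon_i)\big)<0$: its contribution to $\mu_{d_{\ci,\benign}}$ is negative, yet it contributes $+0.99$ to the $\ck=\ci$ branch of $\gamma_{\benign}$ and it inflates $v^2$ by $\big(\log\tfrac{0.99\cdot 0.99}{0.01\cdot 0.01}\big)^2\approx 85$. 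So $\mu_{y,\dist}$ can be small or negative (even violating the hypothesis $\mu_{d_{\ck,\dist}}>0$ that Theorem~\ref{thm: main theorem} itself requires but Theorem~\ref{proposition} does not assume) while $\gamma_{\dist}$ and $v^2$ are large. Your term-by-term linearization $\log(x/(1-x))\geq c(x-1/2)$ cannot bridge this, because the arguments of the logarithms in $\mu_{y,\dist}$ are not the per-distribution rates appearing in $\gamma_\dist$; and even in the robust case, bounding $v^2$ by $(n+1)$ times a single worst-case squared log-ratio loses the heterogeneity of the scales $c_k$, so $\mu^2/v^2$ does not reduce to $(\sum_k(\text{rate}_k))^2/(n+1)$ except when all log-odds are comparable.

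The paper avoids Theorem~\ref{thm: main theorem} entirely. It first observes that the learned weights realize the Bayes classifier, $\prob[\hat o=y\mid \mathbf{w}^*]\geq\prob[\hat o=y\mid\mathbf{w}]$ for any $\mathbf{w}$, and then deliberately substitutes the suboptimal weights $\mathbf{w}=[0;1/2;(1)_{i\in\ci};(1)_{j\in\cj}]$, turning the decision statistic into the shifted Poisson--Binomial variable $s_*-1/2+\sum_{i\in\ci}s_i-\sum_{j\in\cj}(1-s_j)$ whose mean, conditioned on $y$ and $\dist$, is exactly $(n+1)\gamma_{\dist,y}$. A multiplicative Chernoff bound for Poisson--Binomial sums then gives $\prob_{\dist}[\hat o\neq y\mid y]\leq\exp\big(-(n+1)\gamma_{\dist,y}^2/(2(\gamma_{\dist,y}+1))\big)\leq\exp\big(-(n+1)\gamma_{\dist}^2/4\big)$, and the conclusion follows as in your last step. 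The two ingredients you are missing are precisely this Bayes-optimality reduction to an unweighted majority vote and the use of a concentration inequality whose exponent is expressed directly in the per-distribution rates; without them, I do not see how to repair the exponent comparison in your plan.
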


\emph{Proof Sketch.} We first approximate $\Delta_{\mathbf{w}}(y, s_*, s_{\ci}, s_{\cj})$ with a Poisson Binomial random variable and apply the relevant Chernoff bound. Imposing a strict bound between the Chernoff result and the true and false rates of main model concludes the proof. We note that this bound is slightly simplified, and our full proof in the Appendix~\ref{subsec: proof of proposition} is tighter.

\paragraph*{Discussion} We start by noting that $\gamma_{\dist}$ is a combined truth rate of all models normalized over the number of models. That is, for a fixed distribution $\dist$, $\alpha_{*, \dist}-1/2$ indicates the truth rate of main task model over a random classifier and $\sum_{k\in\ck}\alpha_{k, \dist}-\sum_{k'\in\mathcal{K}'}\epsilon_{k', \dist}$ refers to the improvement by the auxiliary models on top of the main task model. More specifically, in cases where the true class of output variable is positive with $y=1$, $\sum_{i\in\ci}\alpha_{i, \dist}-\sum_{j\in\mathcal{J}}\epsilon_{j, \dist}$ account for the total (and unnormalized) success of permissive models in identifying $y=1$ interfered by the failure of preventative model in identifying $y=1$ (resp. For $y=0$, $\ck=\cj$). Hence, $\gamma_{\dist}$ is the "worst-case'' combined truth rate of all models, where the worst-case refers to minimization over all possible labels of target variable.

Theorem~\ref{proposition} therefore forms a relationship between the improvement of \sys over the main task model and the combined truth rate of models, and theoretically justifies our intuition -- larger truth rates and lower false rates of individual auxiliary models result in larger combined truth rate $\gamma_{\dist}$, hence making the sufficient condition more likely to hold.
Additionally, employing a large number of auxiliary models is found to be beneficial for better \sys performance, as we conclude in Corollary~\ref{cor: homogenous models} as well. Our finding here also confirms that in the extreme scenarios where the main task model has a perfect clean and robust truth rate ($\alpha_*=1$), it is \textit{not} possible to improve upon the main task model. Conversely, when $\alpha_*=0$, any improvement by \sys would result in absolute improvement over the main model. 

\begin{figure*}[t!]
\centering
\includegraphics[width=\textwidth]{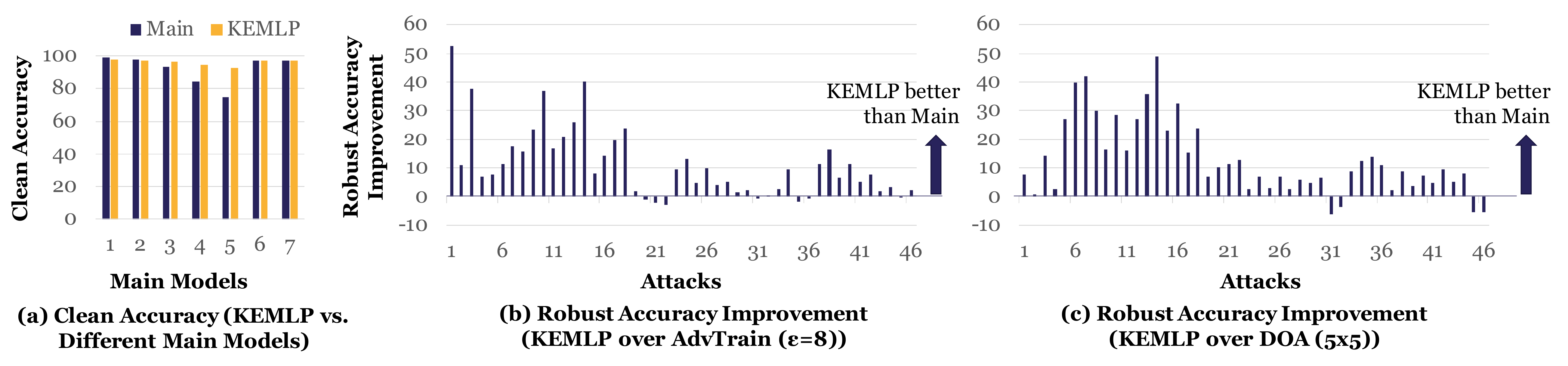}
\vspace{-2em}
\caption{\small (a) Clean accuracy and (b) (c) robust accuracy improvement of \sys ($\beta=0.5$) over baselines against different attacks under both whitebox and blackbox settings. The represented attack list and results of other baselines are in Appendix~\ref{appendix:details-of-attacks-and-corruptions}.}
\label{fig:general_accuracy}
\end{figure*}

\begin{table*}[tp]
    \caption{Model performance (\%) under physical attacks ($\beta=0.4$). Performance \tg{gain} and \textcolor{red}{loss} of \sys over baselines are highlighted.}
    \label{table:stop_sign_attack}
    \centering
    \resizebox{0.8\textwidth}{!}{
    \begin{tabular}{c|c|c|c|c|c|c|c}
        \toprule
            \multirow{2}{*}{} & \multicolumn{3}{c|}{Main} & & \multicolumn{3}{c}{KEMLP} \\ 
            \cline{2-8} &  Clean Acc  &  Robust Acc & W-Robust Acc & & Clean Acc  &  Robust Acc & W-Robust Acc\\  
        \hline
             GTSRB-CNN & ${100}$ & $5$ & $52.5$ & & $100$(${\pm 0}$) & $87.5$($\tg{+82.5}$)& $93.75$($\tg{+41.25}$)\\
        \hline
             AdvTrain ($\epsilon=4$) & ${100}$ & $12.5$ & $56.25$ & & $100$(${\pm 0}$) & $90$($\tg{+77.5}$) & $95$($\tg{+38.75}$)\\
        \hline
             AdvTrain ($\epsilon=8$) & $97.5$ & $37.5$ & $67.5$ & & $100$($\tg{+2.5}$) & $ 90$($\tg{+52.5}$) & $95$($\tg{+27.5}$)\\
        \hline
             AdvTrain ($\epsilon=16$) & $87.5$ & $50$ & $68.75$ & & $100$($\tg{+12.5}$) & $90$($\tg{+40}$) &  $95$($\tg{+26.25}$)\\
        \hline
             AdvTrain ($\epsilon=32$) & $62.5$ & $32.5$ & $47.5$ & & $100$($\tg{+37.5}$) & $90$($\tg{+57.5}$) & $95$($\tg{+47.5}$)\\
        \hline
             DOA (5x5) & $95$ & $90$ & $92.5$ & & $100$($\tg{+5}$) & $100$($\tg{+10}$) & $100$($\tg{+7.5}$)\\
        \hline
             DOA (7x7) & $57.5$ & $32.5$ & $45$ & & $100$($\tg{+42.5}$) & $100$($\tg{+67.5}$) & $100$($\tg{+55}$)\\
        \bottomrule
    \end{tabular}
    }
    \vspace{-0.2em}
\end{table*}

\begin{table*}[th]
\vspace{-5mm}
    \caption{Accuracy (\%) under whitebox $\mathcal{L}_\infty$ attacks ($\beta=0.8$)}
    \label{table:white_box_sensor_linf}
    \centering
    \scriptsize
    \resizebox{0.9\textwidth}{!}{
    \begin{tabular}{c|c|c|c|c|c|c}
        \toprule
             Models & & $\epsilon=0$ & $\epsilon=4$ & $\epsilon=8$ & $\epsilon=16$ & $\epsilon=32$ \\
             \hline
        \hline
            \multirow{2}{*}{GTSRB-CNN} & Main & $\bm{99.38}$ & $67.31$ & $43.13$ & $13.50$ & $3.63$ \\
            \cline{2-7} & KEMLP & $98.28(\tr{-1.10})$ & $85.39(\tg{+18.08})$ & $71.76(\tg{+28.63})$ & $48.89(\tg{+35.39})$ & $26.13(\tg{+22.50})$\\
            \hline
        \hline
            \multirow{2}{*}{AdvTrain ($\epsilon=4$)} & Main & $97.94$ & $87.94$ & $68.85$ & $38.66$ & $8.77$ \\
            \cline{2-7} & KEMLP & $97.89(\tr{-0.05})$ & $\bm{92.80}(\tg{+4.86})$ & $79.58(\tg{+10.73})$ & $57.48(\tg{+18.82})$ & $28.58(\tg{+19.81})$\\
            \hline
        \hline
            \multirow{2}{*}{AdvTrain ($\epsilon=8$)} & Main & $93.72$ & $84.21$ & $71.76$ & $43.16$ & $13.01$ \\
            \cline{2-7} & KEMLP & $96.79(\tg{+3.07})$ & $92.08(\tg{+7.87})$ & $81.58(\tg{+9.82})$ & $59.18(\tg{+16.02})$ & $30.61(\tg{+17.60})$\\
            \hline
        \hline
            \multirow{2}{*}{AdvTrain ($\epsilon=16$)} & Main & $84.54$ & $78.58$ & $71.89$ & $55.99$ & $19.55$ \\
            \cline{2-7} & KEMLP & $94.68(\tg{+10.14})$ & ${91.64}(\tg{+13.06})$ & $\bm{85.55}(\tg{+13.66})$ & $67.98(\tg{+11.99})$ & $32.61(\tg{+13.06})$\\
            \hline
        \hline
            \multirow{2}{*}{AdvTrain ($\epsilon=32$)} & Main & $74.74$ & $70.24$ & $65.61$ & $56.22$ & $29.04$ \\
            \cline{2-7} & KEMLP & $91.46(\tg{+16.72})$ & $88.58(\tg{+18.34})$ & $83.23(\tg{+17.62})$ & $\bm{72.02}(\tg{+15.80})$ & $\bm{41.90}(\tg{+12.86})$\\
            \hline
        \hline
            \multirow{2}{*}{DOA (5x5)} & Main & $97.43$ & $57.46$ & $28.76$ & $5.81$ & $0.85$ \\
            \cline{2-7} & KEMLP & $97.45(\tg{+0.02})$ & $83.85(\tg{+26.39})$ & $67.98(\tg{+39.22})$ & $45.27(\tg{+39.46})$ & $24.28(\tg{+23.43})$\\
            \hline
        \hline
            \multirow{2}{*}{DOA (7x7)} & Main & $97.27$ & $38.50$ & $9.75$ & $2.83$ & $0.67$ \\
            \cline{2-7} & KEMLP & $97.22(\tr{-0.05})$ & $80.89(\tg{+42.39})$ & $63.40(\tg{+53.65})$ & $49.20(\tg{+46.37})$ & $31.04(\tg{+30.37})$\\
            \hline
        \bottomrule
    
    \end{tabular}
    }
\end{table*}

\section{Experimental Evaluation}
\label{sec:experimental evals}

In this section, we evaluate \sys based on the traffic sign recognition task against different adversarial attacks and corruptions, including the physical attacks~\citep{eykholt2018robust}, $\mathcal{L}_{\infty}$ bounded attacks, unforeseen attacks~\citep{kang2019testing}, and common corruptions~\cite{hendrycks2019benchmarking}.
We show that under both whitebox and blackbox settings against a \textit{diverse} set of attacks, 1) \sys achieves significantly higher robustness than baselines, 2) \sys maintains similar clean accuracy with a strong main task model whose clean accuracy is originally high (e.g., vanillar CNN), 3) \sys even achieves higher clean accuracy than a relatively weak main task model whose clean accuracy is originally low as a tradeoff for its robustness (e.g., adversarially trained models). 


\subsection{Experimental Setup}
\label{subsec:experimental setup}

\paragraph{Dataset} Following existing work~\citep{eykholt2018robust,wu2019defending} that evaluate ML robustness on traffic sign data, we adopt LISA~\citep{mogelmose2012vision} and GTSRB~\citep{stallkamp2012man} for training and evaluation. All data are processed by standard crop-and-resize to $32 \times 32$ as described in~\citep{sermanet2011traffic}. In this paper, we conduct the evaluation on two dataset settings: \textit{1) Setting-A:} a subset of GTSRB, which contains 12 types of German traffic signs. In total, there are 14880 samples in the training set, 972 samples in the validation set, and 3888 samples in the test set; \textit{2) Setting-B:} a modified version of Setting-A, where the German stop signs are replaced with the U.S. stop signs from LISA, following \citep{eykholt2018robust}.

\paragraph{Models} We adopt the GTSRB-CNN architecture \citep{eykholt2018robust} as the main task model.  \sys is constructed based on the main task model together with a set of auxiliary task models (e.g., color, shape, and content detectors).
To train the weights of factors in \sys, we use $\beta$ to denote the prior belief on balance between benign and adversarial distributions. 
More details on implementation are provided in Appendix~\ref{appendix:implementation-details}.

\paragraph{Baselines} To demonstrate the superiority of \sys, we compare it with two state-of-the-art baselines: \textbf{adversarial training}~\citep{madry2017towards} and \textbf{DOA}~\citep{wu2019defending}, which are strong defenses against $\mathcal{L}_p$ bounded attacks and physically attacks respectively. Detailed setup for baselines is given in Appendix~\ref{appendix:detailed-setup-of-baselines}.

\paragraph{Evaluated Attacks and Corruptions} We consider four types of attacks for thorough evaluation: \textit{1) physical attacks} on stop signs~\citep{eykholt2018robust}; \textit{2) $\mathcal{L}_\infty$ bounded attacks}~\citep{madry2017towards} with $\epsilon\in\{4,8,16,32\}$; \textit{3) Unforeseen attacks}, which produce a diverse set of unforeseen test distributions (e.g. Elastic, JPEG, Fog) distinct from $\mathcal{L}_{p}$ bounded perturbation~\citep{kang2019testing}; \textit{4) common corruptions}~\citep{hendrycks2019benchmarking}. We present examples of these adversarial instances in Appendix~\ref{appendix:examples-of-generated-adversarial-test-samples}. 
For each attack, we consider both the \textit{whitebox attack} against the main task model and \textit{blackbox attack} by distilling either the main task model or the whole \sys pipeline.
More details can be found in Appendix~\ref{appendix:details-of-attacks-and-corruptions}.


\begin{table*}[th]
\vspace{-5mm}
    \caption{Accuracy~($\%$) under whitebox unforeseen attacks~($\beta=0.8$)}
    \label{table:white_box_sensor_unforeseen}
    \centering
    \resizebox{1.0\textwidth}{!}{
    \begin{tabular}{c|c|c|c|c|c|c|c|c|c|c|c|c}
        \toprule
             & & Clean & Fog-256 & Fog-512 & Snow-0.25 & Snow-0.75 & Jpeg-0.125 & Jpeg-0.25 & Gabor-20 & Gabor-40 & Elastic-1.5 & Elastic-2.0\\
            \hline
        \hline
            \multirow{2}{*}{GTSRB-CNN} & Main & $\bm{99.38}$ & $59.65$  & $34.18$  & $56.58$ & $24.54$ & $55.74$ & $27.01$ & $57.25$ & $32.41$ & $44.78$ & $24.31$ \\
            \cline{2-13} & KEMLP & ${98.28}(\tr{-1.10})$ & $76.95(\tg{+17.30})$  & $62.83(\tg{+28.65})$  & $78.94(\tg{+22.36})$ & $53.22(\tg{+28.68})$ & $79.63(\tg{+23.89})$ & $63.40(\tg{+36.39})$ & $80.17(\tg{+22.92})$ & $65.20(\tg{+32.79})$ & $69.34(\tg{+24.56})$ & $52.37(\tg{+28.06})$\\
            \hline
        \hline
            \multirow{2}{*}{AdvTrain ($\epsilon=4$)} & Main & $97.94$ & $55.53$ & $29.50$ & $66.31$ & $32.61$ & $56.58$ &$28.11$ & $73.30$ & $46.76$ & $57.25$ & $30.09$\\
            \cline{2-13} & KEMLP & ${97.89}(\tr{-0.05})$ & $76.08(\tg{+20.55})$  & $61.96(\tg{+32.46})$  & $80.45(\tg{+14.14})$ & $57.84(\tg{+25.23})$ & $84.23(\tg{+27.65})$ & $68.57(\tg{+40.46})$ & $81.48(\tg{+8.18})$ & $65.77(\tg{+19.01})$ & $71.19(\tg{+13.94})$ & $50.33(\tg{+20.24})$\\
            \hline
        \hline
            \multirow{2}{*}{AdvTrain ($\epsilon=8$)} & Main & $93.72$ & $50.03$ & $23.56$ & $63.71$ & $34.93$ & $57.56$ & $26.16$ &$76.72$ & $53.76$ & $48.25$ & $24.46$\\
            \cline{2-13} & KEMLP & ${96.79}(\tg{+3.07})$ & $76.59(\tg{+26.56})$  & $63.97(\tg{+40.41})$  & $81.40(\tg{+17.69})$ & $57.07(\tg{+22.14})$ & $85.11(\tg{+27.55})$ & $68.70(\tg{+42.54})$ & $85.29(\tg{+8.57})$ & $68.90(\tg{+15.14})$ & $68.78(\tg{+20.53})$ & $49.31(\tg{+24.85})$\\
            \hline
        \hline
            \multirow{2}{*}{AdvTrain ($\epsilon=16$)} & Main & $84.54$ & $47.92$ & $19.75$ & $66.46$ & $37.60$ & $66.56$ &$34.23$ & $78.01$ & $64.33$ & $55.48$ & $32.28$\\
            \cline{2-13} & KEMLP & ${94.68}(\tg{+10.14})$ & $77.13(\tg{+29.21})$  & $64.38(\tg{+44.63})$  & $\bm{81.64}(\tg{+15.18})$ & $58.20(\tg{+20.60})$ & $\bm{86.99}(\tg{+20.43})$ & $70.40(\tg{+36.17})$ & $\bm{87.42}(\tg{+9.41})$ & $72.61(\tg{+8.28})$ & $67.31(\tg{+11.83})$ & $50.28(\tg{+18.00})$\\
            \hline
        \hline
            \multirow{2}{*}{AdvTrain ($\epsilon=32$)} & Main & $74.74$ & $48.71$ & $22.84$ & $61.78$ & $38.91$ & $63.58$ &$43.49$ & $70.37$ & $65.20$ & $54.58$ & $39.45$\\
            \cline{2-13} & KEMLP & ${91.46}(\tg{+16.72})$ & $\bm{79.22}(\tg{+30.51})$  & $\bm{66.33}(\tg{+43.49})$  & ${81.20}(\tg{+19.42})$ & $\bm{64.53}(\tg{+25.62})$ & $86.70(\tg{+23.12})$ & $\bm{73.38}(\tg{+29.89})$ & $87.04(\tg{+16.67})$ & $\bm{74.92}(\tg{+9.72})$ & $66.38(\tg{+11.80})$ & $54.76(\tg{+15.31})$\\
            \hline
        \hline
            \multirow{2}{*}{DOA (5x5)} & Main & $97.43$ & $58.00$ & $32.69$ & $61.19$ & $28.34$ & $41.13$ & $11.29$ & $55.43$ & $29.55$ & $58.02$ & $32.74$\\
            \cline{2-13} & KEMLP & ${97.45}(\tg{+0.02})$ & $76.85(\tg{+18.85})$  & $63.07(\tg{+30.38})$  & $78.78(\tg{+17.59})$ & $56.76(\tg{+28.42})$ & $78.60(\tg{+37.47})$ & $61.78(\tg{+50.49})$ & $80.25(\tg{+24.82})$ & $63.89(\tg{+34.34})$ & $\bm{72.69}(\tg{+14.67})$ & $\bm{57.51}(\tg{+24.77})$\\
            \hline
        \hline
            \multirow{2}{*}{DOA (7x7)} & Main & $97.27$ & $59.88$ & $38.01$ & $62.47$ & $30.17$ & $23.46$ & $3.65$ & $54.58$ & $27.29$ & $56.33$ & $30.97$\\
            \cline{2-13} & KEMLP & ${97.22}(\tr{-0.05})$ & ${78.09}(\tg{+18.21})$  & ${62.76}(\tg{+24.75})$  & $79.68(\tg{+17.21})$ & $58.26(\tg{+28.09})$ & $74.25(\tg{+50.79})$ & $61.39(\tg{+57.74})$ & $79.06(\tg{+24.48})$ & $62.29(\tg{+35.00})$ & $71.27(\tg{+14.94})$ & $55.09(\tg{+24.12})$\\
            \hline
        \bottomrule
    \end{tabular}
    }
\end{table*}

\begin{table}[tp]
\vspace{-3mm}
    \caption{Accuracy~($\%$) under common corruptions~($\beta=0.2$)}
    \label{table:common_corruption}
    \centering
    \scriptsize
    \resizebox{0.5\textwidth}{!}{
    \begin{tabular}{c|c|c|c|c|c}
        \toprule
             & & Clean & Fog & Contrast & Brightness \\
             \hline
        \hline
            \multirow{2}{*}{GTSRB-CNN} & Main & $\bm{99.38}$ & $76.23$ & $57.61$ & $85.52$ \\
            \cline{2-6} & KEMLP & ${98.28}(\tr{-1.10})$ & $\bm{78.14}(\tg{+1.91})$ & $72.43(\tg{+14.82})$ & $\bm{89.58}(\tg{+4.06})$ \\
            \hline
        \hline
            \multirow{2}{*}{AdvTrain ($\epsilon=4$)} & Main & $97.94$ & $63.81$ & $42.31$ & $78.47$  \\
            \cline{2-6} & KEMLP & ${97.89}(\tr{-0.05})$ & $70.29(\tg{+6.48})$ & $67.46(\tg{+25.16})$ & $86.70(\tg{+8.23})$ \\
            \hline
        \hline
            \multirow{2}{*}{AdvTrain ($\epsilon=8$)} & Main & $93.72$ & $59.05$ & $31.97$ & $78.47$  \\
            \cline{2-6} & KEMLP & ${96.79}(\tg{+3.07})$ & $67.41(\tg{+8.36})$ & $66.69(\tg{+34.72})$ & $85.91(\tg{+7.44})$ \\
            \hline
        \hline
            \multirow{2}{*}{AdvTrain ($\epsilon=16$)} & Main & $84.54$ & $56.58$ & $34.31$ & $78.01$  \\
            \cline{2-6} & KEMLP & ${94.68}(\tg{+10.14})$ & $66.80(\tg{+10.22})$ & $68.39(\tg{+34.08})$ & $86.14(\tg{+8.13})$ \\
            \hline
        \hline
            \multirow{2}{*}{AdvTrain ($\epsilon=32$)} & Main & $74.74$ & $50.87$ & $30.45$ & $71.30$  \\
            \cline{2-6} & KEMLP & ${91.46}(\tg{+16.72})$ & $64.94(\tg{+14.07})$ & $68.31(\tg{+37.86})$ & $83.20(\tg{+11.90})$ \\
            \hline
        \hline
            \multirow{2}{*}{DOA (5x5)} & Main & $97.43$ & $73.95$ & $62.24$ & $83.92$  \\
            \cline{2-6} & KEMLP & ${97.45}(\tg{+0.02})$ & $76.08(\tg{+2.13})$ & $\bm{74.38}(\tg{+12.14})$ & $87.60(\tg{+3.68})$ \\
            \hline
        \hline
            \multirow{2}{*}{DOA (7x7)} & Main & $97.27$ & $73.41$ & $57.54$ & $83.56$  \\
            \cline{2-6} & KEMLP & ${97.22}(\tr{-0.05})$ & $76.00(\tg{+2.59})$ & $72.40(\tg{+14.86})$ & $87.78(\tg{+4.22})$ \\
            \hline
        \bottomrule
    \end{tabular}
    }
\end{table}

\subsection{Evaluation Results}
\label{subsec:evaluation results}
Here we compare the clean accuracy, robust accuracy, and weighted robustness (W-Robust Accuracy) for baselines and \sys under different attacks and settings.

\paragraph{Clean accuracy of \sys}
First, we present the clean accuracy of \sys and baselines in Figure~\ref{fig:general_accuracy} (a) and Tables~\ref{table:stop_sign_attack}--\ref{table:common_corruption}.
As demonstrated, the clean accuracy of \sys is generally high~(over $90\%$),  by either maintaining the high clean accuracy of strong main task models (e.g., vanilla DNN) or improving upon the weak main task models with relatively low clean accuracy (e.g., adversarially trained models).
It is clear that \sys can relax the tradeoff between benign and robust accuracy and maintain the high performance for both via knowledge integration.

\paragraph{Robustness against diverse attacks} 
We then present the robustness of \sys based on different main task models against the physical attacks, which is very challenging to defend currently ( Table~\ref{table:stop_sign_attack}), $\ell_p$ bounded attacks ( Table~\ref{table:white_box_sensor_linf}), unseen attacks (Table~\ref{table:white_box_sensor_unforeseen}), and common corruptions  (Table~\ref{table:common_corruption}) under whitebox attack setting.
The corresponding results for blackbox setting can be found in Appendix~\ref{appendix:additional_experiment_results}. 
From the tables, we observe that \sys achieves significant \textit{robustness gain} over baselines.
Note that although adversarial training improves the robustness against $\mathcal{L}_{\infty}$ attacks and DOA helps to defend against physical attacks, they are not robust to other types of attacks or corruptions. In contrast, \sys presents general robustness against a range of attacks and corruptions without further adaptation.

\paragraph{Performance stability of \sys}
We conduct additional ablation studies on $\beta$,
representing the prior belief on the benign and adversarial distribution balance. We set $\beta=0.5$ for \sys indicating a balanced random guess for the distribution tradeoff. We show the clean accuracy and robustness of \sys and baselines under diverse 46 attacks in Figure~\ref{fig:general_accuracy}.
We can see that \sys consistently and significantly outperforms the baselines, which indicates the performance stability of \sys regarding different distribution ratio $\beta$. More results can be found in Appendix~\ref{appendix:additional_experiment_results}.

\section{Discussions and Future Work}
In this paper, we propose \sys, which integrates \textit{domain knowledge} with a set of weak auxiliary models to enhance the ML robustness against a diverse set of adversarial attacks and corruptions.
While our framework can be extended to other applications, for any knowledge system, one naturally needs domain experts to design the knowledge rules specific to that application. 
Here we aim to introduce this framework as a prototype, provide a rigorous analysis of it, and demonstrate the benefit of such construction on an application. Nevertheless, there is probably no universal strategy on how to aggregate knowledge for any arbitrary application, and instead, application-specific constructions are needed.  
We do believe that, once the principled framework of knowledge fusion is ready, application-specific developments of knowledge rules will naturally follow, similar to what happened previously for knowledge-enriched joint inference.




\section*{Acknowledgements}
CZ and the DS3Lab gratefully acknowledge the support from the Swiss National Science Foundation (Project Number 200021\_184628), Innosuisse/SNF BRIDGE Discovery (Project Number 40B2-0\_187132), European Union Horizon 2020 Research and Innovation Programme (DAPHNE, 957407), Botnar Research Centre for Child Health, Swiss Data Science Center, Alibaba, Cisco, eBay, Google Focused Research Awards, Oracle Labs, Swisscom, Zurich Insurance, Chinese Scholarship Council, and the Department of Computer Science at ETH Zurich.
BL and the SLLab would like to acknowledge the support from
NSF grant No.1910100, NSF CNS 20-46726 CAR, and Amazon Research Award.

\bibliographystyle{plainnat}
\bibliography{icml2021_conference}
\clearpage
\pagebreak
\onecolumn
\appendix
\label{sec:appendix}
\pagebreak
\section{Proofs}\label{sec: proofs}
\subsection{Preliminaries}\label{subsec: preliminaries}
For completeness, here we recall our setup and introduce further remarks.
\paragraph{Data model}
We begin by recalling our notation. We consider a classification problem under supervised learning setting, defined on a feature space $\mathcal{X}$ and a finite label space $\mathcal{Y}$. We refer to $x\in \mathcal{X}$ as an input, and $y\in \mathcal{Y}$ as the prediction. 
An input $x$ can be a benign example or an
adversarial example. To model this,
we use $z \in \{0, 1\}$, a latent variable which is 
not exposed to \sys.
That is, $x$ is an adversarial example with $(x, y)\sim \adv$ whenever $z=1$, and $(x, y)\sim \benign$ otherwise, where $\adv$ and $\benign$ represent the adversarial and benign data distribution.
We let $\pi_{\adv}=\prob(z=1)$ and $\pi_{\benign}=\prob(z=0)$, implying $\pi_{\adv}+\pi_{\benign}=1$. 
For convenience, we denote $\prob_{\adv}(x, y)=\prob(x, y|z=1)$ and $\prob_{\benign}(x, y)=\prob(x, y|z=0)$.

For simplicity, we describe 
the \sys framework as a binary classification task, in which case $\mathcal{Y}=\{0, 1\}$, noting that the multi-class scenario is a simple extension of it. We introduce the \sys framework as follows.

\paragraph*{Knowledge Integration}
Given 
a data example $(x, y)\sim{\benign}$ or $(x, y)\sim{\adv}$, $y$ is unknown to 
\sys. 
We create 
a factor graph to embed the domain knowledge as follows. 
The outputs of each  model over 
$x$ 
become \textit{input variables}:
$s_*, 
s_{\mathcal{I}}=\{s_i: i \in \mathcal{I}\}, 
s_{\mathcal{J}}=\{s_j: j \in \mathcal{J}\}$.
\sys also has an output variable  $\outputvariable \in \mathcal{Y}$, which corresponds to 
its prediction.
Different models introduce
different types of factors 
connecting these variables:
\begin{itemize}[leftmargin=*, wide = 0pt]
\item Main model:
\sys introduces a factor between 
the main model $\mainsensor$
and the output variable $\outputvariable$ with
factor function $f_{*}(\outputvariable, \mainsensor) = \indicator\{\outputvariable=\mainsensor\}$;
\item Permissive model:
\sys introduces a factor between
each permissive model $s_i$ and
the output variable $\outputvariable$ with
factor function 
$f_{i}(\outputvariable, s_i) = \indicator\{s_i \imply \outputvariable\}$.
\item Preventative model:
\sys introduces a factor between
each preventative model $s_j$ and 
the output variable $\outputvariable$
with factor function $f_{j}(\outputvariable, s_j) = \indicator\{\outputvariable \imply s_j\}$.
\end{itemize}

\paragraph*{Learning with \sys}
To make a prediction, \sys 
outputs the \textit{probability}
of the output variable $o$.
\sys assigns a weight for each
model and constructs the 
following log-linear statistical model:
\begin{align*}\mathbb{P}[o| s_*,
s_{\mathcal{I}}, s_{\mathcal{J}}, w_*, w_{\mathcal{I}}, w_{\mathcal{J}}]\ \ \propto \ \ \exp
\{
b_{o} + w_* f_*(o, s_*) \} \times \exp\big\{
\sum_{i \in \mathcal{I}}
w_i f_i(o, s_i)\big\} \times \exp\big\{
\sum_{j \in \mathcal{J}}
w_j f_j(o, s_j)
\big\}
\end{align*}
where 
$w_*, w_i, w_j$ are
the corresponding weights 
for models $s_*, s_i, s_j$, $w_{\mathcal{I}}=\{w_i: i \in \mathcal{I}\}, 
w_{\mathcal{J}}=\{w_j: j \in \mathcal{J}\}$ and $b_o$ is some bias parameter that depends on $o$.
For the simplicity of exposition,
we use an equivalent notation 
by putting all the weights and  outputs of factor functions
into vectors using an ordering of models. More precisely, we define

\begin{align*}
 \mathbf{w}&=[1;w_*; ( w_i)_{i\in\ci}; (w_j)_{j\in\cj}],\\
 \mathbf{f}_{o}(s_*,
s_{\mathcal{I}}, s_{\mathcal{J}})&=[b_o; f_*(o, s_*); (f_i(o, s_i))_{i\in\ci}; (f_j(o, s_j))_{j\in\cj}],
\end{align*}
for $o\in\mathcal{Y}$.
All concatenated vectors from above are in $\mathbb{R}^{m+n+2}$. Given this, 
an equivalent form of \sys's 
statistical model is
\begin{align}
\mathbb{P}[o | s_*,
s_{\mathcal{I}}, s_{\mathcal{J}}, \mathbf{w}] 
=\frac{1}{Z_{\bf w}} 
\exp(\langle{\bf w}, \mathbf{f}_{o}(s_*,
s_{\mathcal{I}}, s_{\mathcal{J}})\rangle)
\end{align}
where $Z_{\bf w}$ is the normalization constant over $o\in \mathcal{Y}$ such that
\[
Z_{\bf w} = \exp\big(\langle{\bf w}, \mathbf{f}_{0}(s_*, s_{\mathcal{I}}, s_{\mathcal{J}})\rangle\big)+ \exp\big(\langle{\bf w}, \mathbf{f}_{1}(s_*,s_{\mathcal{I}}, s_{\mathcal{J}})\rangle\big).
\]
With some abuse of notation, $\mathbf{w}$ is meant to govern all parameters including weights and biases whenever used with probabilities.

\paragraph*{Weight Learning}
During the training phase of \sys, we choose parameters {\bf w} by performing standard maximum likelihood estimation over a training dataset. Given a particular input instance $x^{\scriptscriptstyle(n)}$, respective model predictions $s_*^{\scriptscriptstyle(n)},
s^{\scriptscriptstyle(n)}_{\mathcal{I}}, s^{\scriptscriptstyle(n)}_{\mathcal{J}}$, and the ground truth label $y^{\scriptscriptstyle(n)}$, we minimize the negative log-likelihood function in view of

\[
 \hat{{\bf w}}=\argmin_{{\bf w}}\Big\{- \sum_n \log \Big( \mathbb{P}[o^{\scriptscriptstyle(n)}=y^{\scriptscriptstyle(n)} |s_*^{\scriptscriptstyle(n)},
s^{\scriptscriptstyle(n)}_{\mathcal{I}}, s^{\scriptscriptstyle(n)}_{\mathcal{J}}, \mathbf{w}]\Big) \Big\}.
\]

\paragraph*{Inference}
During the inference phase of \sys, given an input example $\hat x$, we predict $\hat y$ that has the largest probability given the respective model predictions $\hat s_*,
\hat s_{\mathcal{I}}, \hat  s_{\mathcal{J}}$, namely, $\hat y=\argmax_{\tilde y\in\mathcal{Y}} \mathbb{P}[o =\tilde y| \hat  s_*, \hat s_{\mathcal{I}},\hat  s_{\mathcal{J}}, \hat{\mathbf{w}}]$.

\paragraph*{Weighted Robust Accuracy}

Previous theoretical analysis on 
ML robustness~\cite{javanmard2020precise,xu2009robustness,raghunathan2020understanding} have identified 
two natural dimensions of model 
quality: \textit{clean accuracy}
and \textit{robust accuracy},
which are the accuracy of a given ML
model on inputs $x$ drawn from 
either the benign distribution 
$\mathcal{D}_b$ or 
 adversarial distribution $\mathcal{D}_a$.
In this paper, to balance their tradeoff, we use their weighted
average as
our main metric of interest. That is, 
given a classifier $h: \mathcal{X} \rightarrow \mathcal{Y}$ we define its \emph{\Acc} as
\begin{align*}
\mathcal{A}_h = \pi_{D_a} 
   \mathbb{P}_{\mathcal{D}_a}[h(x)=y]
   +
   \pi_{D_b} 
   \mathbb{P}_{\mathcal{D}_b}[h(x)=y].
\end{align*}
We use 
$\mathcal{A}^{\text{\sys}}$ and 
$\mathcal{A}^{\text{main}}$ to denote 
the weighted robust accuracies of 
\sys and main task model, respectively.

\paragraph*{Modeling Assumptions} We assume that for a fixed $z$, that is, for a fixed $\distset$, the models make independent errors given the target variable $y$. Thus, for all $\distset$ the class conditional distribution can be decomposed as
\[
\prob_{\dist}[s_*,
s_{\mathcal{I}}, s_{\mathcal{J}}|y]=\prob_{\dist}[s_*|y]\prod_{i\in\mathcal{I}}\prob_{\dist}[s_{i}|y]\prod_{j\in\mathcal{J}}\prob_{\dist}[s_{j}|y].
\]
We also assume for simplicity that the main task model makes symmetric errors given the class of target variable, that is, ${\prob_{\dist}[s_*\neq y|y]}$ is fixed with respect to $y$ for all $\distset$. 

\paragraph*{Characterizing Models: Truth Rate ($\alpha$) and False Rate ($\epsilon$)}
Each auxiliary 
model $k \in \mathcal{I} \cup \mathcal{J}$ is characterized 
by two values, their
truth rate ($\alpha$) and
false rate 
($\epsilon$)
over benign and adversarial 
distributions. These
values measure the 
\textit{consistency} 
of the model with 
the ground truth:
\begin{align*}
& \text{Permissive Models:} \\
& \alpha_{i,\dist}:=\prob_{\dist}[s_i=y|y=1], \hspace{.6em} \epsilon_{i,\dist}:=\prob_{\dist}[s_i\neq y|y=0]
\end{align*}
\vspace{-2em}
\begin{align*}
& \text{Preventative Models:} \\
& \alpha_{j,\dist}:=\prob_{\dist}[s_j=y|y=0], \hspace{.6em} \epsilon_{j,\dist}:=\prob_{\dist}[s_j\neq y|y=1]
\end{align*}

Note that, given the 
asymmetric nature 
of these auxiliary models,
we do \textit{not}
necessarily have 
$\epsilon_{k, \mathcal{D}}
= 1 - \alpha_{k, \mathcal{D}}$.
In addition, for a high quality permissive 
model ($k\in\ci)$, or a high quality preventative 
model ($k\in\cj$)
for which the logic rules mostly 
hold, we expect $\alpha_{k,\dist}$ to be large and $\epsilon_{k,\dist}$ to be small. 

We define the truth rate of main model over data examples drawn from $\distset$ as $\alpha_{*,\dist}:=\prob_{\dist}(s_*=y)$, and its false rate as $\epsilon_{*,\dist}:=\prob_{\dist}(s_*\neq y)=1-\alpha_{*,\dist}$. 

These characteristics are of integral importance to weighted robust accuracy of \sys. To combine all the models together, we define upper and lower bounds to truth rates and false rates. For the main model, we have ${}_{\scriptscriptstyle \wedge}\alpha_{*}:=\min_{\dist}\alpha_{*,\dist}$ and ${}_{\scriptscriptstyle \vee}\alpha_{*}:=\max_{\dist}\alpha_{*,\dist}$. whereas for auxiliary models, for each model index $k \in \ci \cup \cj$, we have
\begin{align*}
   & {}_{\scriptscriptstyle \wedge}\alpha_{k}:=\min_{\dist}\alpha_{k,\dist}, \hspace{.8em} {}_{\scriptscriptstyle \wedge}\epsilon_{k}:=\min_{\dist}\epsilon_{k,\dist}\\
   & {}_{\scriptscriptstyle \vee}\alpha_{k}:=\max_{\dist}\alpha_{k,\dist}, \hspace{.8em} {}_{\scriptscriptstyle \vee}\epsilon_{k}:=\max_{\dist}\epsilon_{k,\dist}.
\end{align*}

\subsection{Parameters}\label{subsec: generative model parameters}
In this section we will derive the closed-form expressions for the parameters based on our generative model, namely, weights and biases.

To make a prediction, \sys 
outputs the \textit{marginal probability}
of the output variable $o$.
\sys assigns a weight for each
model and constructs the 
following statistical model:
\begin{align*}
& \mathbb{P}[o | s_*,
s_{\mathcal{I}}, s_{\mathcal{J}}, \mathbf{w}] \ \ \propto \ \
\exp
\{
b_{o} + w_* f_*(o, s_*) \} \times \exp\big\{
\sum_{i \in \mathcal{I}}
w_i f_i(o, s_i)\big\} \times \exp\big\{
\sum_{j \in \mathcal{J}}
w_j f_j(o, s_j)
\big\},
\end{align*}
where 
$w_*, w_i, w_j$ are
the corresponding weights 
for models $s_*, s_i, s_j$, and $b_o$ is some bias parameter that depends on $o$.
For the simplicity of exposition,
we use an equivalent notation 
by putting all the weights and  outputs of factor functions
into vectors using an ordering of models. More precisely, we define

{\vspace{-2em}
\begin{align*}
 \mathbf{w}&=[1;w_*; ( w_i)_{i\in\ci}; (w_j)_{j\in\cj}],\\
 \mathbf{f}_{o}(s_*,
s_{\mathcal{I}}, s_{\mathcal{J}})&=[b_o; f_*(o, s_*); (f_i(o, s_i))_{i\in\ci}; (f_j(o, s_j))_{j\in\cj}],
\end{align*}
}\normalsize
for $o\in\mathcal{Y}$.
All concatenated vectors from above 
are in $\mathbb{R}^{m+n+2}$. Given this, 
an equivalent form of \sys's 
statistical model is
\begin{align}\label{eqn: class likelihoods (appendix)}
\mathbb{P}[o | s_*,
s_{\mathcal{I}}, s_{\mathcal{J}}, \mathbf{w}] 
=\frac{1}{Z_{\bf w}} 
\exp(\langle{\bf w}, \mathbf{f}_{o}(s_*,
s_{\mathcal{I}}, s_{\mathcal{J}})\rangle),
\end{align}
where $Z_{\bf w}$ is the normalization constant over $o\in \mathcal{Y}$. We can further show that
\begin{equation}\label{eqn: class conditional output}
    \begin{split}
         \prob[o=\tilde{y}|\sensors, \mathbf{w}]&=\frac{\prob[o=\tilde{y}|\sensors, \mathbf{w}]}{\prob[o=\tilde{y}|\sensors, \mathbf{w}]+\prob[o=1-\tilde{y}|\sensors, \mathbf{w}]}\\
    &=\frac{\exp(\langle{\bf w}, \mathbf{f}_{y}(s_*,
    s_{\mathcal{I}}, s_{\mathcal{J}})\rangle)}{\exp(\langle{\bf w}, \mathbf{f}_{\tilde{y}}(s_*,
    s_{\mathcal{I}}, s_{\mathcal{J}})\rangle)+\exp(\langle{\bf w}, \mathbf{f}_{1-\tilde{y}}(s_*,
    s_{\mathcal{I}}, s_{\mathcal{J}})\rangle)}\\
    &=\frac{1}{1+\exp(-\Delta_{\bf w} (\tilde{y},s_*,s_{\mathcal{I}}, s_{\mathcal{J}}))}
    \end{split}
\end{equation}

where $\Delta_{\bf w} (\tilde{y},s_*,s_{\mathcal{I}}, s_{\mathcal{J}})$ is previously defined as
\[
\Delta_{\bf w} (\tilde{y},s_*,s_{\mathcal{I}}, s_{\mathcal{J}}):=\langle{\bf w}, \mathbf{f}_{\tilde{y}}(s_*,s_{\mathcal{I}}, s_{\mathcal{J}})-\mathbf{f}_{1-\tilde{y}}(s_*,s_{\mathcal{I}}, s_{\mathcal{J}})\rangle.
\]

Therefore, we have
\begin{equation}\label{eqn: sigmoid representation}
  \prob[o=\tilde{y}|\sensors, \mathbf{w}]=\sigma(\Delta_{\bf w} (\tilde{y},s_*,s_{\mathcal{I}}, s_{\mathcal{J}}))  
\end{equation}
where $\sigma: \mathbb{R}\mapsto [0, 1]$ is the Sigmoid function.

\begin{remark}[Closed form expression of $\Delta_{\bf w} (\tilde{y},s_*,s_{\mathcal{I}}, s_{\mathcal{J}})$]\label{remark: closed form delta}
Recalling our knowledge integration rules, it can be shown that
\begin{align*}
  \Delta_{\bf w} (\tilde{y},s_*,s_{\mathcal{I}}, s_{\mathcal{J}})&=
  \langle{\bf w}, \mathbf{f}_{\tilde{y}}(s_*,s_{\mathcal{I}}, s_{\mathcal{J}})-\mathbf{f}_{1-\tilde{y}}(s_*,s_{\mathcal{I}}, s_{\mathcal{J}})\rangle\\
  &=b({\tilde{y}})+w_*\big(f_*(\tilde{y},s_*)-f_*(1-\tilde{y},s_*)\big)+\sum_{i\in \ci}w_i\big(f_i(\tilde{y},s_i)-f_i(1-\tilde{y},s_i)\big)\\
  &+\sum_{j\in \cj}w_i\big(f_j(\tilde{y},s_j)-f_j(1-\tilde{y},s_j)\big)
\end{align*}
where $b({\tilde{y}})=b_{\tilde{y}}-b_{1-\tilde{y}}$. Let $b:=b_1-b_0$. Then $b({\tilde{y}})=(2\tilde{y}-1)b$.

Using the logical rules, we moreover have
\begin{align*}
    &f_*(\tilde{y},s_*)-f_*(1-\tilde{y},s_*) =\indicator\{\tilde{y}=s_*\}-\indicator\{1-\tilde{y}=s_*\}=(2\tilde{y}-1)(2s_*-1)\\
    &f_i(\tilde{y},s_i)-f_i(1-\tilde{y},s_i) =\indicator\{s_i\implies \tilde{y}\}-\indicator\{s_i\implies 1-\tilde{y}\}=(2\tilde{y}-1)s_i\\
    &f_j(\tilde{y},s_j)-f_j(1-\tilde{y},s_j) =\indicator\{\tilde{y}\implies s_j\}-\indicator\{1-\tilde{y}\implies s_j\}=(2\tilde{y}-1)(s_j-1)=-(2\tilde{y}-1)(1-s_j).
\end{align*}

Therefore, the closed form expression for $\Delta_{\bf w} (\tilde{y},s_*,s_{\mathcal{I}}, s_{\mathcal{J}})$ is given by
\begin{align*}
    \Delta_{\bf w} (\tilde{y},s_*,s_{\mathcal{I}}, s_{\mathcal{J}})=(2\tilde{y}-1)\Big(b+w_*(2s_*-1)+\sum_{i\in \ci}w_is_i-\sum_{j\in \cj} w_j(1-s_j)\Big)
\end{align*}
\end{remark}

\begin{remark}[Optimal parameters]\label{remark: optimal parameters derivation}
We now analyze the class conditional distribution $\prob[y|\sensors]$. Optimal set of parameters for our generative model must satisfy:
\begin{equation}\label{eqn: class conditional y}
    \begin{split}
            &\prob[y=\tilde{y}|\sensors]=\frac{\prob[y=\tilde{y},\sensors]}{\prob[\sensors]}=\frac{\prob[y=\tilde{y},\sensors]}{\prob[y=\tilde{y}, \sensors]+\prob[y=1-\tilde{y}, \sensors]}\\
    &=\frac{1}{1+\frac{\prob[y=1-\tilde{y},\sensors]}{\prob[y=\tilde{y},\sensors]}}
    =\frac{1}{1+\exp\big(\log\frac{\prob[y=1-\tilde{y},\sensors]}{\prob[y=\tilde{y},\sensors]}\big)}=\frac{1}{1+\exp\big(-\log\frac{\prob[y=\tilde{y},\sensors]}{\prob[y=1-\tilde{y},\sensors]}\big)}.
    \end{split}
\end{equation}
Note that, the optimal parameters satisfy
\[
\prob[o=\tilde{y}|\sensors]=\prob[y=\tilde{y}|\sensors].
\]
Hence, combining (\ref{eqn: class conditional output}) and (\ref{eqn: class conditional y}) as well as Remark~\ref{remark: closed form delta} we further have
\begin{equation}\label{eqn: equating y and o}
    \begin{split}
       \log\frac{\prob[y=\tilde{y},\sensors]}{\prob[y=1-\tilde{y},\sensors]} = (2\tilde{y}-1)\Big(b+w_*(2s_*-1)+\sum_{i\in \ci}w_is_i-\sum_{j\in \cj} w_j(1-s_j)\Big).
    \end{split}
\end{equation}
\end{remark}
Above remark indicates the condition that the optimal parameters must satisfy.

\subsection{Proof of Lemma~\ref{lemma: expected tail}}\label{subsec: proof of lemma}
Recall that for each model index $k \in \ci \cup \cj$ we define upper and lower bounds to truth rates and false rates as 
\begin{align*}
   & {}_{\scriptscriptstyle \wedge}\alpha_{k}:=\min_{\dist}\alpha_{k,\dist}, \hspace{.8em} {}_{\scriptscriptstyle \wedge}\epsilon_{k}:=\min_{\dist}\epsilon_{k,\dist}\\
   & {}_{\scriptscriptstyle \vee}\alpha_{k}:=\max_{\dist}\alpha_{k,\dist}, \hspace{.8em} {}_{\scriptscriptstyle \vee}\epsilon_{k}:=\max_{\dist}\epsilon_{k,\dist}.
\end{align*}

Next, we revisit Lemma~\ref{lemma: expected tail} towards its proof.

\begin{lemma*}[\recallthm]
Let $\Delta_{\bf w}$ be a random variable defined above. Suppose that \sys uses optimal parameters {\bf w} such that $\prob[y|s_*,s_{\mathcal{I}}, s_{\mathcal{J}}]=\prob[o|s_*,s_{\mathcal{I}}, s_{\mathcal{J}}, \mathbf{w}]$. Let also $r_y$ denote the log-ratio of class imbalance $\log\frac{\prob[y=1]}{\prob[y=0]}$. For a fixed $y\in\mathcal{Y}$ and $\distset$, one has
\begin{align*}
 \mathbb{E}_{s_*,s_{\mathcal{I}}, s_{\mathcal{J}}}[\Delta_{\bf w}(y,s_*,s_{\mathcal{I}}, s_{\mathcal{J}})|y] \nonumber \geq  \mu_{d_{*,\dist}}+ y \mu_{d_{\mathcal{I},\dist}} + (1-y) \mu_{d_{\mathcal{J},\dist}}+(2y-1)r_y:=\mu_{y,\dist},
\end{align*} 
where
\begin{equation*}
    \mu_{d_{*,\dist}}=\alpha_{*, \dist}\log\frac{{}_{\scriptscriptstyle\wedge}\alpha_{*}}{1-{}_{\scriptscriptstyle\wedge}\alpha_{*}}+(1-\alpha_{*, \dist})\log\frac{1-{}_{\scriptscriptstyle\vee}\alpha_{*}}{{}_{\scriptscriptstyle\vee}\alpha_{*}},
\end{equation*}
\begin{equation*}
    \begin{split}
        \mu_{d_{\mathcal{I},\dist}}=\sum_{i\in \mathcal{I}}\alpha_{i, \dist}\log\frac{{}_{\scriptscriptstyle\wedge}\alpha_{i}}{{}_{\scriptscriptstyle\vee}\epsilon_{i}}+(1-\alpha_{i, \dist})\log\frac{1-{}_{\scriptscriptstyle\vee}\alpha_{i}}{1-{}_{\scriptscriptstyle\wedge}\epsilon_{i}}-\sum_{j\in \mathcal{J}}\epsilon_{j, \dist}\log\frac{{}_{\scriptscriptstyle\vee}\alpha_{j}}{{}_{\scriptscriptstyle\wedge}\epsilon_{j}}-(1-\epsilon_{j, \dist})\log\frac{1-{}_{\scriptscriptstyle\wedge}\alpha_{j}}{1-{}_{\scriptscriptstyle\vee}\epsilon_{j}},
    \end{split}
\end{equation*}
and
\begin{equation*}
    \begin{split}
        \mu_{d_{\mathcal{J},\dist}}=\sum_{j\in \mathcal{J}}\alpha_{j, \dist}\log\frac{{}_{\scriptscriptstyle\wedge}\alpha_{j}}{{}_{\scriptscriptstyle\vee}\epsilon_{j}}+(1-\alpha_{j, \dist})\log\frac{1-{}_{\scriptscriptstyle\vee}\alpha_{j}}{1-{}_{\scriptscriptstyle\wedge}\epsilon_{j}}-\sum_{i\in \mathcal{I}}\epsilon_{i, \dist}\log\frac{{}_{\scriptscriptstyle\vee}\alpha_{i}}{{}_{\scriptscriptstyle\wedge}\epsilon_{i}}-(1-\epsilon_{i, \dist})\log\frac{1-{}_{\scriptscriptstyle\wedge}\alpha_{i}}{1-{}_{\scriptscriptstyle\vee}\epsilon_{i}}.
    \end{split}
\end{equation*}
\end{lemma*}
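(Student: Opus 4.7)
The plan is to combine the closed-form expression for $\Delta_{\bf w}$ from Remark~\ref{remark: closed form delta} with the optimality condition in Remark~\ref{remark: optimal parameters derivation}, and then compute the conditional expectation using the modeling assumption. First I would rewrite
\begin{align*}
\Delta_{\bf w}(y, s_*, s_\ci, s_\cj) = (2y-1)\Bigl(b + w_*(2s_*-1) + \sum_{i\in\ci} w_i s_i - \sum_{j\in\cj} w_j(1-s_j)\Bigr)
\end{align*}
and split it into additive components
\begin{align*}
\Delta_{\bf w}(y, s_*, s_\ci, s_\cj) = d_{*,\dist} + y\, d_{\ci,\dist} + (1-y)\, d_{\cj,\dist} + (2y-1) r_y,
\end{align*}
where $d_{*,\dist}$ gathers the main-model contribution, and $d_{\ci,\dist}, d_{\cj,\dist}$ collect the auxiliary-model terms active when $y=1$ or $y=0$, with the prior log-ratio $r_y$ peeled off from $b$.

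Next I would identify the optimal parameters. Using conditional independence from the modeling assumption, the optimality condition factorizes as
\begin{align*}
\log\frac{\prob_{\dist}[y=\tilde y,\, s_*, s_\ci, s_\cj]}{\prob_{\dist}[y=1-\tilde y,\, s_*, s_\ci, s_\cj]} = r_y + \log\frac{\prob_{\dist}[s_*\mid \tilde y]}{\prob_{\dist}[s_*\mid 1-\tilde y]} + \sum_{i\in\ci}\log\frac{\prob_{\dist}[s_i\mid \tilde y]}{\prob_{\dist}[s_i\mid 1-\tilde y]} + \sum_{j\in\cj}\log\frac{\prob_{\dist}[s_j\mid \tilde y]}{\prob_{\dist}[s_j\mid 1-\tilde y]}.
\end{align*}
Matching this term-by-term with the polynomial expression for $\Delta_{\bf w}$ and plugging in the definitions of $\alpha_{*,\dist}, \alpha_{k,\dist}, \epsilon_{k,\dist}$ yields closed forms for $b, w_*, w_i, w_j$. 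Because a single set of weights must serve both $\benign$ and $\adv$ simultaneously, each log-likelihood ratio is replaced by its worst case across $\{\benign,\adv\}$: upper bounds where the quantity appears negatively and lower bounds where it appears positively. This is precisely where ${}_{\scriptscriptstyle\wedge}\alpha_*, {}_{\scriptscriptstyle\vee}\alpha_*, {}_{\scriptscriptstyle\wedge}\alpha_k, {}_{\scriptscriptstyle\vee}\alpha_k, {}_{\scriptscriptstyle\wedge}\epsilon_k, {}_{\scriptscriptstyle\vee}\epsilon_k$ enter, and what turns the final statement into an inequality rather than an equality.

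Finally I would compute $\mathbb{E}[\Delta_{\bf w}\mid y]$ under a fixed $\dist$ by linearity, substituting $\mathbb{E}_{\dist}[s_*\mid y] = y\alpha_{*,\dist} + (1-y)(1-\alpha_{*,\dist})$, $\mathbb{E}_{\dist}[s_i\mid y] = y\alpha_{i,\dist} + (1-y)\epsilon_{i,\dist}$, and $\mathbb{E}_{\dist}[s_j\mid y] = y(1-\epsilon_{j,\dist}) + (1-y)(1-\alpha_{j,\dist})$. Each summand becomes a linear combination of the worst-case log-ratios, and on regrouping the contributions attached to each model and each class the algebra collapses into exactly $\mu_{d_{*,\dist}}, \mu_{d_{\ci,\dist}}, \mu_{d_{\cj,\dist}}$ as defined in the statement, with the $(2y-1)r_y$ term surviving unchanged. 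Summation then gives $\mu_{y,\dist}$.

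The main obstacle is the parameter-matching step: because the optimality condition pins the weights distribution-by-distribution but \sys uses one shared weight vector, the proof must carefully choose for each log-ratio the right direction of the worst-case bound so that $\mathbb{E}[\Delta_{\bf w}\mid y] \geq \mu_{y,\dist}$ comes out with the correct sign on every term. This bookkeeping is particularly delicate for preventative factors, whose $w_j(1-s_j)$ form flips some signs relative to the permissive case, and it is the reason $\mu_{d_{\ci,\dist}}$ contains contributions from both $\alpha_i$ (permissive) and $\epsilon_j$ (preventative), and symmetrically for $\mu_{d_{\cj,\dist}}$.
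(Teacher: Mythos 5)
Your proposal follows essentially the same route as the paper: decompose $\Delta_{\bf w}$ into $d_{*,\dist}+y\,d_{\ci,\dist}+(1-y)\,d_{\cj,\dist}+(2y-1)r_y$, bound the optimal weights by worst-casing the relevant log-likelihood ratios over $\{\benign,\adv\}$ via the $\wedge/\vee$ truth and false rates, and then take conditional expectations term by term. The one imprecision is that you write the factorized optimality condition per-distribution, whereas the true condition involves the mixture $\sum_{\dist}\pi_{\dist}\prob_{\dist}[\cdot]$, which does not factorize; the paper resolves this by a chain-rule decomposition conditioning on previously ordered sensors and noting that each resulting conditional ratio is a convex combination of the per-distribution quantities, hence sandwiched by exactly the bounds you invoke, so your argument lands in the same place.
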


\begin{proof}[Proof of Lemma~\ref{lemma: expected tail}]
We show earlier that the optimal parameters satisfy (\ref{eqn: equating y and o}).
Note that the probabilities on the left hand side of~(\ref{eqn: equating y and o}) are mixtures over both the benign and adversarial distributions. Namely,
\[
\prob[y=\tilde{y},\sensors]=\sum_{\distset}\pi_{\dist}\prob_{\dist}[y=\tilde{y},\sensors].
\]
Recall from our modeling assumptions that models are conditionally independent given $y$ with $\prob_{\dist}[\sensors|y=\tilde{y}]=\prob_{\dist}[s_*|y=\tilde{y}]\prod_{i\in\ci}\prob_{\dist}[s_i|y=\tilde{y}]\prod_{j\in\cj}\prob_{\dist}[s_j|y=\tilde{y}]$. Therefore, without loss of generality, this holds not for $\prob[y=\tilde{y},\sensors]$. That is, each parameter is to encode this dependency structure and must be a function of some set of models. Below we propose a strategy to choose optimal weights to satisfy (\ref{eqn: equating y and o}).

We start by decomposing $\log\frac{\prob[y=\tilde{y}, \sensors]}{\prob[y=1-\tilde{y}, \sensors]}$.
\begin{align*}
    \log\frac{\prob[y=\tilde{y}, \sensors]}{\prob[y=1-\tilde{y}, \sensors]}=\log\frac{\prob[y=\tilde{y}, s_*]}{\prob[y=1-\tilde{y}, s_*]}+\sum_{i\in\mathcal{I}}\log\frac{\prob[s_i|y=\tilde{y},s_{I_i}]}{\prob[s_i|y=1-\tilde{y},s_{I_i}]}+\sum_{j\in\mathcal{J}}\log\frac{\prob[s_j|y=\tilde{y},s_{I},s_{J_j}]}{\prob[s_j|y=1-\tilde{y},s_{I},s_{J_j}]}
\end{align*}
where $I_i$ is the set of $i'$ such that $i'\in\mathcal{I}$ and $i'<i$. Similarly, we let $J_j$ be the set of $j'$ such that $j'\in\mathcal{J}$ and $j'<j$. Note that there are multiple such constructions to satisfy~(\ref{eqn: equating y and o}) to have optimal set of weights.

We split our proof into three main steps as follows.

\paragraph{Step 1: Derivation of bounds for optimal set of parameters}
Given our strategy, we then derive the parameters in terms of conditional probabilities of individual models. Towards that, let $b$ be decomposed into its additive components such that $b=b_*+\sum_{i\in \ci} b_i-\sum_{j\in \cj} b_j$. Let also $r_y=\log \frac{\prob[y=1]}{\prob[y=0]}$. We derive bounds for each sensor using (\ref{eqn: equating y and o}) as follows.

\begin{itemize}[leftmargin=*, wide = 0pt]
\item \textit{Main task model}: The parameters for the main model simply satisfies
\[
(2\tilde{y}-1)\big(w_*(2s_*-1)+b_*\big)=\log\frac{\prob[y=\tilde{y}, s_*]}{\prob[y=1-\tilde{y}, s_*]}=\log\frac{\sum_{\distset}\pi_{\dist}\prob_{\dist}[y=\tilde{y}, s_*]}{\sum_{\distset}\pi_{\dist}\prob_{\dist}[y=1-\tilde{y}, s_*]}.
\]
With a simple algebraic manipulation where $y=1$ and $s_*=1$ (resp. for $y=0$, $s_*=1$), we have that
\begin{equation}\label{eqn: w_*+b_*}
    w_*+b_*=\log\frac{\prob[y=1, s_*=1]}{\prob[y=0, s_*=1]}
\end{equation}
and for $y=0$ and $s_*=0$ (resp. for $y=1$, $s_*=0$)
\begin{equation}\label{eqn: -w_*+b_*}
    w_*-b_*=\log\frac{\prob[y=0, s_*=0]}{\prob[y=1, s_*=0]}.
\end{equation}

Combining (\ref{eqn: w_*+b_*}) and (\ref{eqn: -w_*+b_*}) we have
\begin{equation}\label{eqn: w_*}
\begin{split}
    w_* &= \frac{1}{2} \log \frac{\prob[y=1, s_*=1]}{\prob[y=0, s_*=1]}\frac{\prob[y=0, s_*=0]}{\prob[y=1, s_*=0]}\\
    &\stackrel{(*)}{=}\frac{1}{2}\log\frac{\big(\sum_{\distset}\pi_{\dist}\prob_{\dist}[y=1]\alpha_{*,\dist}\big)\big(\sum_{\distset}\pi_{\dist}\prob_{\dist}[y=0]\alpha_{*,\dist}\big)}{\big(\sum_{\distset}\pi_{\dist}\prob_{\dist}[y=1](1-\alpha_{*,\dist})\big)\big(\sum_{\distset}\pi_{\dist}\prob_{\dist}[y=0](1-\alpha_{*,\dist})\big)}
    \end{split}
\end{equation}
where (*) follows from that $\prob_{\dist}[y=s_*|y]=\alpha_{*, \dist}$ and $\prob_{\dist}[y\neq s_*|y]=1-\alpha_{*, \dist}$ for $\distset$.

Similarly, for $b_*$ we have
\begin{equation}\label{eqn: b_*}
\begin{split}
    b_* &= \frac{1}{2} \log \frac{\prob[y=1, s_*=1]}{\prob[y=0, s_*=1]}\frac{\prob[y=1, s_*=0]}{\prob[y=0, s_*=0]}\\
    &{=}\frac{1}{2}\log\frac{\big(\sum_{\distset}\pi_{\dist}\prob_{\dist}[y=1]\alpha_{*,\dist}\big)\big(\sum_{\distset}\pi_{\dist}\prob_{\dist}[y=1](1-\alpha_{*,\dist})\big)}{\big(\sum_{\distset}\pi_{\dist}\prob_{\dist}[y=0](1-\alpha_{*,\dist})\big)\big(\sum_{\distset}\pi_{\dist}\prob_{\dist}[y=0]\alpha_{*,\dist}\big)}.
    \end{split}
\end{equation}

Finally, noting that, for all $\tilde y \in \mathcal{Y}$, we have
\begin{equation*}
    {}_{\scriptscriptstyle \wedge}\alpha_* \hspace{-0.7em} \sum_{\distset}\hspace{-0.5em}\pi_{\dist}\prob_{\dist}[y=\tilde y]={}_{\wedge}\alpha_*\prob[y=\tilde y]\leq \sum_{\distset}\hspace{-0.5em}\pi_{\dist}\prob_{\dist}[y=\tilde y]\alpha_{*,\dist} \leq {}_{\scriptscriptstyle \vee}\alpha_* \hspace{-0.7em}\sum_{\distset}\hspace{-0.5em}\pi_{\dist}\prob_{\dist}[y=\tilde y]={}_{\scriptscriptstyle \vee}\alpha_*\prob[y=\tilde y].
\end{equation*}

Using the above relation as well as (\ref{eqn: w_*}) and (\ref{eqn: b_*}), the weight and bias of the main task model, $w_*$ and $b_*$, can therefore be bounded as

\begin{equation}\label{eqn: w_* bound}
   \log \frac{{}_{\scriptscriptstyle \wedge}\alpha_*}{1-{}_{\scriptscriptstyle \wedge}\alpha_*} \leq w_* \leq \log \frac{{}_{\scriptscriptstyle \vee}\alpha_*}{1-{}_{\scriptscriptstyle \vee}\alpha_*}
\end{equation}
and

\begin{equation}\label{eqn: b_* bound}
    r_y + \log \frac{{}_{\scriptscriptstyle \wedge}\alpha_*(1-{}_{\scriptscriptstyle \vee}\alpha_*)}{(1-{}_{\scriptscriptstyle \wedge}\alpha_*){}_{\scriptscriptstyle \vee}\alpha_*} \leq b_* \leq r_y + \log \frac{{}_{\scriptscriptstyle \vee}\alpha_*(1-{}_{\scriptscriptstyle \wedge}\alpha_*)}{(1-{}_{\scriptscriptstyle \vee}\alpha_*){}_{\scriptscriptstyle \wedge}\alpha_*}.
\end{equation}

To distinguish the effect of class imbalance in our analysis, we will define $b_{**}:=b_*-r_y$.

\item \textit{Permissive models}: For permissive model, we have
\[
\log\frac{\prob[s_i|y=\tilde{y},s_{I_i}]}{\prob[s_i|y=1-\tilde{y},s_{I_i}]} = (2\tilde{y}-1)(w_is_i+b_i).
\]
Therefore
\begin{align*}
    \log\frac{\prob[s_i|y=\tilde{y},s_{I_i}]}{\prob[s_i|y=1-\tilde{y},s_{I_i}]}=\log\frac{\frac{\prob[s_i,y=\tilde{y},s_{I_i}]}{\prob[y=\tilde{y},s_{I_i}]}}{\frac{\prob[s_i,y=1-\tilde{y},s_{I_i}]}{\prob[y=1-\tilde{y},s_{I_i}]}}\stackrel{(*)}{=}\log \frac{
    \frac{\sum_{\distset} \pi_{\dist}\prob_{\dist}[y=\tilde{y},s_{I_i}]\prob_{\dist}[s_{i}|y=\tilde{y}]}{\sum_{\distset} \pi_{\dist}\prob_{\dist}[y=\tilde{y},s_{I_i}]}}{\frac{\sum_{\distset} \pi_{\dist}\prob_{\dist}[y=1-\tilde{y},s_{I_i}]\prob_{\dist}[s_{i}|y=1-\tilde{y}]}{\sum_{\distset} \pi_{\dist}\prob_{\dist}[y=1-\tilde{y},s_{I_i}]}}
\end{align*}
where (*) follows from the conditional independence assumption. \\

Let $\tilde{y}=1$. Therefore, for $s_i=1$ we have 

\begin{align*}
     \min_{\dist}\alpha_{i, \dist}={}_{\scriptscriptstyle \wedge}\alpha_{i} \leq \frac{\sum_{\distset} \pi_{\dist}\prob_{\dist}[y=\tilde{y},s_{I_i}]\prob_{\dist}[s_{i}|y=\tilde{y}]}{\sum_{\distset} \pi_{\dist}\prob_{\dist}[y=\tilde{y},s_{I_i}]}\leq \max_{\dist}\alpha_{i, \dist}={}_{\scriptscriptstyle \vee}\alpha_{i}
\end{align*}
and 
\begin{align*}
     \min_{\dist}\epsilon_{i, \dist}={}_{\scriptscriptstyle \wedge}\epsilon_{i} \leq \frac{\sum_{\distset} \pi_{\dist}\prob_{\dist}[y=1-\tilde{y},s_{I_i}]\prob_{\dist}[s_{i}|y=1-\tilde{y}]}{\sum_{\distset} \pi_{\dist}\prob_{\dist}[y=1-\tilde{y},s_{I_i}]}\leq \max_{\dist}\epsilon_{i, \dist}={}_{\scriptscriptstyle \vee}\epsilon_{i}.
\end{align*}

Above bounds finally lead to
\begin{equation}\label{eqn: w_i+b+i}
\log\frac{{}_{\scriptscriptstyle \wedge}\alpha_{i}}{{}_{\scriptscriptstyle \vee}\epsilon_{i}} \leq \log\frac{\prob[s_i|y=\tilde{y},s_{I_i}]}{\prob[s_i|y=1-\tilde{y},s_{I_i}]} = w_i+b_i \leq \log\frac{{}_{\scriptscriptstyle \vee}\alpha_{i}}{{}_{\scriptscriptstyle \wedge}\epsilon_{i}}.
\end{equation}

Next, we let $s_i=0$. Repeating the same technique above, we have
\begin{align*}
     \min_{\dist}1-\alpha_{i, \dist}=1-{}_{\scriptscriptstyle \vee}\alpha_{i} \leq \frac{\sum_{\distset} \pi_{\dist}\prob_{\dist}[y=\tilde{y},s_{I_i}]\prob_{\dist}[s_{i}|y=\tilde{y}]}{\sum_{\distset} \pi_{\dist}\prob_{\dist}[y=\tilde{y},s_{I_i}]}\leq \max_{\dist}1-\alpha_{i, \dist}=1-{}_{\scriptscriptstyle \wedge}\alpha_{i}
\end{align*}
and 
\begin{align*}
     \min_{\dist}1-\epsilon_{i, \dist}=1-{}_{\scriptscriptstyle \vee}\epsilon_{i} \leq \frac{\sum_{\distset} \pi_{\dist}\prob_{\dist}[y=1-\tilde{y},s_{I_i}]\prob_{\dist}[s_{i}|y=1-\tilde{y}]}{\sum_{\distset} \pi_{\dist}\prob_{\dist}[y=1-\tilde{y},s_{I_i}]}\leq \max_{\dist}1-\epsilon_{i, \dist}=1-{}_{\scriptscriptstyle \wedge}\epsilon_{i}.
\end{align*}

Above bounds finally lead to
\begin{equation}\label{eqn: b+i}
\log\frac{1-{}_{\scriptscriptstyle \vee}\alpha_{i}}{1-{}_{\scriptscriptstyle \wedge}\epsilon_{i}} \leq \log\frac{\prob[s_i|y=\tilde{y},s_{I_i}]}{\prob[s_i|y=1-\tilde{y},s_{I_i}]} = b_i \leq \log\frac{1-{}_{\scriptscriptstyle \wedge}\alpha_{i}}{1-{}_{\scriptscriptstyle \vee}\epsilon_{i}}.
\end{equation}
Note that the same conclusion can be drawn for $\tilde{y}=0$.\\

\item \textit{Preventative models}: For preventative model, we have
\[
\log\frac{\prob[s_j|y=\tilde{y},s_I,s_{J_j}]}{\prob[s_j|y=1-\tilde{y},s_I,s_{J_j}]} = -(2\tilde{y}-1)(w_j(1-s_j)+b_j).
\]
Then
\begin{align*}
    \log\frac{\prob[s_j|y=\tilde{y},s_I,s_{J_j}]}{\prob[s_j|y=1-\tilde{y},s_I,s_{J_j}]}=\log\frac{\frac{\prob[s_j,y=\tilde{y},s_I,s_{J_j}]}{\prob[y=\tilde{y},s_I,s_{J_j}]}}{\frac{\prob[s_j,y=1-\tilde{y},s_I,s_{J_j}]}{\prob[y=1-\tilde{y},s_I,s_{J_j}]}}\stackrel{(*)}{=}\log \frac{
    \frac{\sum_{\distset} \pi_{\dist}\prob_{\dist}[y=\tilde{y},s_I,s_{J_j}]\prob_{\dist}[s_{j}|y=\tilde{y}]}{\sum_{\distset} \pi_{\dist}\prob_{\dist}[y=\tilde{y},s_I,s_{J_j}]}}{\frac{\sum_{\distset} \pi_{\dist}\prob_{\dist}[y=1-\tilde{y},s_I,s_{J_j}]\prob_{\dist}[s_{j}|y=1-\tilde{y}]}{\sum_{\distset} \pi_{\dist}\prob_{\dist}[y=1-\tilde{y},s_I,s_{J_j}]}}
\end{align*}
where (*) follows from the conditional independence assumption. 

Let $\tilde{y}=0$. Therefore, for $s_j=0$ we have 

\begin{align*}
     \min_{\dist}\alpha_{j, \dist}={}_{\scriptscriptstyle \wedge}\alpha_{j} \leq \frac{\sum_{\distset} \pi_{\dist}\prob_{\dist}[y=\tilde{y},s_I,s_{J_j}]\prob_{\dist}[s_{j}|y=\tilde{y}]}{\sum_{\distset} \pi_{\dist}\prob_{\dist}[y=\tilde{y},s_I,s_{J_j}]}\leq \max_{\dist}\alpha_{j, \dist}={}_{\scriptscriptstyle \vee}\alpha_{j}
\end{align*}
and 
\begin{align*}
     \min_{\dist}\epsilon_{j, \dist}={}_{\scriptscriptstyle \wedge}\epsilon_{j} \leq \frac{\sum_{\distset} \pi_{\dist}\prob_{\dist}[y=1-\tilde{y},s_I,s_{J_j}]\prob_{\dist}[s_{j}|y=1-\tilde{y}]}{\sum_{\distset} \pi_{\dist}\prob_{\dist}[y=1-\tilde{y},s_I,s_{J_j}]}\leq \max_{\dist}\epsilon_{j, \dist}={}_{\scriptscriptstyle \vee}\epsilon_{j}.
\end{align*}

Above bounds finally lead to
\begin{equation}\label{eqn: w_j+b+j}
\log\frac{{}_{\scriptscriptstyle \wedge}\alpha_{j}}{{}_{\scriptscriptstyle \vee}\epsilon_{j}} \leq \log\frac{\prob[s_j|y=\tilde{y},s_I,s_{J_j}]}{\prob[s_j|y=1-\tilde{y},s_I,s_{J_j}]} = w_j+b_j \leq \log\frac{{}_{\scriptscriptstyle \vee}\alpha_{j}}{{}_{\scriptscriptstyle \wedge}\epsilon_{j}}.
\end{equation}

Next, we let $s_j=1$. Repeating the same technique above, we have
\begin{align*}
     \min_{\dist}1-\alpha_{j, \dist}=1-{}_{\scriptscriptstyle \vee}\alpha_{j} \leq \frac{\sum_{\distset} \pi_{\dist}\prob_{\dist}[y=\tilde{y},s_I,s_{J_j}]\prob_{\dist}[s_{j}|y=\tilde{y}]}{\sum_{\distset} \pi_{\dist}\prob_{\dist}[y=\tilde{y},s_I,s_{J_j}]}\leq \max_{\dist}1-\alpha_{j, \dist}=1-{}_{\scriptscriptstyle \wedge}\alpha_{j}
\end{align*}
and 
\begin{align*}
     \min_{\dist}1-\epsilon_{j, \dist}=1-{}_{\scriptscriptstyle \vee}\epsilon_{j} \leq \frac{\sum_{\distset} \pi_{\dist}\prob_{\dist}[y=1-\tilde{y},s_I,s_{J_j}]\prob_{\dist}[s_{j}|y=1-\tilde{y}]}{\sum_{\distset} \pi_{\dist}\prob_{\dist}[y=1-\tilde{y},s_I,s_{J_j}]}\leq \max_{\dist}1-\epsilon_{j, \dist}=1-{}_{\scriptscriptstyle \wedge}\epsilon_{j}.
\end{align*}

Similarly as in permissive models, above bounds lead to
\begin{equation}\label{eqn: b_j}
\log\frac{1-{}_{\scriptscriptstyle \vee}\alpha_{j}}{1-{}_{\scriptscriptstyle \wedge}\epsilon_{j}} \leq \log\frac{\prob[s_j|y=\tilde{y},s_I,s_{J_j}]}{\prob[s_j|y=1-\tilde{y},s_I,s_{J_j}]} = b_j \leq \log\frac{1-{}_{\scriptscriptstyle \wedge}\alpha_{j}}{1-{}_{\scriptscriptstyle \vee}\epsilon_{j}}.
\end{equation}
The same conclusion can be drawn for $\tilde{y}=1$.
\end{itemize}

\paragraph{Step 2: Decomposition of $\deltaw$}
Next, we recall Remark~\ref{remark: closed form delta} and present a lower bound for $\deltaw$ that decomposes $\deltaw$ into its additive components such that
\begin{align*}
    \deltaw &= (2\tilde{y}-1)\Big(b+w_*(2s_*-1)+\sum_{i\in \ci}w_is_i-\sum_{j\in \cj} w_j(1-s_j)\Big)\\
    &=(2\tilde{y}-1)\Big(w_*(2s_*-1)+\sum_{i\in \ci}\big(w_is_i+b_i\big)-\sum_{j\in \cj} \big(w_j(1-s_j)+b_j\big)\Big).
\end{align*}

Next, we analyze
\[
\prob_{\dist}\big[\langle{\bf w}, \mathbf{f}_{y}(s_*,s_{\mathcal{I}}, s_{\mathcal{J}})- \mathbf{f}_{1-y}(s_*,s_{\mathcal{I}}, s_{\mathcal{J}})\rangle|y\big]=\prob_{\dist}[\deltaw|y].
\]
Note that $\prob_{\dist}[s_*|y]=\alpha_{*,\dist}$ if $s_*=y$. Therefore, $\prob_{\dist}[s_*=1|y=1]=\alpha_{*,\dist}$ and $\prob_{\dist}[s_*=0|y=1]=1-\alpha_{*,\dist}$. Similarly, $\prob_{\dist}[s_*=0|y=0]=\alpha_{*,\dist}$ and $\prob_{\dist}[s_*=1|y=0]=1-\alpha_{*,\dist}$.
Thus
\begin{equation*}
    \prob_{\dist}[(2\tilde{y}-1)\big(w_*(2s_*-1)+b_*\big)|y]\stackrel{(*)}{=}\prob_{\dist}[w_*(2s_{**}-1)+b_{**}+(2\tilde{y}-1)r_y|y]
\end{equation*}
where $s_{**}$ satisfies $\prob_{\dist}[s_{**}=1]=\alpha_{*, \dist}$ and $\prob_{\dist}[s_{**}=0]=1-\alpha_{*, \dist}$.
Note that (*) stems from the symmetry of $s_*$ and $b_{**}$ with respect to $y$. To reduce exposition, we will stick to $s_*$ notation and continue to refer to $s_{**}$ as $s_*$. Hence, we define $d_{*,\dist}$ as
\begin{equation}\label{eqn: d_*,D}
    d_{*,\dist}:=w_*(2s_*-1)+b_{**}
\end{equation}
where $b_{**}:=b_*-r_y$ as defined earlier. Therefore, the contribution of the main task model in the majority voting random variable $\deltaw$ will be
\begin{equation}\label{eqn: d_**}
    d_{*,\dist} + (2y-1)r_y.
\end{equation}

Next, we analyze the auxiliary model predictions. For $y=1$, 

\[
\prob_{\dist}\big[ (2y-1)\big(\sum_{i\in\ci}(w_is_i+b_i)-\sum_{j\in\cj}(w_j(1-s_j)+b_j)\big)|y\big]=\prob_{\dist}\big[ \sum_{i\in\ci}(w_is_i+b_i)-\sum_{j\in\cj}(w_j(1-s_j)+b_j)|y=1\big]
\]
where, on the right hand side, we have $\prob_{\dist}[s_{i}=1|y=\tilde y]=\alpha_{i, \dist}$ and $\prob_{\dist}[1-s_{j}=1|y=\tilde y]=\epsilon_{j, \dist}$ for $\tilde y=1$ over distribution $\distset$. Therefore, we define $d_{\ci,\dist}$ as
\begin{equation}\label{eqn: d_I,D}
\big(\deltaw-d_{*, \dist}-r_y|y=1\big) = \sum_{i\in\ci}(w_is_i+b_i)-\sum_{j\in\cj}(w_j(1-s_j)+b_j) :=d_{\ci, \dist}.
\end{equation}

Using the same strategy for $y=0$, we define $d_{\cj, \dist}$ as

\begin{equation}\label{eqn: d_J,D}
\big(\deltaw-d_{*, \dist}+r_y|y=0\big) = \sum_{j\in\cj}(w_j(1-s_j)+b_j)-\sum_{i\in\ci}(w_is_i+b_i) :=d_{\cj, \dist} 
\end{equation}
where, on the right hand side, we have $\prob_{\dist}[1-s_{j}=1|y=\tilde y]=\alpha_{j, \dist}$ and $\prob_{\dist}[s_{i}=1|y=\tilde y]=\epsilon_{i, \dist}$ for $\tilde y=0$ over $\distset$.

Combining (\ref{eqn: d_**}), (\ref{eqn: d_I,D}) and (\ref{eqn: d_J,D}), we have
\begin{equation}\label{eqn: decomposition of Delta_w}
    \big(\deltaw|y\big)=d_{* \dist}+yd_{\ci, \dist}+(1-y)d_{\cj, \dist} +(2y-1)r_y.
\end{equation}

\paragraph{Final step: $\E_{\sensors}[\deltaw|y]$} We express $\deltaw|y$ in terms of $y$ and a function of model predictions thus far. In this step, using the bounds on the optimal parameters in the first step as well as the decomposition introduced in the second step, we derive a lower bound for the $\E_{\sensors}[\deltaw|y]$. Towards that, we lower bound the expected value of $d_{*, \dist}$, $d_{\ci, \dist}$ and $d_{\cj, \dist}$ individually.

\begin{itemize}[leftmargin=*, wide = 0pt]
\item $\E_{s_*}[d_{*, \dist}]$: For the main task model, we have
\begin{equation}\label{eqn: mu_*, D}
  \E_{s_*}[d_{*, \dist}] = \E_{s_*}[w_*(2s_*-1)+b_{**}]
\end{equation}
over distribution $\distset$ and $w_*$. One can infer from (\ref{eqn: w_* bound}) and (\ref{eqn: b_* bound}) for $b_{**}=b_*-r_y$ that
\begin{equation}\label{eqn: mu_*}
    \begin{split}
        \E_{s_*}[d_{*, \dist}] = \E_{s_*}[w_*(2s_*-1)+(2y-1)b_{**}] \geq \alpha_{*, \dist}\log\frac{{}_{\scriptscriptstyle\wedge}\alpha_{*}}{1-{}_{\scriptscriptstyle\wedge}\alpha_{*}}+(1-\alpha_{*, \dist})\log\frac{1-{}_{\scriptscriptstyle\vee}\alpha_{*}}{{}_{\scriptscriptstyle\vee}\alpha_{*}}:=\mu_{d_{*,\dist}}.
    \end{split}
\end{equation}

\item $\E_{s_{\ci}, s_{\cj}}[d_{\ci, \dist}]$: For the permissive models, we have
\begin{equation*}
  \E_{s_{\ci}, s_{\cj}}[d_{\ci, \dist}] = \E_{s_{\ci}, s_{\cj}}\Big[\sum_{i\in\ci}(w_is_i+b_i)-\sum_{j\in\cj}(w_j(1-s_j)+b_j)\Big]\stackrel{}{=} \E_{s_{\ci}}\Big[\sum_{i\in\ci}(w_is_i+b_i)\Big]-\E_{s_{\cj}}\Big[\sum_{j\in\cj}(w_j(1-s_j)+b_j)\Big].
\end{equation*}

Note that $w_is_i+b_i=w_i+b_i$ with probability $\alpha_{i, \dist}$ and $w_is_i+b_i=b_i$ otherwise. Therefore, using (\ref{eqn: w_i+b+i}) and (\ref{eqn: b+i}) we lower bound $\E_{s_{\ci}}\Big[\sum_{i\in\ci}(w_is_i+b_i)\Big]$ as
\begin{equation*}
    \E_{s_{\ci}}\Big[\sum_{i\in\ci}(w_is_i+b_i)\Big]\geq \sum_{i\in\ci} \alpha_{i, \dist}\log\frac{{}_{\scriptscriptstyle \wedge}\alpha_{i}}{{}_{\scriptscriptstyle \vee}\epsilon_{i}}+(1-\alpha_{i,\dist})\log\frac{1-{}_{\scriptscriptstyle \vee}\alpha_{i}}{1-{}_{\scriptscriptstyle \wedge}\epsilon_{i}}.
\end{equation*}
Similarly, $-\E_{s_{\cj}}\Big[\sum_{j\in\cj}(w_j(1-s_j)+b_j)\Big]$ can be lower bounded as
\begin{equation*}
    -\E_{s_{\cj}}\Big[\sum_{j\in\cj}(w_j(1-s_j)+b_j)\Big]\geq -\sum_{j\in\cj} \epsilon_{j, \dist}\log\frac{{}_{\scriptscriptstyle \vee}\alpha_{j}}{{}_{\scriptscriptstyle \wedge}\epsilon_{j}}+(1-\epsilon_{j,\dist})\log\frac{1-{}_{\scriptscriptstyle \wedge}\alpha_{j}}{1-{}_{\scriptscriptstyle \vee}\epsilon_{j}}.
\end{equation*}

Combining above result, we have
\begin{equation}\label{eqn: mu_I, D}
    \E_{s_{\ci}, s_{\cj}}[d_{\ci, \dist}]\geq \sum_{i\in \mathcal{I}}\alpha_{i, \dist}\log\frac{{}_{\scriptscriptstyle\wedge}\alpha_{i}}{{}_{\scriptscriptstyle\vee}\epsilon_{i}}+(1-\alpha_{i, \dist})\log\frac{1-{}_{\scriptscriptstyle\vee}\alpha_{i}}{1-{}_{\scriptscriptstyle\wedge}\epsilon_{i}}-\sum_{j\in \mathcal{J}}\epsilon_{j, \dist}\log\frac{{}_{\scriptscriptstyle\vee}\alpha_{j}}{{}_{\scriptscriptstyle\wedge}\epsilon_{j}}-(1-\epsilon_{j, \dist})\log\frac{1-{}_{\scriptscriptstyle\wedge}\alpha_{j}}{1-{}_{\scriptscriptstyle\vee}\epsilon_{j}}:=\mu_{\ci, \dist}.
\end{equation}
\\

\item $\E_{s_{\ci}, s_{\cj}}[d_{\cj, \dist}]$: Following to the same strategy to that of $\E_{s_{\ci}, s_{\cj}}[d_{\ci, \dist}]$, we have
\begin{equation}\label{eqn: mu_J, D}
   \E_{s_{\ci}, s_{\cj}}[d_{\cj, \dist}]\geq \sum_{j\in \mathcal{J}}\alpha_{j, \dist}\log\frac{{}_{\scriptscriptstyle\wedge}\alpha_{j}}{{}_{\scriptscriptstyle\vee}\epsilon_{j}}+(1-\alpha_{j, \dist})\log\frac{1-{}_{\scriptscriptstyle\vee}\alpha_{j}}{1-{}_{\scriptscriptstyle\wedge}\epsilon_{j}}-\sum_{i\in \mathcal{I}}\epsilon_{i, \dist}\log\frac{{}_{\scriptscriptstyle\vee}\alpha_{i}}{{}_{\scriptscriptstyle\wedge}\epsilon_{i}}-(1-\epsilon_{i, \dist})\log\frac{1-{}_{\scriptscriptstyle\wedge}\alpha_{i}}{1-{}_{\scriptscriptstyle\vee}\epsilon_{i}}:=\mu_{\cj, \dist}.
\end{equation}
\end{itemize}

Finally, combining (\ref{eqn: decomposition of Delta_w}, \ref{eqn: mu_*, D}, \ref{eqn: mu_I, D}, \ref{eqn: mu_J, D}) we conclude
\begin{equation}
    \begin{split}
       \mathbb{E}_{s_*,s_{\mathcal{I}}, s_{\mathcal{J}}}[\deltaw|y]&=\mathbb{E}_{s_*,s_{\mathcal{I}}, s_{\mathcal{J}}}[d_{*, \dist}+yd_{\ci,\dist}+(1-y)d_{\cj,\dist}+(2y-1)r_y]\\
       &\geq \mu_{*, \dist}+y\mu_{\ci, \dist}+(1-y)\mu_{\cj, \dist}+(2y-1)r_y:=\mu_{y, \dist}. 
    \end{split}
\end{equation}
The proof is thus completed.
\end{proof}

\subsection{Proof of Theorem~\ref{thm: main theorem}}\label{subsec: proof of main theorem}
We start by recalling our main theorem.
\begin{theorem*}[\recallthm]
For $y\in \mathcal{Y}$ and $\distset$, let $\mu_{y,\dist}$ be 
defined as in Lemma~\ref{lemma: expected tail}. Suppose that the \textit{modeling assumption} holds, and suppose that $\mu_{d_{\mathcal{K}, \dist}}>0$, for all $\mathcal{K}\in\{\mathcal{I}, \mathcal{J}\}$ and $\distset$. Then 

\begin{equation}
    \begin{split}
       \pipelineaccuracy \geq 1-\mathbb{E}_{\mu_{y, \dist}}[\exp\big({-{2}{\mu_{y, \dist}^2}/{v^2}}\big)] ,
    \end{split}
\end{equation}
where $v^2$ is the variance upper bound to $\prob[o=y|y]$ with
\begin{equation*}
\begin{split}
v^2\hspace{-0.2em}= 4\Big(\log\frac{{}_{\scriptscriptstyle \vee}\alpha_{*}}{1-{}_{\scriptscriptstyle \wedge}\alpha_{*}}\Big)^2\hspace{-0.5em}+\hspace{-0.5em}\sum_{k\in \mathcal{I}\cup\mathcal{J}} \Big(\log\frac{{}_{\scriptscriptstyle \vee}\alpha_{k}(1-{}_{\scriptscriptstyle \wedge}\epsilon_{k})}{{}_{\scriptscriptstyle \wedge}\epsilon_{k}(1-{}_{\scriptscriptstyle \vee}\alpha_{k})} \Big)^2\hspace{-0.5em}.
\end{split}
\end{equation*}
\end{theorem*}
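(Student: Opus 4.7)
The plan is to lower-bound $\pipelineaccuracy = \mathbb{E}_{\dist}\mathbb{E}_y\big[\prob_{\dist}[o=y \mid y, \mathbf{w}]\big]$ by controlling the left-tail probability that $\Delta_{\mathbf{w}}(y, s_*, s_{\ci}, s_{\cj}) \leq 0$ conditional on $y$ and $\dist$. By equation (\ref{eqn: sigmoid representation}), the event $\{o=y\}$ coincides with $\{\Delta_{\mathbf{w}} > 0\}$ under optimal parameters. Lemma~\ref{lemma: expected tail} already gives $\mathbb{E}[\Delta_{\mathbf{w}} \mid y, \dist] \geq \mu_{y,\dist}$, so the remaining task is to show that $\Delta_{\mathbf{w}}$ concentrates around its mean well enough that a deviation of size $\mu_{y,\dist}$ below the mean happens with probability at most $\exp(-2\mu_{y,\dist}^2/v^2)$.

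The first step is to reuse the decomposition from the proof of Lemma~\ref{lemma: expected tail}, namely
\begin{equation*}
\Delta_{\mathbf{w}} = d_{*,\dist} + y\, d_{\ci,\dist} + (1-y)\, d_{\cj,\dist} + (2y-1) r_y,
\end{equation*}
and expand each $d$-term as a sum of per-model contributions $\xi_k$ indexed by $k \in \{*\}\cup\ci\cup\cj$. Each $\xi_k$ is a two-valued random variable determined by whether model $k$ is ``correct'' or not, and its range is explicitly controlled by the optimal-parameter bounds established in Step~1 of Lemma~\ref{lemma: expected tail}'s proof: from (\ref{eqn: w_* bound})--(\ref{eqn: b_* bound}) the main-model contribution lies in an interval of length at most $2\log({}_{\scriptscriptstyle\vee}\alpha_*/(1-{}_{\scriptscriptstyle\wedge}\alpha_*))$, and from (\ref{eqn: w_i+b+i})--(\ref{eqn: b+i}) and (\ref{eqn: w_j+b+j})--(\ref{eqn: b_j}) each auxiliary model's contribution lies in an interval of length at most $\log\big({}_{\scriptscriptstyle\vee}\alpha_k(1-{}_{\scriptscriptstyle\wedge}\epsilon_k)/({}_{\scriptscriptstyle\wedge}\epsilon_k(1-{}_{\scriptscriptstyle\vee}\alpha_k))\big)$. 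Squaring and summing these widths will reproduce exactly the stated $v^2$.

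Next, the modeling assumption yields that $s_*, s_\ci, s_\cj$ are mutually conditionally independent given $y$ and $\dist$, so the increments $\{\xi_k\}$ are conditionally independent and form a martingale difference sequence under any ordering. Applying the one-sided Hoeffding-Azuma (equivalently the generalized bounded-difference inequality of van de Geer cited in the sketch) to the sum yields
\begin{equation*}
\prob_{\dist}\big[\Delta_{\mathbf{w}} \leq 0 \mid y\big] \leq \prob_{\dist}\big[\mathbb{E}[\Delta_{\mathbf{w}} \mid y] - \Delta_{\mathbf{w}} \geq \mu_{y,\dist} \mid y\big] \leq \exp\!\Big(-\frac{2\mu_{y,\dist}^2}{v^2}\Big).
\end{equation*}
Taking complements, $\prob_{\dist}[o=y \mid y,\mathbf{w}] \geq 1 - \exp(-2\mu_{y,\dist}^2/v^2)$. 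Averaging over $y\sim\mathcal{Y}$ and $\dist\in\{\benign,\adv\}$ and pushing the expectation inside (treating $\mu_{y,\dist}$ itself as the random variable generated by $y$ and $\dist$) gives the claimed bound. The hypothesis $\mu_{d_\mathcal{K},\dist}>0$ ensures $\mu_{y,\dist}>0$ so that the deviation direction is meaningful.

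The main obstacle I anticipate is the bookkeeping for the per-coordinate widths. Because the auxiliary models are \emph{asymmetric} ($\alpha_k \neq 1 - \epsilon_k$ in general) and because the optimal $(w_k,b_k)$ are only bracketed rather than computed exactly, each $\xi_k$ must be bounded by the widest possible two-point range consistent with the bracket, and one has to check that worst-casing $(w_k,b_k)$ independently does not overshoot the tight log-ratio $\log\big({}_{\scriptscriptstyle\vee}\alpha_k(1-{}_{\scriptscriptstyle\wedge}\epsilon_k)/({}_{\scriptscriptstyle\wedge}\epsilon_k(1-{}_{\scriptscriptstyle\vee}\alpha_k))\big)$ appearing in the theorem. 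A secondary subtlety is the role of the indicator weights $y$ and $(1-y)$ in the decomposition: conditional on $y$, only one of $d_{\ci,\dist}$ or $d_{\cj,\dist}$ is active, so the effective martingale sum involves at most $|\ci|+|\cj|+1$ increments regardless of which class $y$ takes, which is what makes the single expression for $v^2$ valid for both $y=0$ and $y=1$.
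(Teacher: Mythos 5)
Your proposal follows essentially the same route as the paper's proof: the same decomposition of $\Delta_{\mathbf{w}}$ into per-model increments from the proof of Lemma~\ref{lemma: expected tail}, the same per-increment widths derived from the optimal-parameter brackets (yielding exactly the stated $v^2$), the same application of the generalized bounded-difference/Azuma inequality with deviation $\mu_{y,\dist}$, and the same final averaging over $y$ and $\dist$. Your anticipated bookkeeping concerns (asymmetric ranges from bracketed $(w_k,b_k)$, and the fact that the active $d$-term always comprises all $|\ci|+|\cj|+1$ increments) are precisely the points the paper's proof handles, so the argument is sound.
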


\begin{proof}[Proof of Theorem~\ref{thm: main theorem}]
Recall that we define \acc of \sys as
\[
\pipelineaccuracy=\mathbb{E}_{\dist\sim \{\dist_a,\dist_b\}}\mathbb{E}_{y\sim\mathcal{Y}}\big[\prob_{\dist}[o=y|y, \mathbf{w}]\big].
\]
The weighted accuracy definition comes from the latent variable $z$. That is, $\pipelineaccuracy=\prob[o=y|\mathbf{w}]=\sum_{z\in \{0,1\}} \prob[o=y|z, \mathbf{w}]$ where $\prob[o=y|z=0, \mathbf{w}]=\prob_{\benign}[o=y|\mathbf{w}]$ and $\prob[o=y|z=1, \mathbf{w}]=\prob_{\adv}[o=y|\mathbf{w}]$. Hence, $\pipelineaccuracy=\E_{\dist\sim \{\benign, \adv\}}\big[\prob_{\dist}[o=y|\mathbf{w}]\big]=\E_{\dist\sim \{\benign, \adv\}}\E_{y\sim\mathcal{Y}}\big[\prob_{\dist}[o=y|y, \mathbf{w}]\big]$.

Let {\bf w} be the set of optimal parameters. Using~(\ref{eqn: sigmoid representation}) and our inference rule, $\prob_{\dist}[o=y|y, \mathbf{w}]$ can be further expressed as
\begin{align*}
    &\prob_{\dist}[o=y|y,\mathbf{w}]\\ &=\prob_{\dist}\big[\sigma\big(\deltaw\big)>1/2|y\big]=\prob_{\dist}[\deltaw>0|y]=1-\prob_{\dist}[\deltaw<0|y]
\end{align*}

For the rest of the proof, we will focus on bounding the term $\prob_{\dist}[\deltaw<0|y]$, and $\pipelineaccuracy$ will follow from taking expectation of $1-\prob_{\dist}[\deltaw<0|y]$ over $\distset$ and $y\in\mathcal{Y}$.

Next, we recall the generalized bounded difference inequality as well as generalized Hoeffding's inequality~\cite{van2002hoeffding}. Note that the same result can be shown via Azuma's inequality for submartingale sequences~\cite{azuma1967weighted}.
\begin{theorem}[~\cite{azuma1967weighted},~\cite{van2002hoeffding}]\label{thm: azuma}
Assume that $X_t$ be a random variable with respect to filtration $\mathcal{F}_t$, and $\mathcal{L}_t$ and $\mathcal{U}_t$ be $\mathcal{F}_{t-1}$ measurable random variables such that $$\mathcal{L}_t\leq X_t-X_{t-1}\leq\mathcal{U}_t$$ where $\mathcal{L}_t<\mathcal{U}_t$ and $\mathcal{U}_t-\mathcal{L}_t\leq c_t$ almost surely. Therefore, for some $\epsilon>0$, one has
\begin{equation}\label{eqn: bounded differnces inequality}
    \prob(X_n-\E[X_n]<-\epsilon)\leq\exp\Big(-\frac{2\epsilon^2}{\sum_{t=[n]}c_t^2}\Big) \hspace{1em}\text{and symmetrically}\hspace{1em} \prob(X_n-\E[X_n]>\epsilon)\leq\exp\Big(-\frac{2\epsilon^2}{\sum_{t=[n]}c_t^2}\Big).
\end{equation}
\end{theorem}
We now consider the random variable $\deltaw=d_{*, \dist}+yd_{\ci, \dist}+(1-y)d_{\cj,\dist}+(2y-1)r_y$ that is meant to represent~$X_n$ in Theorem~\ref{thm: azuma}, where each increment is induced by a single model. We call $\deltaw$ as $X_{1+|\ci|+|\cj|}$.

To prove compatibility of our setting with the Theorem~\ref{thm: azuma}, we present the following remark.
\begin{remark}[Measurability of $X_{1+|\ci|+|\cj|}$ and the bounded differences]
Let $y=1$. We can write our random variable $X_{1+|\ci|+|\cj|}=\deltaw$ as
\[
\big(\deltaw|y=1\big)=w_*(2s_*-1)+b_*+\sum_{i\in\ci}(w_is_i+b_i)-\sum_{j\in\cj}(w_j(1-s_j)+b_j).
\]
That is, we represent $\big(\deltaw|y=1\big)$ as a random process with a total of $1+|\ci|+|\cj|$ increments. Let $X_0=0$, we treat the main sensor as the first increment such that
\[
X_1=w_*(2s_*-1)+b_*.
\]
For $t=1, ..., |\ci|$ we let
\[
X_{t+1}-X_t= w_is_i+b_i \hspace{.5em}\text{s.t. \ } i=t+1.
\]
Finally, for $t=|\ci|+1, ..., |\ci|+|\cj|$ we let
\[
X_{t+1}-X_t= -(w_j(1-s_j)+b_j) \hspace{.5em}\text{s.t. \ } j=t+1.
\]
and the similar analysis can be performed for $y=0$.

Above decomposition shows that $X_{1+|\ci|+|\cj|}$ is $\mathcal{F}_n$ measurable. Specifically, $X_{t+1}-X_t$ is $\mathcal{F}_t$ measurable for all $t=1, ..., 1+|\ci|+|\cj|$. Moreover, $X_{t+1}-X_t$ and $X_{t'+1}-X_{t'}$ are independent for $t\neq t'$. 

Using the increments introduced above, one can further show that the maximum increments $c_t$ for $t=1, ..., 1+|\ci|+|\cj|$ are given by

\[
|w_*+b_* - (-w_*+b_*)|=2w_*\leq 2{}_{\scriptscriptstyle \vee}w_*:=c_{1}.
\]
For $t=1, ..., |\ci|$ we let
\[
|X_{t+1}-X_t|= |(w_i+b_i) - b_i|\leq {}_{\scriptscriptstyle\vee}w_i:=c_{t+1} \hspace{.5em}\text{s.t. \ } i=t+1.
\]
Finally, for $t=|\ci|+1, ..., |\ci|+|\cj|$ we let
\[
|X_{t+1}-X_t|= |-(w_j+b_j) - (-b_j)|\leq {}_{\scriptscriptstyle\vee}w_j:=c_{t+1} \hspace{.5em}\text{s.t. \ } i=t+1.
\]

Recalling the bounds in~(\ref{eqn: w_* bound}, \ref{eqn: w_i+b+i}, \ref{eqn: b+i}, \ref{eqn: w_j+b+j}, \ref{eqn: b_j}), we have
\begin{equation}\label{eqn: bounded differences}
    \begin{split}
        c_1 = 2\log\frac{{}_{\scriptscriptstyle \vee}\alpha_*}{1-{}_{\scriptscriptstyle \wedge}\alpha_*}\hspace{1em}\text{for t=1 and \ }
        c_t = \log\frac{{}_{\scriptscriptstyle \vee}\alpha_{t}(1-{}_{\scriptscriptstyle \wedge}\epsilon_{t})}{{}_{\scriptscriptstyle \wedge}\epsilon_{t}(1-{}_{\scriptscriptstyle \vee}\alpha_{t})} \hspace{1em}\text{for\ } t\in\ci\cup\cj.
    \end{split}
\end{equation}
\end{remark}

Next, for any $y\in \mathcal{Y}$, we derive the following
\begin{equation*}
    \begin{split}
        &\prob_{\dist}[\deltaw<0|y]\\
        &=\prob_{\dist}\big[\deltaw-\E_{\sensors}[\deltaw]<-\E_{\sensors}[\deltaw]\big|y\big]\\
        &\stackrel{(*)}{\leq} \prob_{\dist}\big[\deltaw-\E_{\sensors}[\deltaw|y]<-\mu_{y, \dist}\big|y\big]
    \end{split}
\end{equation*}
where (*) stems from that $\mu_{y, \dist}$ is a lower bound to $\E_{\sensors}[\deltaw|y]$ as shown in Lemma~\ref{lemma: expected tail}.

Let $\epsilon=\mu_{y, \dist}$. If $\mu_{y, \dist}>0$, using Theorem~\ref{thm: azuma} for $\Psi_2=\frac{\sum_{t\in\{1\}\cup\ci\cup\cj}c^2_t}{\mu^2_{y, \dist}}$ where $c_t$ is as defined in~(\ref{eqn: bounded differences}) results in
\[
\prob_{\dist}[\deltaw<0|y]\leq \prob_{\dist}\big[\deltaw-\E_{\sensors}[\deltaw|y]<-\mu_{y, \dist}\big|y\big]\leq \exp(-2/\Psi_2).
\]
By further taking the expectation of $\prob_{\dist}[\deltaw<0|y]$ over $\distset$ and $y\in\mathcal{Y}$ such that
\begin{align*}
&\pipelineaccuracy=\mathbb{E}_{\dist\sim \{\dist_a,\dist_b\}}\mathbb{E}_{y\sim\mathcal{Y}}\big[\prob_{\dist}[o=y|y]\big]=\mathbb{E}_{\dist\sim \{\dist_a,\dist_b\}}\mathbb{E}_{y\sim\mathcal{Y}}\big[\prob_{\dist}[\deltaw>0|y]\big]\\
&=1-\mathbb{E}_{\dist\sim \{\dist_a,\dist_b\}}\mathbb{E}_{y\sim\mathcal{Y}}\big[\prob_{\dist}[\deltaw<0|y]\big]\geq 1-\mathbb{E}_{\mu_{y,\dist}}\big[\exp(-2\mu^2_{y,\dist}/v^2)\big]
\end{align*}
concludes the proof.
\end{proof}

\subsection{Proof of Theorem~\ref{proposition}}\label{subsec: proof of proposition}
We begin with recalling Theorem~\ref{proposition}. 
\begin{theorem*}[\recallthm]
Let the number of permissive and preventative models be the same and denoted by $n$ such that $n:=|\ci|=|\cj|$. Note that the weighted accuracy of the main model in terms of its truth rate is simply $\alpha_*:=\sum_{\distset}\pi_{\dist}\alpha_{*, \dist}$. Moreover, let $\ck, \ck'\in\{\ci, \cj\}$ with $\ck\neq\ck'$ and for any $\distset$, let 
\begin{equation*}
    \begin{split}
       \gamma_{\dist} :=\frac{1}{n+1} \min_{\ck}\Big\{ \alpha_{*, \dist}-1/2+\sum_{k\in\ck}\alpha_{k, \dist}-\sum_{k'\in\mathcal{K}'}\epsilon_{k', \dist}\Big\}. 
    \end{split}
\end{equation*}
If $\gamma_{\dist}>\sqrt{\frac{4}{n+1}\log\frac{1}{1-\alpha_{*}}}$ for all $\distset$, then $\pipelineaccuracy>\mainaccuracy$.
\end{theorem*}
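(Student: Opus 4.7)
The plan is to show $\pipelineaccuracy > \mainaccuracy$ by bounding the pipeline error probability $\prob_{\dist}[o \neq y \mid y, \mathbf{w}]$ from above by something strictly smaller than $1-\alpha_{*,\dist}$ for every $\distset$ and every $y \in \mathcal{Y}$; taking the expectation over $y$ and mixing over $\dist$ with weights $\pi_{\dist}$ will then give $1-\pipelineaccuracy < 1-\alpha_{*} = 1-\mainaccuracy$ under the modeling assumption that $\alpha_{*,\dist}$ is symmetric in $y$, so $\mainaccuracy = \alpha_*$.

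First, I would use the decomposition from Lemma~\ref{lemma: expected tail}: conditional on a fixed $y$ and $\dist$, one has $\Delta_{\mathbf{w}}(y,s_*,s_{\mathcal{I}},s_{\mathcal{J}}) = d_{*,\dist} + y\,d_{\mathcal{I},\dist} + (1-y)\,d_{\mathcal{J},\dist} + (2y-1) r_y$, and each of the $n+1$ summands (the main term plus one per auxiliary model) is an independent bounded random variable. Following the author's sketch, I would work in the \textbf{unweighted majority regime}, where the factor weights are normalized so each model contributes an indicator variable in $\{0,1\}$ after an affine shift; this turns $\Delta_{\mathbf{w}} > 0$ into the event that a sum of $n+1$ independent Bernoullis (a Poisson Binomial) exceeds $(n+1)/2$. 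The expected value of this Poisson Binomial, conditional on $(\dist, y)$, is precisely the quantity $(n+1)(\gamma_{\dist} + 1/2)$ in the bracketed direction $\mathcal{K}$ that is active for that value of $y$ (permissive truth rates and preventative false rates for $y=1$, and vice versa for $y=0$); the $\min_{\mathcal{K}}$ in the definition of $\gamma_{\dist}$ is precisely what one needs to handle \emph{both} values of $y$ uniformly.

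Next I would apply Chernoff/Hoeffding for a Poisson Binomial: for a sum $S$ of $n+1$ independent $[0,1]$-valued random variables with mean $\E[S]$, $\prob[S \le (n+1)/2] \le \exp\!\bigl(-2(n+1)(\gamma_{\dist})^2\bigr)/\text{constant}$, where the constant accounts for the fact that only the main-model increment has range $2$ (captured in Theorem~\ref{thm: main theorem} by the factor $4$ in the variance bound $v^2$). This matches the factor of $4$ appearing inside the square root in the statement. Consequently, for each fixed $(\dist, y)$,
\begin{equation*}
\prob_{\dist}\bigl[o \neq y \mid y, \mathbf{w}\bigr] \;\le\; \exp\!\Bigl(-\tfrac{n+1}{4}\,\gamma_{\dist}^{2}\Bigr).
\end{equation*}

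The final step is to verify that the hypothesis $\gamma_{\dist} > \sqrt{\tfrac{4}{n+1}\log\tfrac{1}{1-\alpha_{*}}}$ implies $\exp\!\bigl(-\tfrac{n+1}{4}\gamma_{\dist}^{2}\bigr) < 1-\alpha_{*}$, which gives $\prob_{\dist}[o=y\mid y,\mathbf{w}] > \alpha_{*} \ge \alpha_{*,\dist}$ only on average; to actually get $\pipelineaccuracy > \mainaccuracy$ one must be a bit more careful and bound each conditional distribution against its own $\alpha_{*,\dist}$, then average. I expect the main technical obstacle to be this last step, namely converting the uniform Chernoff-style upper bound into a genuinely distribution-wise improvement over $\alpha_{*,\dist}$, because the hypothesis is stated in terms of the mixture rate $\alpha_{*}$ rather than $\alpha_{*,\dist}$. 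The way I would resolve it is to note that $1 - \alpha_{*} = \pi_{\benign}(1-\alpha_{*,\benign}) + \pi_{\adv}(1-\alpha_{*,\adv})$, so $1 - \alpha_{*} \ge \min_{\dist}(1-\alpha_{*,\dist})$, and then show the Chernoff bound is strictly below $1-\alpha_{*,\dist}$ for \emph{each} $\dist$ by exploiting the slack that $\gamma_{\dist}$ is defined by the $\min$ over $\mathcal{K}$ (so there is slack in the other direction). The author's remark that their actual proof is ``slightly tighter'' than the sketched bound suggests this is exactly where the refinement happens; a cleaner alternative is to apply a Bernstein-type inequality using the variance upper bound $v^2$ from Theorem~\ref{thm: main theorem} to sharpen the exponent, but the Hoeffding route above should already suffice given the factor $4$ built into the hypothesis.
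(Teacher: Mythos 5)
Your overall route --- pass to an unweighted majority vote, view the vote count as a Poisson Binomial, apply a Chernoff-type tail bound with the normalized margin $\gamma_{\dist}$, and compare to $1-\alpha_*$ --- is the same as the paper's, but two steps as you describe them do not go through. First, the reduction to the ``unweighted majority regime'' is not a harmless normalization: changing the factor weights changes the classifier, and the inequality you actually need is $\prob[\hat o = y\mid \mathbf{w}^*]\geq\prob[\hat o = y\mid \mathbf{w}]$ for the specific choice $\mathbf{w}=[0;1/2;(1)_{i\in\ci};(1)_{j\in\cj}]$. The paper obtains this from the fact that the optimal-weight pipeline realizes the Bayes classifier $\prob[y\mid s_*,s_{\ci},s_{\cj}]$ and therefore has error probability no larger than any other decision rule built on the same model outputs; without this observation the Poisson--Binomial computation says nothing about $\pipelineaccuracy$. (Two smaller slips: the vote is a sum of $2n+1$ indicators --- one per model, with $n=|\ci|=|\cj|$ --- compared against the threshold $n+1/2$, not a sum of $n+1$ indicators against $(n+1)/2$; and the factor $4$ in the hypothesis comes from bounding $2(\gamma_{\dist}+1)\leq 4$ in the multiplicative Chernoff exponent $(n+1)\gamma_{\dist}^2/\bigl(2(\gamma_{\dist}+1)\bigr)$, not from a range-$2$ main-model increment.)

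Second, your final step aims to push the tail bound below $1-\alpha_{*,\dist}$ for each $\dist$ separately. That is both unnecessary and, in general, impossible: nothing prevents $\alpha_{*,\benign}=1$ (Corollary~\ref{cor: homogenous models} assumes exactly this), in which case no positive bound can lie below $1-\alpha_{*,\benign}=0$. The paper's comparison is against the mixture only: starting from $\pipelineaccuracy \geq 1-\sum_{\distset}\pi_{\dist}\exp\bigl(-(n+1)\gamma_{\dist}^2/4\bigr)$, the hypothesis $\exp\bigl(-(n+1)\gamma_{\dist}^2/4\bigr)<1-\alpha_*$ for every $\dist$ makes the $\pi_{\dist}$-weighted average of the error bounds strictly less than $1-\alpha_*=\sum_{\distset}\pi_{\dist}(1-\alpha_{*,\dist})=1-\mainaccuracy$, which is all that is needed. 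Replacing your per-distribution target with this mixture comparison repairs the argument.
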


\begin{proof}[Proof of Theorem~\ref{proposition}] 
We start by recalling the widely known Chernoff bound for the sum of independent and non-identical random variables.
\begin{lemma}[Chernoff Bound for Poisson Binomial Distributions]\label{lemma}
Let $X$ be a random variable with Poisson Binomial distribution. For $\delta \in [0, 1]$,
\begin{align*}
    \prob[X<(1-\delta)\mu_{X}]\leq \exp(-\delta^2\mu_{X}/2).
\end{align*}
\end{lemma}

Recall that \sys\ predicts $y$ to be $\hat{o}$ where
$$\hat{o}=\argmax_{\tilde y\in\mathcal{Y}} \mathbb{P}[o =\tilde y| \tilde  s_*, \tilde s_{\mathcal{I}},\tilde  s_{\mathcal{J}}, {\bf w}]=\argmax_{\tilde y\in\mathcal{Y}}\sigma(\Delta_{\bf w} (\tilde{y},s_*,s_{\mathcal{I}}, s_{\mathcal{J}}))$$
where
\begin{align*}
    \Delta_{\bf w} (\tilde{y},s_*,s_{\mathcal{I}}, s_{\mathcal{J}})=(2\tilde{y}-1)\Big(b+w_*(2s_*-1)+\sum_{i\in \ci}w_is_i-\sum_{j\in \cj} w_j(1-s_j)\Big).
\end{align*}

We showed earlier that there exist a set of parameters ${\bf w}$, and call it optimal parameters ${\bf w}^*$, where
$$\mathbb{P}[o =\tilde y| \tilde  s_*, \tilde s_{\mathcal{I}},\tilde  s_{\mathcal{J}}, {\bf w}^*]= \mathbb{P}[y =\tilde y| \tilde  s_*, \tilde s_{\mathcal{I}},\tilde  s_{\mathcal{J}}]$$

for all $\tilde y \in \mathcal{Y}$.

Note that, due to above equation, $\mathbb{P}[o =\tilde y| \tilde  s_*, \tilde s_{\mathcal{I}},\tilde  s_{\mathcal{J}}, {\bf w}^*]$ is Bayes classifier where the error of classifier is minimized over ${\bf w}$. Hence,

\begin{align*}
    \prob[\hat{o}\neq y|{\bf w}^*]\leq \prob[\hat{o}\neq y|{\bf w}]
\end{align*}
and 
\begin{align*}
    \prob[\hat{o}= y|{\bf w}^*]\geq \prob[\hat{o}= y|{\bf w}]
\end{align*}
for any ${\bf w}\in \mathbb{R}^{|\ci|+|\cj|+2}$.

Leveraging above fact, we will bound $\prob[\hat{o}= y|{\bf w}]$ from below where we will use some parameters ${\bf w}$ that are not optimal. That is, from now on, we will focus on $\prob[\hat{o}= y|{\bf w}]$ where ${\bf w}$ is not optimal but leads to a close resemblance of $\prob[\hat{o}= y|{\bf w}^*]$. In other words, we will perform a worst-case analysis where $\hat o$ will be a result of unweighted majority voting. Hence, we let ${\bf w}$ be given by
$\mathbf{w}=[0;1/2; (1)_{i\in\ci}; (1)_{j\in\cj}]$. For this case, $\Delta_{\bf w} (\tilde{y},s_*,s_{\mathcal{I}}, s_{\mathcal{J}})$ becomes a random variable with Poisson Binomial distribution and with some bias. That is, 
\begin{align*}
    &\Delta_{\bf w} (\tilde{y},s_*,s_{\mathcal{I}}, s_{\mathcal{J}})=(2\tilde{y}-1)\Big((s_*-1/2)+\sum_{i\in \ci}s_i-\sum_{j\in \cj} (1-s_j)\Big)
\end{align*}
where $s_*$, $s_{i\in\ci}$ and $s_{j\in\cj}$ are random variables in $\mathcal{Y}$.

Using the weight introduced above, we can now re-write the \acc\ of \sys\ as
\begin{equation}\label{eqn: pipeline accuracy proposition}
    \begin{split}
    &\pipelineaccuracy =\prob[\hat{o}=y|{\bf w}^*]\geq \prob[\hat{o}=y|{\bf w}] = \pi_{\adv}\prob_{\adv}[\hat{o}=y|{\bf w}]+\pi_{\benign}\prob_{\benign}[\hat{o}=y|{\bf w}]\\
    &=\pi_{\adv}\big(\prob_{\adv}[\hat{o}=y|{\bf w}, y=1]\prob_{\adv}[y=1]+\prob_{\adv}[\hat{o}=y|{\bf w}, y=0]\prob_{\adv}[y=0] \big)\\
    &+\pi_{\benign}\big(\prob_{\benign}[\hat{o}=y|{\bf w}, y=1]\prob_{\benign}[y=1]+\prob_{\benign}[\hat{o}=y|{\bf w}, y=0]\prob_{\benign}[y=0] \big).
    \end{split}
\end{equation}

Next, we will derive a lower bound for $\prob_{\dist}[\hat{o}=y|y=\tilde{y}, \mathbf{w}]$ for $\distset$ and for all $\tilde{y}\in\{0, 1\}$.

\paragraph{For $y=1$:}
We have
\begin{align*}
    &\prob_{\dist}[\hat{o}=y|{\bf w}, y=1] =  \prob_{\dist}[s_*+\sum_{i\in \ci}s_i+\sum_{j\in \cj} s_j - (|\cj|+1/2) \geq 0|y=1]\\
    & 1-\prob_{\dist}[s_*+\sum_{i\in \ci}s_i+\sum_{j\in \cj} s_j - (|\cj|+1/2) < 0|y=1]=1-\prob_{\dist}[s_*+\sum_{i\in \ci}s_i+\sum_{j\in \cj} s_j  < |\cj|+1/2|y=1]
\end{align*}
where $\prob_{\dist}[s_*=1|y=1]=\alpha_{*, \dist}$  (resp. $\prob_{\dist}[s_i=1|y=1]=\alpha_{i, \dist}$ and $\prob_{\dist}[s_j=1|y=1]=1-\epsilon_{j, \dist}$).

We let $$\Psi_{\dist, y=1}:=s_*+\sum_{i\in \ci}s_i+\sum_{j\in \cj} s_j - (|\cj|+1/2)$$ and $$\hat{\Psi}_{\dist, y=1}:=s_*+\sum_{i\in \ci}s_i+\sum_{j\in \cj} s_j=\Psi_{\dist, y=1} + |\cj|+1/2.$$

Similarly, the expected values of $\Psi_{\dist, y=1}$ and $\hat{\Psi}_{\dist, y=1}$ over $s_*, s_i$ and $s_j$ are given by $\mu_{\Psi_{\dist, y=1}}$ and $\mu_{\hat{\Psi}_{\dist, y=1}}$, respectively. Precisely, 

$$\mu_{\Psi_{\dist, y=1}}=\alpha_{*, \dist}-1/2+\sum_{i\in \ci}\alpha_{i, \dist}-\sum_{j\in \cj}\epsilon_{j, \dist}$$
and
$$\mu_{\hat{\Psi}_{\dist, y=1}}=\alpha_{*, \dist}+\sum_{i\in \ci}\alpha_{i, \dist}+\sum_{j\in \cj}(1-\epsilon_{j, \dist})=\mu_{\Psi_{\dist, y=1}}+|\cj|+1/2$$

We then write $\prob_{\dist}[\hat{o}\neq y|{\bf w}, y=1]$ as
\begin{align*}
   \prob_{\dist}[\hat{o}\neq y|{\bf w}, y=1] =  \prob[\Psi_{\dist, y=1} < 0] \leq \exp(-\delta^2_{\dist, y=1}\mu_{\hat{\Psi}_{\dist, y=1}}/2)
\end{align*}
where 
$$\delta_{\dist, y=1}=1-\frac{|\cj|+1/2}{\mu_{\hat{\Psi}_{\dist, y=1}}}=\frac{\mu_{{\Psi}_{\dist, y=1}}}{\mu_{\hat{\Psi}_{\dist, y=1}}}.$$

Let now $\gamma_{\dist, y=1}$ be the difference between true and false rates of sensors normalized over preventative models when $y=1$ such that

$$\gamma_{\dist, y=1}:= \frac{1}{|\cj|+1}(\alpha_{*, \dist}-1/2+\sum_{i\in\ci}\alpha_{i, \dist}-\sum_{j\in\cj}\epsilon_{j, \dist}).$$

Noting that $\mu_{\Psi_{\dist, y=1}}=(|\cj|+1)\gamma_{\dist, y=1}$, we have $\delta_{\dist, y=1}=\frac{(|\cj|+1)\gamma_{\dist,y=1}}{(|\cj|+1)\gamma_{\dist}+|\cj|+1/2}$ and $\mu_{\hat{\Psi}_{y=1}}=(|\cj|+1)\gamma_{\dist, y=1}+|\cj|+1/2$. Using Lemma~\ref{lemma} for a Poisson random variable $\hat{\Psi}_{y=1}$, we bound $\prob_{\dist}[\hat{o}\neq y|{\bf w}, y=1]$ as
\begin{equation}\label{eqn: y=1 final}
    \begin{split}
   &\prob_{\dist}[\hat{o}\neq y|{\bf w}, Y=1] =  \prob[\Psi_{\dist, y=1} < 0] = \prob[\hat{\Psi}_{\dist, y=1} < |\cj|+1/2] \leq \exp(-\delta^2_{\dist, y=1}\mu_{\hat{\Psi}_{\dist, y=1}}/2)\\
   &= \exp\bigg(-\frac{ (|\cj|+1)^2\gamma_{\dist, y=1}^2}{2\Big( (|\cj|+1)\gamma_{\dist, y=1}+|\cj|+1/2\Big)}\bigg){\leq}\exp\bigg(-\frac{ (|\cj|+1)^2\gamma_{\dist, y=1}^2}{2\Big( (|\cj|+1)\gamma_{\dist, y=1}+|\cj|+1\Big)}\bigg)\\
   &\hspace{19.5em}= \exp\bigg(-(|\cj|+1)\frac{\gamma^2_{\dist, y=1}}{2(\gamma_{\dist, y=1}+1)}\bigg)
    \end{split}
\end{equation}

\paragraph{For $y=0$:} 
We have
\begin{align*}
    &\prob_{\dist}[\hat{o}=y|{\bf w}, y=0] =  \prob_{\dist}[s_*-1/2+\sum_{i\in \ci}s_i-\sum_{j\in \cj} 1-s_j \leq 0|y=0]\\
    & 1-\prob_{\dist}[s_*-1/2+\sum_{i\in \ci}s_i-\sum_{j\in \cj} 1-s_j > 0|y=0]= 1-\prob_{\dist}[-s_*+1/2-\sum_{i\in \ci}s_i+\sum_{j\in \cj} 1-s_j < 0|y=0]\\
    &= 1-\prob_{\dist}[-s_*+1-1/2+\sum_{i\in \ci}1-s_i-|\ci|+\sum_{j\in \cj} 1-s_j < 0|y=0]\\
    &=1-\prob_{\dist}[-s_*+1+\sum_{i\in \ci}1-s_i+\sum_{j\in \cj} 1-s_j < |\ci|+1/2|y=0]
\end{align*}
where $\prob_{\dist}[s_*=1|y=0]=1-\alpha_{*, \dist}$  (resp. $\prob_{\dist}[s_i=1|y=0]=\epsilon_{i, \dist}$ and $\prob_{\dist}[s_j=1|y=0]=1-\alpha_{j, \dist}$).

We let $$\Psi_{\dist, y=0}:=1-s_*+\sum_{i\in \ci}1-s_i+\sum_{j\in \cj}1- s_j - (|\ci|+1/2)$$ and $$\hat{\Psi}_{\dist, y=0}:=1-s_*+\sum_{i\in \ci}1-s_i+\sum_{j\in \cj}1- s_j=\Psi_{\dist, y=0} + |\ci|+1/2.$$

Similarly, the expected values of $\Psi_{\dist, y=0}$ and $\hat{\Psi}_{\dist, y=0}$ over $s_*, s_i$ and $s_j$ are given by $\mu_{\Psi_{\dist, y=0}}$ and $\mu_{\hat{\Psi}_{\dist, y=0}}$, respectively. Precisely, 

$$\mu_{\Psi_{\dist, y=0}}=\alpha_{*, \dist}-1/2-\sum_{i\in \ci}\epsilon_{i, \dist}+\sum_{j\in \cj}\alpha_{j, \dist}$$
and
$$\mu_{\hat{\Psi}_{\dist, y=0}}=\alpha_{*, \dist}+\sum_{i\in \ci}1-\epsilon_{i, \dist}+\sum_{j\in \cj}\alpha_{j, \dist}=\mu_{\Psi_{\dist, y=0}}+|\ci|+1/2$$

We then write $\prob_{\dist}[\hat{o}\neq y|{\bf w}, y=0]$ as
\begin{align*}
   \prob_{\dist}[\hat{o}\neq y|{\bf w}, y=0] =  \prob[\Psi_{\dist, y=0} < 0] \leq \exp(-\delta^2_{\dist, y=0}\mu_{\hat{\Psi}_{\dist, y=0}}/2)
\end{align*}
where 
$$\delta_{\dist, y=0}=1-\frac{|\ci|+1/2}{\mu_{\hat{\Psi}_{\dist, y=0}}}=\frac{\mu_{{\Psi}_{\dist, y=0}}}{\mu_{\hat{\Psi}_{\dist, y=0}}}.$$

Let now $\gamma_{\dist, y=0}$ be the difference between true and false rates of sensors normalized over permissive models when $y=0$ such that

$$\gamma_{\dist, y=0}:= \frac{1}{|\ci|+1}(\alpha_{*, \dist}-1/2+\sum_{j\in\cj}\alpha_{j, \dist}-\sum_{i\in\ci}\epsilon_{i, \dist}).$$

Noting that $\mu_{\Psi_{\dist, y=0}}=(|\ci|+1)\gamma_{\dist, y=0}$, we have $\delta_{\dist, y=0}=\frac{(|\ci|+1)\gamma_{\dist}}{(|\ci|+1)\gamma_{\dist}+|\ci|+1/2}$ and $\mu_{\hat{\Psi}_{y=0}}=(|\ci|+1)\gamma_{\dist, y=0}+|\ci|+1/2$. Using Lemma~\ref{lemma} for a Poisson random variable $\hat{\Psi}_{y=0}$, we bound $\prob_{\dist}[\hat{o}\neq y|{\bf w}, y=0]$ as
\begin{equation}\label{eqn: y=0 final}
    \begin{split}
   &\prob_{\dist}[\hat{o}\neq y|{\bf w}, y=0] =  \prob[\Psi_{\dist, y=0} < 0] = \prob[\hat{\Psi}_{\dist, y=0} < |\ci|+1/2] \leq \exp(-\delta^2_{\dist, y=0}\mu_{\hat{\Psi}_{\dist, y=0}}/2)\\
   &= \exp\bigg(-\frac{ (|\ci|+1)^2\gamma_{\dist, y=0}^2}{2\Big( (|\ci|+1)\gamma_{\dist, y=0}+|\ci|+1/2\Big)}\bigg){\leq}\exp\bigg(-\frac{ (|\ci|+1)^2\gamma_{\dist, y=0}^2}{2\Big( (|\ci|+1)\gamma_{\dist, y=0}+|\ci|+1\Big)}\bigg)\\
   &\hspace{19em}=\exp\bigg(-(|\ci|+1)\frac{\gamma^2_{\dist, y=0}}{2(\gamma_{\dist, y=0}+1)}\bigg)
    \end{split}
\end{equation}


\paragraph{Last step:}
For convenience, let $n:=|\ci|=|\cj|$ and 
$$\gamma_{\dist}:=\min(\gamma_{\dist, y=1}, \gamma_{\dist, y=0}).$$ 
Using (\ref{eqn: y=1 final}) and (\ref{eqn: y=0 final}), we bound the pipeline accuracy in (\ref{eqn: pipeline accuracy proposition}) such that

\begin{equation}\label{eqn: pipeline accuracy proposition final}
    \begin{split}
    &\pipelineaccuracy \geq 1-\sum_{\distset}\pi_{\dist}\exp\bigg(-(n+1)\frac{\gamma^2_{\dist}}{2(\gamma_{\dist}+1)}\bigg)\\
    &\geq 1-\sum_{\distset}\pi_{\dist}\exp\bigg(-(n+1)\frac{\gamma^2_{\dist}}{4}\bigg).
    \end{split}
\end{equation}

Hence, if 
\begin{equation}\label{eqn: end of proposition}
    \begin{split}
    1-\exp\bigg(-(n+1)\frac{\gamma^2_{\dist}}{4}\bigg)>\mainaccuracy
    \end{split}
\end{equation}
for all $\distset$, then we have $\pipelineaccuracy>\mainaccuracy$. Manipulating (\ref{eqn: end of proposition}) for all $\distset$ concludes the proof.

\end{proof}

\subsection{Proof of Corollary~\ref{cor: homogenous models}}\label{subsec: proof of corollary}
We recall the respective setting as follows. We assume that the auxiliary models are \emph{homogeneous} for each type: permissive or preventative. For example, $\alpha_k$ is fixed with respect to $k\in\mathcal{I}\cup\mathcal{J}$, hence we drop the subscripts, i.e., $\alpha_{k, \dist}=\alpha$ and $\epsilon_{k, \dist}=\epsilon$. 
We assume that the same number of auxiliary models are used, namely $|\mathcal{I}|=|\mathcal{J}|=n$, and that the classes are balanced with $\prob_{\dist}(y=1)=\prob_{\dist}(y=0)$, for all $\distset$. Finally, we let $\alpha_{*, \benign}=1$ and $\alpha_{*, \adv}=0$, and $\alpha-\epsilon>0$. Then, the following holds.
\begin{corollary*}[\recallthm]
The weighted robust accuracy of \sys in the homogeneous setting satisfies
\begin{equation*}
    \begin{split}
      \mathcal{A}^{\text{\sys}} \geq1-\exp\big(-2n{(\alpha-\epsilon)^2}\big).
    \end{split}
\end{equation*}
In particular, one has $\lim_{n\rightarrow \infty}\mathcal{A}^{\text{\sys}}=1.$
\end{corollary*}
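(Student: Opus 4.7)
The plan is to specialize the argument underlying Theorem~\ref{proposition} to the homogeneous regime, since the extreme values $\alpha_{*,\benign}=1$ and $\alpha_{*,\adv}=0$ make the variance bound $v^2$ in Theorem~\ref{thm: main theorem} degenerate (the term $\log({}_{\scriptscriptstyle\vee}\alpha_*/(1-{}_{\scriptscriptstyle\wedge}\alpha_*))$ blows up). I will therefore work with the \emph{unweighted majority voting} parameter choice $\mathbf{w}=[0;\tfrac{1}{2};(1)_{i\in\ci};(1)_{j\in\cj}]$ introduced in the proof of Theorem~\ref{proposition}, which is known to be a lower bound on the accuracy achieved by optimal weights.

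First I would substitute the homogeneous parameters into the random vote
\[
\Delta_{\mathbf{w}}(y,s_*,s_{\ci},s_{\cj})=(2y-1)\Bigl(s_*-\tfrac{1}{2}+\sum_{i\in\ci}s_i-\sum_{j\in\cj}(1-s_j)\Bigr),
\]
and compute the conditional expectation under each $\distset$. Because $\alpha_{k,\dist}=\alpha$ and $\epsilon_{k,\dist}=\epsilon$ do not depend on $\dist$, the contribution of the $2n$ auxiliary models to $\E[\Delta_{\mathbf{w}}\mid y]$ simplifies to $n(\alpha-\epsilon)$, regardless of $y\in\{0,1\}$ or $\distset$. The main model merely shifts the expectation by $\alpha_{*,\dist}-\tfrac{1}{2}$, which is $+\tfrac{1}{2}$ for benign and $-\tfrac{1}{2}$ for adversarial. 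This symmetry is what allows the balanced average over $\distset$ to behave essentially as if the main model were absent.

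Next I would apply a Hoeffding-type concentration bound (Theorem~\ref{thm: azuma}) to $\Delta_{\mathbf{w}}$ written as a sum of $2n+1$ independent increments of bounded range, yielding
\[
\prob_{\dist}[\Delta_{\mathbf{w}}(y,\cdot)\le 0\mid y]\;\le\;\exp\!\Bigl(-\frac{2\bigl(\alpha_{*,\dist}-\tfrac{1}{2}+n(\alpha-\epsilon)\bigr)^{2}}{c(n)}\Bigr),
\]
where $c(n)$ captures the sum of squared ranges of the increments. Averaging over $y$ (via the balanced class assumption) and over $\distset$ (which, thanks to the symmetry above, cancels the $\pm\tfrac{1}{2}$ contribution of the main model in the dominant term) gives a bound of the form $\exp(-2n(\alpha-\epsilon)^{2})$ on the error probability. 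Thus $\mathcal{A}^{\sys}\ge 1-\exp(-2n(\alpha-\epsilon)^{2})$. Taking $n\to\infty$ with $\alpha-\epsilon>0$ drives the exponent to $-\infty$, so $\mathcal{A}^{\sys}\to 1$.

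\paragraph*{Main obstacle.} The delicate part is handling the degenerate main model without losing the right scaling in $n$. A naive invocation of Theorem~\ref{thm: main theorem} fails because $v^{2}=+\infty$, and a naive invocation of Theorem~\ref{proposition} yields only a $1/(n+1)$ factor in the exponent (since $\gamma_{\dist}$ is normalized by $n+1$), which after $\gamma_{\dist}^{2}$-squaring ends up being too loose by a constant factor compared with the desired $2n(\alpha-\epsilon)^{2}$. The fix is to (i) use the unweighted majority voting parameterization so that every increment has bounded range uniformly, (ii) exploit the homogeneity of auxiliary models across $\distset$ so that the benign/adversarial averaging reduces to a single Hoeffding inequality on $2n$ identically distributed Bernoulli votes with expected gap $n(\alpha-\epsilon)$, and (iii) verify that the $\pm\tfrac{1}{2}$ shifts from the main model are absorbed into lower-order terms of the exponent.
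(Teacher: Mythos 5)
Your high-level strategy (reduce to a majority vote over Bernoulli predictions and apply a Hoeffding-type tail bound) matches the paper's, but you are missing the one observation that makes the paper's proof work, and the step you flag as delicate would in fact fail. The paper does \emph{not} retreat to the suboptimal unweighted-voting weights $\mathbf{w}=[0;1/2;(1)_{i\in\ci};(1)_{j\in\cj}]$ from the proof of Theorem~\ref{proposition}. Instead it observes that under the corollary's hypotheses ($\alpha_{*,\benign}=1$, $\alpha_{*,\adv}=0$, balanced mixture and classes) the \emph{optimal} parameters derived in Appendix~\ref{subsec: generative model parameters} satisfy $w_*=b_*=0$ exactly, so the main task model drops out of the vote altogether; homogeneity then gives all auxiliary models the common weight $\log(\alpha/\epsilon)>0$ with cancelling biases, so that (up to this positive factor) $\Delta_{\mathbf{w}}$ conditioned on $y$ is exactly $d_{\alpha}-d_{\epsilon}$ with $d_{\alpha}\sim B(n,\alpha)$, $d_{\epsilon}\sim B(n,\epsilon)$, and the bounded-differences inequality is applied to these $2n$ votes with deviation $t=n(\alpha-\epsilon)$. (The paper also separately shows $\pipelineaccuracy>\mainaccuracy=1/2$ via a comparison lemma for two Binomials, which your sketch does not need for the statement as quoted.)

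By keeping the main model in the vote with weight $1/2$ you pay twice: on the adversarial distribution the conditional mean of $\Delta_{\mathbf{w}}$ is $n(\alpha-\epsilon)-1/2$ rather than $n(\alpha-\epsilon)$, and the sum of squared increment ranges is $2n+1$ rather than $2n$. The resulting bound $\exp\big(-2(n(\alpha-\epsilon)-1/2)^2/(2n+1)\big)$ is \emph{strictly weaker} than the target $\exp(-2n(\alpha-\epsilon)^2)$ for every $n\geq 1$ in the regime where the corollary is nontrivial, and is vacuous whenever $n(\alpha-\epsilon)\leq 1/2$ (the adversarial mean is then nonpositive). The $-1/2$ shift cannot be ``absorbed into lower-order terms'': it has the wrong sign and degrades the exponent by roughly a factor of two asymptotically. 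Your argument does establish $\lim_{n\to\infty}\mathcal{A}^{\text{\sys}}=1$, but not the stated quantitative inequality; the missing idea is precisely that the degenerate main-model assumptions annihilate $w_*$ at optimality. As a caveat that cuts both ways: even the clean $2n$-vote reduction only yields the constant $2$ in the exponent if one takes $v^2=n$ as the paper does, whereas a literal application of the bounded-differences inequality to $2n$ unit-range variables gives $v^2=2n$ and hence $\exp(-n(\alpha-\epsilon)^2)$; so the constant is delicate, and you cannot afford the additional slack your parameterization introduces.
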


\begin{proof}[Proof of Corollary~\ref{cor: homogenous models}]
First, for $\alpha_{*,\benign}=1$ and $\alpha_{*, \adv}=0$, using (\ref{eqn: w_*+b_*}) and (\ref{eqn: -w_*+b_*}), we note that 
\begin{align*}
    \mainsensorweight=b_*= 0.
\end{align*}

Secondly, in the homogeneous case, the conditional independence reflects to the mixture model and models become conditionally independent in the mixture model as well. That is, the condition on the other models in (\ref{eqn: w_i+b+i}, \ref{eqn: b+i}, \ref{eqn: w_j+b+j}, \ref{eqn: b_j}) drops and we have closed form expression for all optimal parameters. Namely, for $\alpha_{i, \dist}=\alpha_{j, \dist}=\alpha$ and $\epsilon_{i, \dist}=\epsilon_{j, \dist}=\epsilon$ with $\alpha>\epsilon$, once can deduce from (\ref{eqn: w_i+b+i}, \ref{eqn: b+i}, \ref{eqn: w_j+b+j}, \ref{eqn: b_j}) that the optimal weight of auxiliary sensors are given by $w_i=w_j=\log\frac{\alpha}{\epsilon}$ and $b=\sum_{i\in\mathcal{I}}b_i-\sum_{j\in\mathcal{J}}b_j=\sum_{i\in\mathcal{I}}\log\frac{1-\alpha}{1-\epsilon}-\sum_{j\in\mathcal{J}}\log\frac{1-\alpha}{1-\epsilon}=0$. Also, $w_i=w_j>0$ for $\alpha>\epsilon$. For this setting, we can write out $\pipelineaccuracy$ as follows.

\begin{align*}
    &\pipelineaccuracy=\E_{\distset}\E_{y\sim\mathcal{Y}}\big[\prob[d_{*, \dist}+yd_{\mathcal{I},\dist}+(1-y)d_{\mathcal{I},\dist}>0|y]\big]\stackrel{(*)}{=}\E_{y\sim\mathcal{Y}}\big[\prob[yd_{\mathcal{I},\dist}+(1-y)d_{\mathcal{I},\dist}>0|y]\big]\\
    &\stackrel{(**)}{=}\frac{1}{2}\big(\prob[d_{\mathcal{I},\dist}>0|y=1]+\prob[d_{\mathcal{J},\dist}>0|y=0]\big)\stackrel{(***)}{=}\prob[d_{\mathcal{I},\dist}>0|y=1]
\end{align*}
where (*) follows from the homogeneity of models over both benign and adversarial distributions as well as that $d_{*, \dist}=w_*(2s_*-1)=0$, (**) follows from the class balance, and finally (***) stems from the symmetry.

Let $B(n, p)$ denote the Binomial distribution with count parameter $n$ and success probability $p$. Let also that $d_{\alpha}$ and $d_{\epsilon}$ be random variables with Binomial distributions such that $d_{\alpha}\sim B(n, \alpha)$ and $d_{\epsilon}\sim B(n, \epsilon)$. We then rewrite the \Acc of \sys as follows.
\[
\pipelineaccuracy=\prob[d_{\mathcal{I},\dist}>0|y=1]=1-\prob[d_{\mathcal{I},\dist}<0|y=1]=1-\prob[w(d_{\alpha}-d_{\epsilon})<0|y=1]=1-\prob[d_{\alpha}-d_{\epsilon}<0]
\]
where the last equality follows from that $w=\log\frac{\alpha}{\epsilon}>0$.

We then review the Bounded Differences Inequality which will enable us to bound the tail probability $\prob[d_{\alpha}-d_{\epsilon}<0|y=1]$.

\begin{theorem}[Bounded Differences Inequality~\cite{boucheron2013concentration}]\label{thm: bounded differences}
Assume that a function $\phi: \mathcal{X}^n\rightarrow \mathbb{R}$ of independent random variables $X_1, ..., X_n \in \mathcal{X}$ satisfies the bounded differences property with constants $c_1, ..., c_n$. Denote $v^2=\sum_{i=[n]}c_i^2$ and $Z=\phi(X_1, ..., X_n)$. $Z$ satisfies:
\[
\prob(Z-\mathbb{E}(Z)>t)\leq \exp\big({-\frac{2t^2}{v^2}}\big) \hspace{1em} \text{and} \hspace{1em} 
\prob(Z-\mathbb{E}(Z)<-t)\leq \exp\big({-\frac{2t^2}{v^2}}\big) .
\]
\end{theorem}
We refer to, for example,~\cite{boucheron2013concentration} for a proof of Theorem~\ref{thm: bounded differences}.

Using Theorem~\ref{thm: bounded differences} for $Z=d_{\alpha}-d_{\epsilon}$, $\pipelineaccuracy$ can be bounded as:
\begin{align*}
    \pipelineaccuracy=1-\prob[d_{\alpha}-d_{\epsilon}<0]=1-\prob[d_{\alpha}-d_{\epsilon}-\E[d_{\alpha}-d_{\epsilon}]<-\E[d_{\alpha}-d_{\epsilon}]]=1-\prob[d_{\alpha}-d_{\epsilon}-n(\alpha-\epsilon)<-n(\alpha-\epsilon)].
\end{align*}
Moreover, for $t=n(\alpha-\epsilon)$ and $v^2 = n$ we finally have
\[
\pipelineaccuracy=1-\prob[d_{\alpha}-d_{\epsilon}-n(\alpha-\epsilon)<-n(\alpha-\epsilon)]\geq 1-\exp\big( -2(n^2(\alpha-\epsilon)^2)/n\big)=\geq 1-\exp\big( -2n(\alpha-\epsilon)^2\big)
\]
concludes the proof for the lower bound.

As the final step, we will prove that $\pipelineaccuracy>\mainaccuracy$. Note that $\mainaccuracy=\E_{\distset}\E_{y\sim\mathcal{Y}}\big[\prob[d_{*, \dist}>0|y]\big]=\pi_{\benign}\alpha_{*, \benign}+\pi_{\adv}\alpha_{*, \adv}=1/2\cdot 1+1/2\cdot 0 =1/2.$ Therefore, it only remains to analyze whether $\pipelineaccuracy>1/2$ or not. Towards that, we state the following result.

\begin{lemma}[On the comparison of two binomial random variables]\label{lemma: two binomial tail}
Let $p, q \in [0, 1]$ denote the success probabilities for two Binomial random variables. If $p>q$, then $\prob[X>Y]>\frac{1}{2}$.
\end{lemma}
\begin{proof}
Let $X$ and $Y$ be random variables such that $X\sim B(n, p)$ and $Y\sim B(n, q)$. $Z:=X-Y$ can be shown to have the following probability mass function
\begin{equation*}
\prob(Z=z) = \begin{cases}
\sum_{k\in\{0\}\cup[n]}f(k+z,n,p)f(k,n,q) &\text{if \hspace{.2em} $x\geq 0$}\\
\sum_{k\in\{0\}\cup[n]}f(k,n,p)f(k+z,n,q)&\text{elsewhere}\\
\end{cases}
\end{equation*}
where $f(k, n, p)={n\choose k} p^k(1-p)^{n-k}$ for $k\leq n$. Moreover, we have
$$
\prob(Z>0)=\prob(X-Y>0)=\sum_{\substack{z\in[n]\\ k\in\{0\}\cup[n]}}f(k+z,n,p)f(k,n,q), \hspace{1.5em} \prob(Z\leq0)=\sum_{\substack{z\in[n]\\ k\in\{0\}\cup[n]}}f(k,n,p)f(k+z,n,q).
$$
Note that if $p>q$, then $f(k+z,n,p)f(k,n,q)>f(k,n,p)f(k+z,n,q)$ for fixed $n, k \geq 0$. Hence, the summation over $z\in[n], k\in\{0\}\cup[n]$ leads to $\prob(Z>0)>\prob(Z\leq0)$. It is further implied by $\prob(Z>0)+\prob(Z\leq0)=1$ that $\prob(Z>0)>\frac{1}{2}$. 
\end{proof}
Using Lemma \ref{lemma: two binomial tail} for $X=d_{\alpha}$ and $Y=d_{\epsilon}$ as well as that $\alpha>\epsilon$, we have
\[
\pipelineaccuracy=\prob[d_{\alpha}-d_{\epsilon}>0]=\prob[d_{\alpha}>d_{\epsilon}]>1/2=\mainaccuracy.
\]
Hence the proof results.
\end{proof}


\section{Experimental Details}
\subsection{Detailed Setup of Baselines}
\label{appendix:detailed-setup-of-baselines}
To demonstrate the superior \sys, we compare it with two state-of-the-art baselines: \textbf{adversarial training}~\citep{madry2017towards} and \textbf{DOA}~\citep{wu2019defending}, which are strong defenses against $\mathcal{L}_p$ bounded attacks and physically realizable attacks respectively.

For adversarial training, we adopt $\mathcal{L}_\infty$ bound $\epsilon\in\{4,8,16,32\}$ during training phase. Since adversarial training failed to make progress for $\epsilon\in\{16,32\}$, we use the curriculum training version~\citep{cai2018curriculum}, where the model is firstly trained on smaller $\epsilon$ with $\epsilon$ gradually increasing to the largest bound. For all versions of adversarial training in our implementation, we adopt 40 iterations of PGD attack with a step size of 1/255. In all cases, pixels are in $0 \sim 255$ range and the retraining takes 3000 training iterations with a batch size of 200 for each random iteration.

For DOA, we consider adversarial patches with the size of $5\times5$ and $7\times7$ respectively for rectangle occlusion during retraining. For both cases, we use an exhaustive search to pick the attack location and perform 30 iterations PGD inside the adversarial patch to generate noise. The retraining takes 5000 training iterations and the batch size is 200.

Thus, in total, we have 7 baseline CNN models~(1 standard CNN model, 4 adversarially trained CNN models, 2 DOA trained CNN models), and we use id numbers $1\sim7$ to denote ``GTSRB-CNN", ``AdvTrain~($\epsilon=4$)", ``AdvTrain~($\epsilon=8$)", ``AdvTrain~($\epsilon=16$)", ``AdvTrain~($\epsilon=32$)", ``DOA~(5x5)", ``DOA~(7x7)", respectively in Figure~\ref{fig:general_accuracy}(a).

\subsection{Details of Attacks and Corruptions}
\label{appendix:details-of-attacks-and-corruptions}

Since our constructed \sys pipeline is a compound model consisting of multiple sub-models, some of which are not differentiable, we can not directly generate adversarial examples via the standard end-to-end white-box attack. Alternatively, we further propose three different attack settings to evaluate the robustness of our \sys pipeline: \textbf{1)White-box sensor attack,} where adversarial examples are generated by directly applying gradient methods to the main task model of the \sys pipeline in a white-box fashion; \textbf{2)Black-box sensor attack.} In this setting, we train substitute model of the main task model using the same model architecture and the same standard training data, and generate adversarial examples with this substitute model; \textbf{3)Black-box pipeline attack,} in which we generate adversarial examples with a substitute model, which is obtained via distilling the whole \sys pipeline. For this setting, a substitute model with the same GTSRB-CNN architecture is trained on a synthetic training set, where all the images are from the original training set, while the labels are generated by the pipeline model. Then all the models are evaluated on the same set of adversarial test samples crafted on the trained substitute.

Specifically, \textbf{1) For $\mathcal{L}_{\infty}$ attack,} we consider the strength of $\epsilon \in \{4, 8, 16, 32\}$ in our evaluation. 1000 iterations of standard PGD~\citep{madry2017towards} with a step size of 1/255 is used
to craft the adversarial examples, and all the three attack settings introduced above are respectively applied; 
\textbf{2) For unforeseen attacks}, we consider the Fog,
Snow, JPEG, Gabor and Elastic attacks suggested in \citet{kang2019testing}, which are all gradient-based worst-case adversarial attacks, generating diverse test distributions distinct from the common $\mathcal{L}_p$ bounded attacks. For Fog attack, we consider $\epsilon \in \{256,512\}$. For Snow attack, we evaluate for $\epsilon \in \{0.25,0.75\}$ respectively. For JPEG attack, we
adopt the parameters $\epsilon \in \{0.125, 0.25\}$. For Gabor attack, $\epsilon \in \{20,40\}$ are tested. Finally, $\epsilon \in \{1.5,2.0\}$ are considered for Elastic attack. Since all of these attacks are gradient based, we also apply the three different settings above to generate adversarial examples respectively; 
\textbf{3) For physical attacks on stop signs,} we directly use the same stickers (i.e., the same color and mask) generated in \citet{eykholt2018robust} to attack the same 40 stop sign samples, and we also adopt the same end-to-end classification model used in \citet{eykholt2018robust} to construct \sys model. Since our ultimate goal is defense, we follow the same practice in \citet{wu2019defending}, where we only consider the digital representation of the attack instead of the real physical implementation, ignoring issues like the attack’s robustness to different viewpoints and environments. Thus, we implement the physical stop sign attack by directly placing the stickers on the stop sign samples in digital space; 
\textbf{4) For common corruptions,} we evaluate our models with the
15 categories of corruptions suggested in \citet{hendrycks2019benchmarking}. Empirically, in our traffic sign identification task, only 3 types of corruptions out of the 15 categories effectively reduce the accuracy (with a margin over $10\%$) of our standard GTSRB-CNN model. Thus, we only present the evaluation results of our models against the three most successful corruption — Fog, Contrast, Brightness. (Note that, here we use Fog corruption which is similar to the Fog attack in unforeseen attacks. However, they are different in that the Fog corruption here is not adversarially generated like that in Fog attack.)

Thus, based on different attack/corruption methods and attack settings, in total, we have 46 different attacks/corruptions. In Figure~\ref{fig:general_accuracy}(b)(c), we use id numbers $1\sim46$ to denote all the attacks we evaluate on, and we present the correspondence between id numbers and attacks in Table~\ref{table:correspondece_between_id_and_attack}. Moreover, besides the two representative baselines presented in the main body, we present the complete robustness improvement results under the 46 types of attacks/corruptions for all baselines in Figure~\ref{fig:complete_robustness_improvement}.

\begin{figure}
    \centering
    
    \subfigure[Baseline: KEMLP over GTSRB-CNN]{\includegraphics[width=0.48\textwidth]{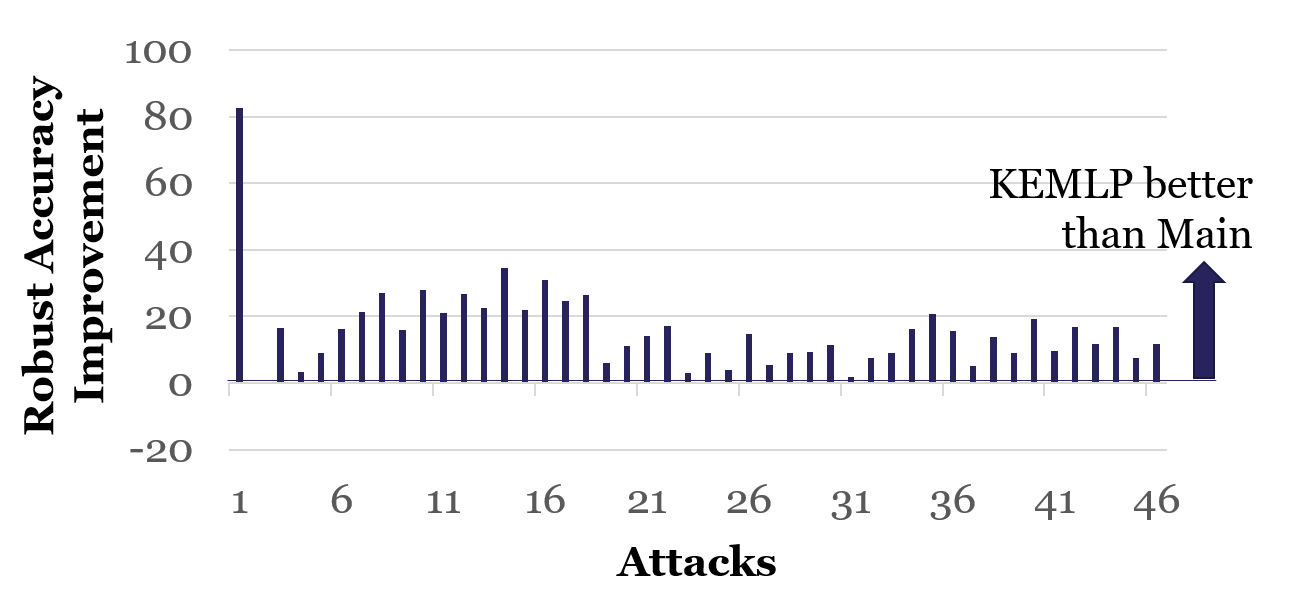}} 
    \subfigure[Baseline: KEMLP over AdvTrain~($\epsilon = 4 $)]{\includegraphics[width=0.48\textwidth]{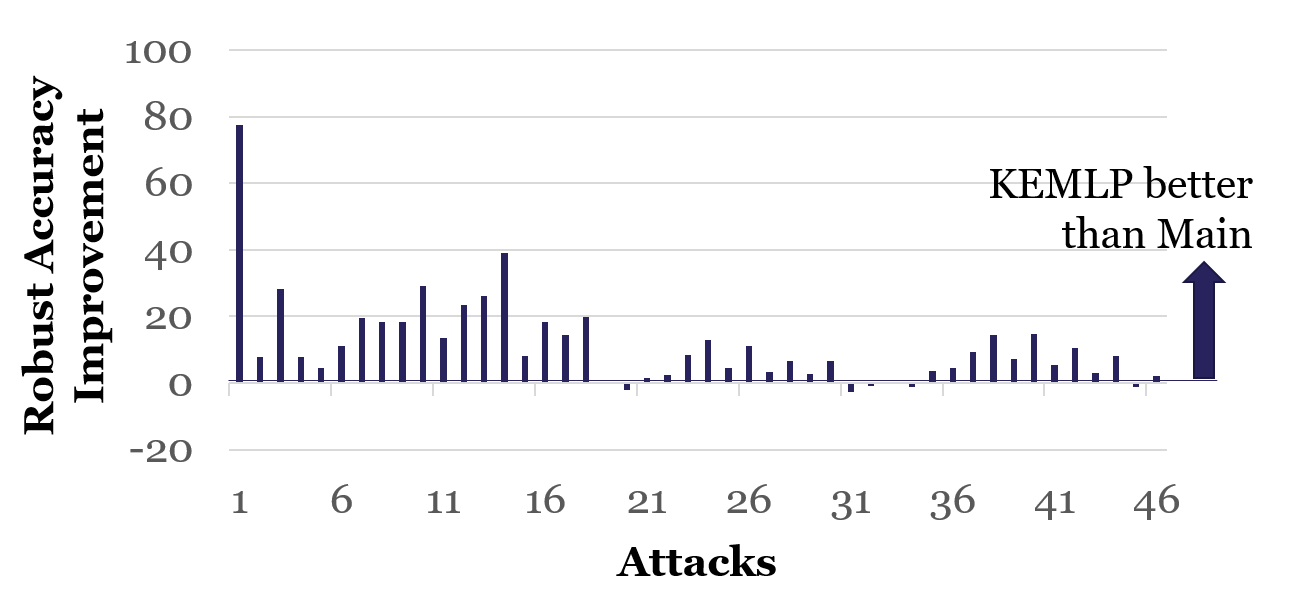}} 
    \subfigure[Baseline: KEMLP over AdvTrain~($\epsilon = 8 $)]{\includegraphics[width=0.48\textwidth]{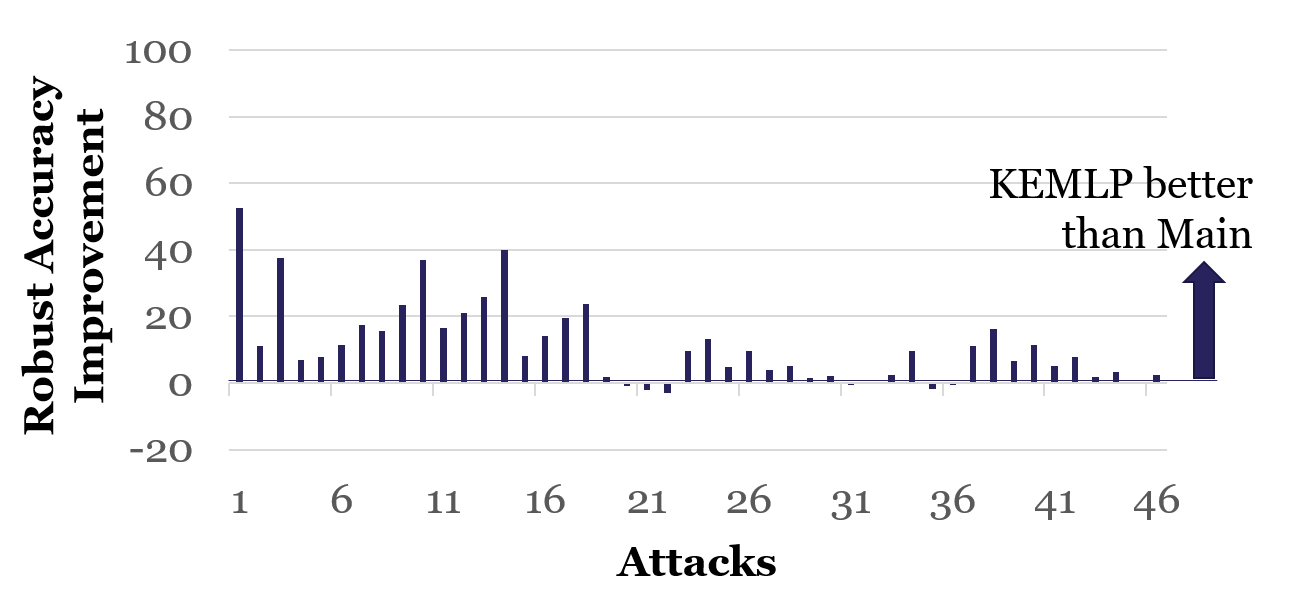}} 
    \subfigure[Baseline: KEMLP over AdvTrain~($\epsilon = 16 $)]{\includegraphics[width=0.48\textwidth]{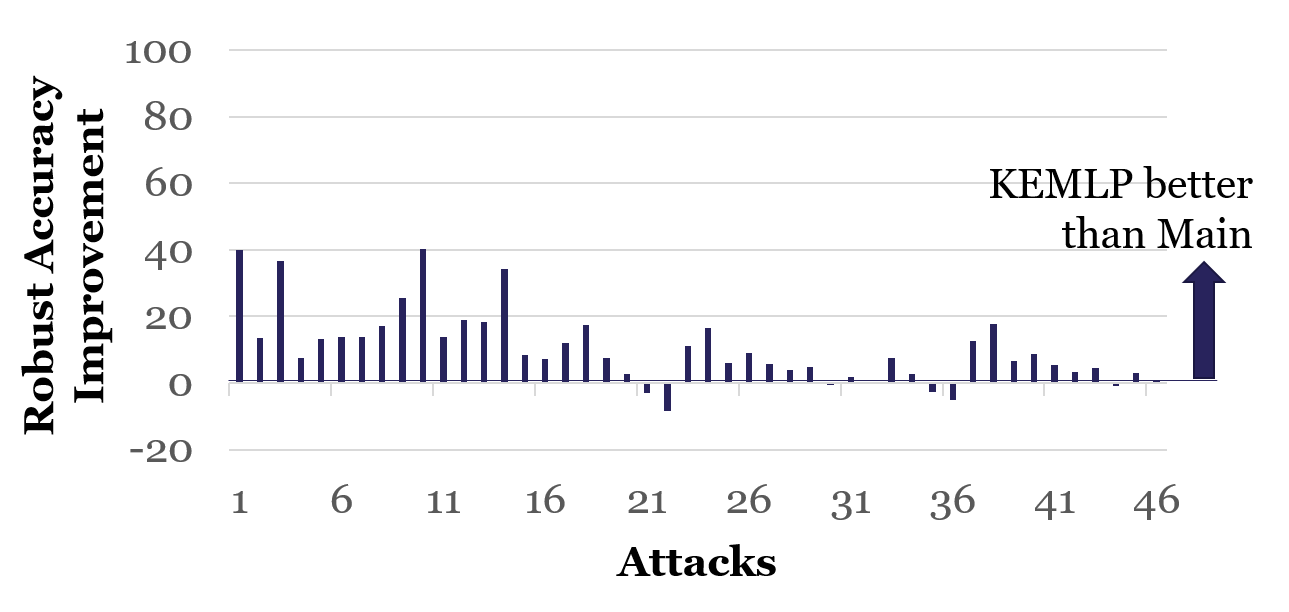}}
    \subfigure[Baseline: KEMLP over AdvTrain~($\epsilon = 32 $)]{\includegraphics[width=0.48\textwidth]{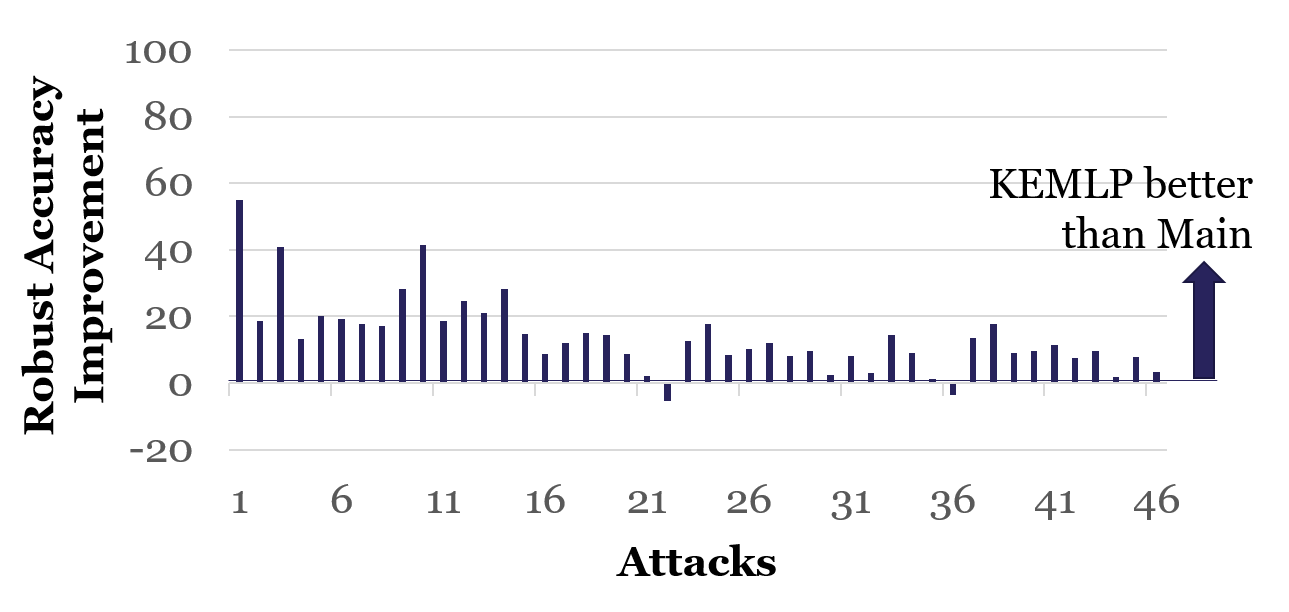}}
    \subfigure[Baseline: KEMLP over DOA~(5x5)]{\includegraphics[width=0.48\textwidth]{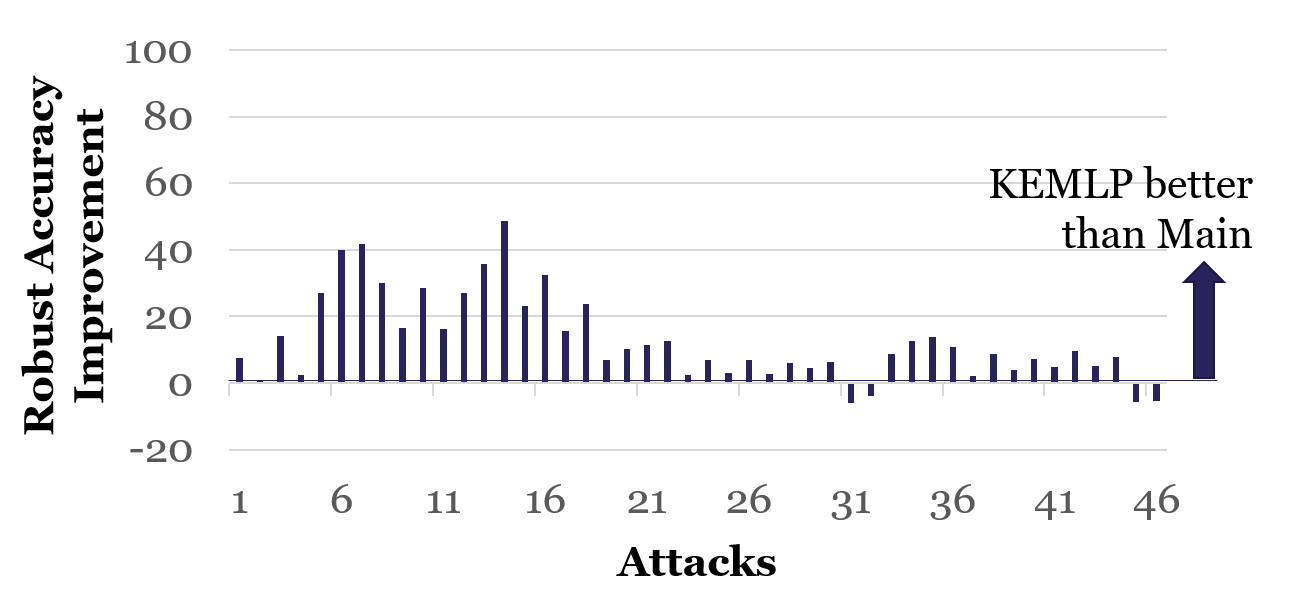}}
    \subfigure[Baseline: KEMLP over DOA~(7x7)]{\includegraphics[width=0.48\textwidth]{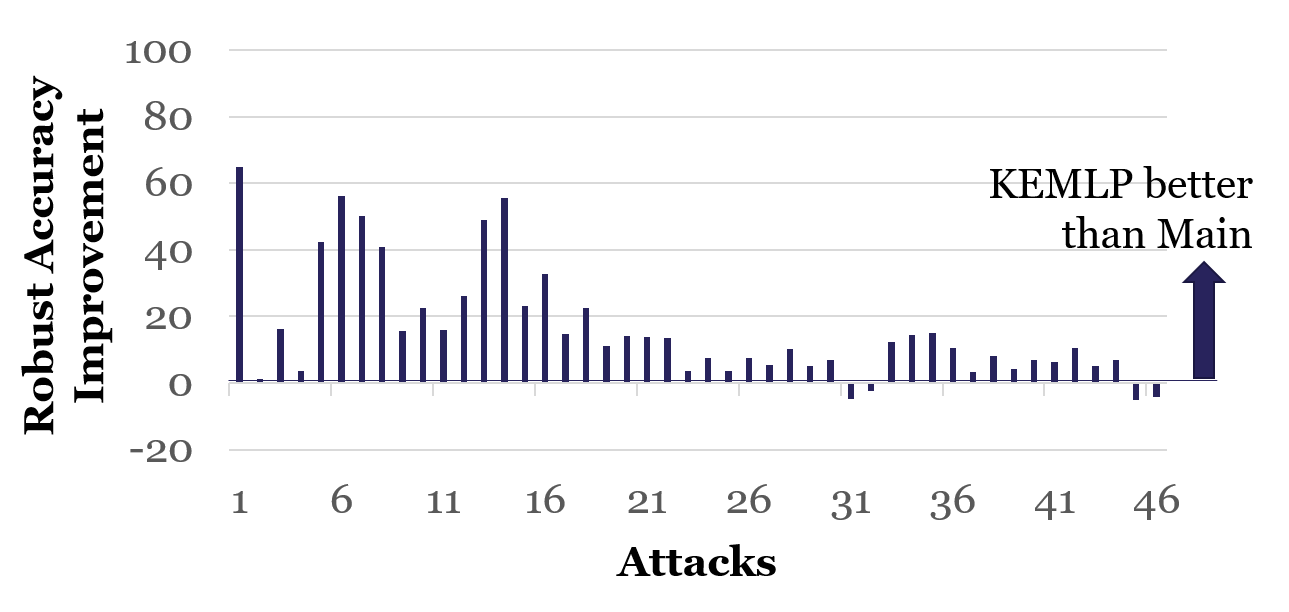}}
    
    \caption{Improvement of robustness accuracy after being enhanced by \sys.~($\alpha = 0.5$)}
    \label{fig:complete_robustness_improvement}
\end{figure}

\begin{table*}[tp]
    \caption{Correspondence between id numbers and attacks/corruptions}
    \label{table:correspondece_between_id_and_attack}
    \centering
    \resizebox{1\textwidth}{!}{
    \begin{tabular}{c|c|c|c|c}
        \toprule
            1 & 2 & 3 & 4 & 5  \\
            \cline{1-5} Physical Attack & Fog Corruption & Contrast Corruption & Brightness Corruption & $\mathcal{L}_\infty$ Attack~($\epsilon=4$, whitebox sensor)\\
            \hline
        \hline
            6 & 7 & 8 & 9 & 10 \\
            \cline{1-5} $\mathcal{L}_\infty$ Attack~($\epsilon=8$, whitebox sensor) & $\mathcal{L}_\infty$ Attack~($\epsilon=16$, whitebox sensor) & $\mathcal{L}_\infty$ Attack~($\epsilon=32$, whitebox sensor) & Fog Attack~($\epsilon=256$, whitebox sensor)  & Fog Attack~($\epsilon=512$, whitebox sensor) \\
            \hline
        \hline
            11 & 12 & 13 & 14 & 15  \\
            \cline{1-5} Snow Attack~($\epsilon=0.25$, whitebox sensor) & Snow Attack~($\epsilon=0.75$, whitebox sensor) & Jpeg Attack~($\epsilon=0.125$, whitebox sensor) & Jpeg Attack~($\epsilon=0.25$, whitebox sensor) & Gabor Attack~($\epsilon=20$, whitebox sensor)\\
            \hline
        \hline
            16 & 17 & 18 & 19 & 20  \\
            \cline{1-5} Gabor Attack~($\epsilon=40$, whitebox sensor) & Elastic Attack~($\epsilon=1.5$, whitebox sensor) & Elastic Attack~($\epsilon=2.0$, whitebox sensor) & $\mathcal{L}_\infty$ Attack~($\epsilon=4$, blackbox sensor) & $\mathcal{L}_\infty$ Attack~($\epsilon=8$, blackbox sensor)\\
            \hline
        \hline
            21 & 22 & 23 & 24 & 25  \\
            \cline{1-5} $\mathcal{L}_\infty$ Attack~($\epsilon=16$, blackbox sensor) & $\mathcal{L}_\infty$ Attack~($\epsilon=32$, blackbox sensor) & Fog Attack~($\epsilon=256$, blackbox sensor)  & Fog Attack~($\epsilon=512$, blackbox sensor) & Snow Attack~($\epsilon=0.25$, blackbox sensor)\\
            \hline
        \hline
            26 & 27 & 28 & 29 & 30\\
            \cline{1-5} Snow Attack~($\epsilon=0.75$, blackbox sensor) & Jpeg Attack~($\epsilon=0.125$, blackbox sensor) & Jpeg Attack~($\epsilon=0.25$, blackbox sensor) & Gabor Attack~($\epsilon=20$, blackbox sensor) & Gabor Attack~($\epsilon=40$, blackbox sensor)\\
            \hline
        \hline
            31 & 32 & 33 & 34 & 35\\
            \cline{1-5} Elastic Attack~($\epsilon=1.5$, blackbox sensor) & Elastic Attack~($\epsilon=2.0$, blackbox sensor) & $\mathcal{L}_\infty$ Attack~($\epsilon=4$, blackbox pipeline) & $\mathcal{L}_\infty$ Attack~($\epsilon=8$, blackbox pipeline) & $\mathcal{L}_\infty$ Attack~($\epsilon=16$, blackbox pipeline)\\
            \hline
        \hline
            36 & 37 & 38 & 39 & 40\\
            \cline{1-5} $\mathcal{L}_\infty$ Attack~($\epsilon=32$, blackbox pipeline) & Fog Attack~($\epsilon=256$, blackbox pipeline)  & Fog Attack~($\epsilon=512$, blackbox pipeline) & Snow Attack~($\epsilon=0.25$, blackbox pipeline) & Snow Attack~($\epsilon=0.75$, blackbox pipeline)\\
            \hline
        \hline
            41 & 42 & 43 & 44 & 45\\
            \cline{1-5} Jpeg Attack~($\epsilon=0.125$, blackbox pipeline) & Jpeg Attack~($\epsilon=0.25$, blackbox pipeline) & Gabor Attack~($\epsilon=20$, blackbox pipeline) & Gabor Attack~($\epsilon=40$, blackbox pipeline) & Elastic Attack~($\epsilon=1.5$, blackbox pipeline)\\
            \hline
        \hline
            46 &  &  &  & \\
            \cline{1-5} Elastic Attack~($\epsilon=2.0$, blackbox pipeline) &  &  & \\
            \hline
        \bottomrule
    
    \end{tabular}
    }
\end{table*}

\subsection{Implementation Details of KEMLP Pipeline for Traffic Sign Identification}
\label{appendix:implementation-details}


\begin{figure*}[t!]
\centering
\includegraphics[width=0.9\textwidth]{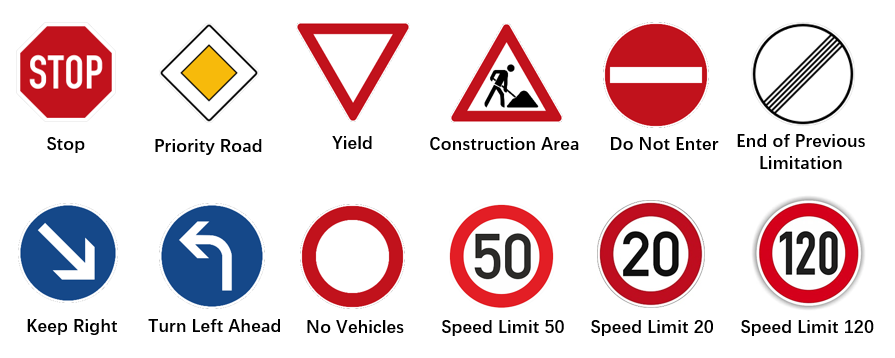}
\caption{The selected 12 types of signs from the full GTSRB.}
\label{fig:selected_signs}
\end{figure*}

To implement a nontrivial \sys pipeline for traffic sign identification, we need to design informative knowledge rules, connecting useful sensory information to each type of traffic sign. The full GTSRB dataset contains 43 types of signs, thus it requires a large amount of fine-grained sensory information and corresponding knowledge rules to distinguish between different signs, which requires a heavy engineering workload. Since the main purpose of this work is to illustrate the knowledge enhancement methodology rather than engineering practice, alternatively, we only consider a 12-class subset~(as shown in Figure~\ref{fig:selected_signs}) in our experiment, where the selected signs have diverse appearance and high frequencies.

For detailed \sys pipeline implementation, we consider two orthogonal domains --- logic domain and sensing domain, respectively. 

\begin{figure*}[t!]
\centering
\includegraphics[width=0.9\textwidth]{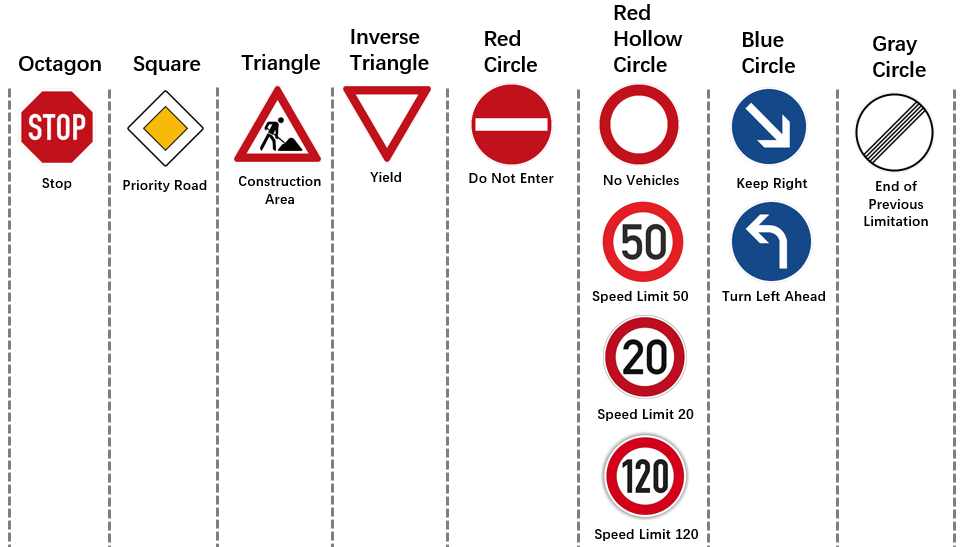}
\caption{Border patterns of the selected signs.}
\label{fig:border_type}
\end{figure*}

\underline{\textbf{In the logic domain}}, based on the specific tasks we need to deal with, we design a set of knowledge rules, which determine the basic logical structure of the predefined reasoning model. Specifically, for our task of traffic sign identification on the 12-class dataset, in total, we have designed 12 pieces of permissive knowledge rules and 12 pieces of preventative knowledge rules for the selected 12 types of signs. Each type of sign shares exactly one permissive knowledge rule and one preventative knowledge rule, respectively. 

In our design, we take \textit{border patterns} and \textit{sign contents} of the traffic signs as the sensory information to construct knowledge rules. As shown in Figure~\ref{fig:border_type}, based on the border pattern, we can always construct a preventative knowledge rule for each sign based on its border in the form as \textit{if it is a stop sign, it should be of the shape of octagon}. In our 12-class set, since there are six types of signs~(``Stop", ``Priority Road", ``Construction Area", ``Yield", ``Do Not Enter", ``End of Previous Limitation") sharing the unique border pattern, we also design an permissive rule for each of the six classes based on their borders, e.g. \textit{if the sign is of the shape of octagon, it must be a stop sign}. Then, for the rest of the six types~(``No Vehicles", ``Speed Limit 50", ``Speed Limit 20", ``Speed Limit 120", ``Keep Right", ``Turn Left Ahead"), whose borders can not uniquely determine their identity, we use their unique sign content to design permissive rules for them. Specifically, we define the content pattern \textit{Blank Circle}, \textit{Digits-20}, \textit{Digits-50}, \textit{Digits-120}, \textit{Arrow-Right-Down}, \textit{Arrow-Left-Ahead} to distinguish between these signs. We present the permissive relations in Figure~\ref{fig:permissive_relations}.

\begin{figure*}[t!]
\centering
\includegraphics[width=0.9\textwidth]{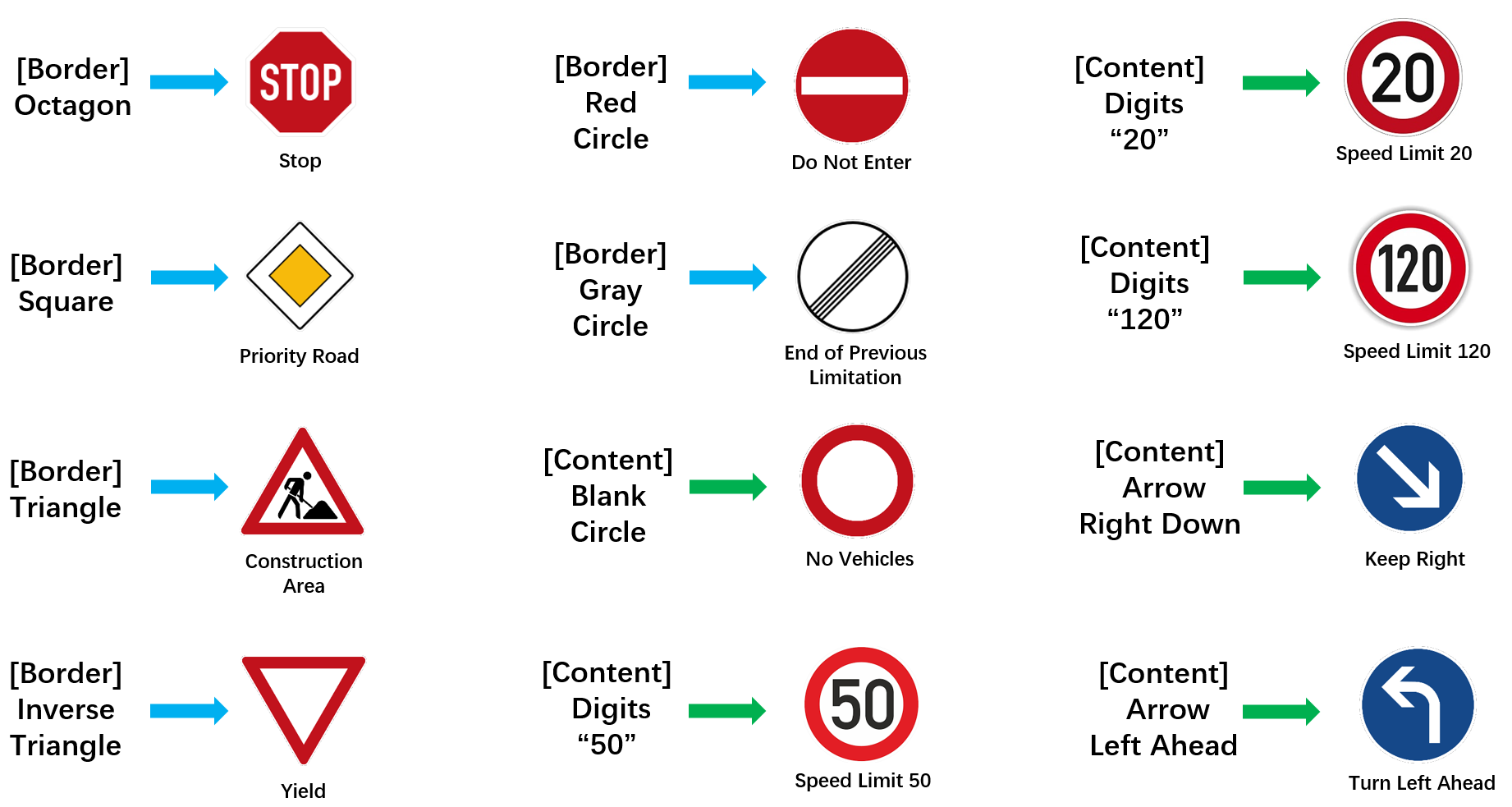}
\caption{permissive relations for each sign.}
\label{fig:permissive_relations}
\end{figure*}


\underline{\textbf{In the sensing domain}}, the principal task is to design a set of reliable auxiliary models to identify those sensory information required by the knowledge rules defined in the logic domain. For traffic sign identification, we adopt a non-neural pre-processing plus neural identification workflow to identify the border and content of each type. Specifically, to identify the border type~(e.g. shape and color), we first use GrabCut~\citep{rother2004grabcut} to get the mask of the sign and then discard all pixels of sign content and background, only retaining the border pixels, and finally a binary CNN classifier is used to make the statistical prediction~(e.g. predict whether the shape is octagon only based on the border pixels). For sign content, similarly, we first use GrabCut to filter out all irrelevant pixels except for the sign content, and then the edge operator will extract the contour of the content, finally CNN models are applied to recognize specific features like digits, arrows and characters. In Figure~\ref{fig:sensor_implementation}, we provide an overview of the workflow of our implemented auxiliary models.

\begin{figure*}[t!]
\centering
\includegraphics[width=0.5\textwidth]{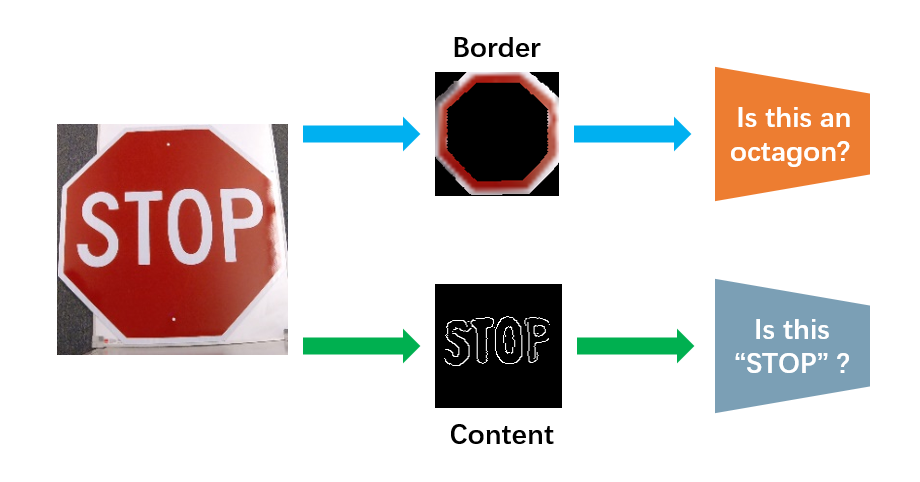}
\caption{Overview: workflow of the auxiliary models.}
\label{fig:sensor_implementation}
\end{figure*}

In total, in our \sys pipeline, we implement 19 submodels --- 1) One end-to-end GTSRB-CNN classifier~\citep{eykholt2018robust} as the main task model; 2) 8 binary preventative models for all 8 types of borders; 3) 6 binary permissive models for the 6 border types, each of which is shared only by a unique class of sign; 4) 3 binary permissive models based on edge map of sign content~(\textit{Blank Circle}, \textit{Arrow-Right-Down}, \textit{Arrow-Left-Ahead}); 5) A single permissive model for digit recognition, which is used to identify \textit{Digits-20}, \textit{Digits-50}, \textit{Digits-120}. All of the 17 binary classification neural models adopt the same backbone architecture in GTSRB-CNN and the rest digit recognition model adopts the architecture proposed in \citet{goodfellow2013multi}.

\textbf{Training Details.} 
To make our \sys pipeline function normally as the way we expect, next, we consider the training issues of the overall model.

Given the definition of permissive and preventative models, ideally, the permissive models should have low false rate and nontrivial truth rate, while the preventative models should have high truth rate and nontrivial false rate. These conditions are very critical for auxiliary models to bring accuracy improvement into the \sys pipeline. We guarantee the conditions to hold by assigning biased weights to classification loss on positive samples and negative samples during the training stage. Specifically, we train all of our binary auxiliary models with the following loss function:
$$
L(\mathcal{D},f) = a \mathbb{E}_{x \sim \mathcal{D^+}}[CE(f(x),1)] + b \mathbb{E}_{x \sim \mathcal{D^-}}[CE(f(x),0)],
$$
where $\mathcal{D} = \{D^+, D^-\}$ is the dataset, $D^+$ is the subset containing positive samples, $D^-$ is the subset containing negative samples, $f$ is the classifier and $CE$ is the crossentroy loss. For permissive model, we set $a << b$, so that low false rate will be encouraged at the cost of truth rate; while for preventative sensors, we set $a >> b$, then we can expect a high truth rate at the cost of some false rate. 

Besides the performance of each individual model, we also need to get proper weights for the reasoning graphical model in the \sys pipeline. Empirically, in our traffic sign identification task, since the end-to-end main task model has almost perfect accuracy on clean data, directly training on clean data will always give the main task model a dominant weight, leading to a trivial pipeline model. Thus, during training, we augment the training set with artificial adversarial samples, where the sensing signal from the main task model is randomly flipped. As a result, during training, to make correct predictions on these artificial adversarial samples, the optimizer must also assign nontrivial weights to other auxiliary models. We call the ratio of such artificial adversarial samples in the training set the ``adversarial ratio" in our context, indicating prior belief on the balance between benign and adversarial distributions, and use $\alpha$ to denote it. In our evaluation, we test different settings of $\alpha\in\{0,0.1,0.2,0.3,0.4,0.5,0.6,0.7,0.8,0.9,1.0\}$ and report the best results in Table~\ref{table:white_box_sensor_linf},\ref{table:black_box_sensor_linf},\ref{table:black_box_pipeline_linf},\ref{table:stop_sign_attack},\ref{table:white_box_sensor_unforeseen},\ref{table:black_box_sensor_unforeseen},\ref{table:black_box_pipeline_unforeseen},\ref{table:common_corruption}. In particular, we use $\alpha=0.8$ in Table~\ref{table:white_box_sensor_linf},\ref{table:white_box_sensor_unforeseen}, $\alpha=0.2$ in Table~\ref{table:black_box_sensor_linf},\ref{table:black_box_pipeline_linf},\ref{table:black_box_sensor_unforeseen},\ref{table:black_box_pipeline_unforeseen},\ref{table:common_corruption} and $\alpha=0.4$ in Table~\ref{table:stop_sign_attack}. Moreover, we also present the performance of \sys against different attacks with a fixed $\alpha = 0.5$ in Figure~\ref{fig:complete_robustness_improvement}.

For all the neural models, we use the standard Stochastic Gradient Descent Optimizer for training. The optimizer adopts a learning rate of $10^{-2}$, momentum of 0.9 and weight decay of $10^{-4}$. In all the training cases, we use 50000 training iterations with a batch size of 200 for each random training iteration. To train the weights of the graphical model in the pipeline, we perform Maximum Likelihood Estimate~(MLE) with the standard gradient descent algorithm, and we use a learning rate of $10^{-1}$ and run 4000 training iterations with a batch size of 50 for each random iterations.


\subsection{Visualization of Adversarial Examples and Corrupted Samples}
\label{appendix:examples-of-generated-adversarial-test-samples}

\begin{figure*}[t!]
\centering
\includegraphics[width=1.0\textwidth]{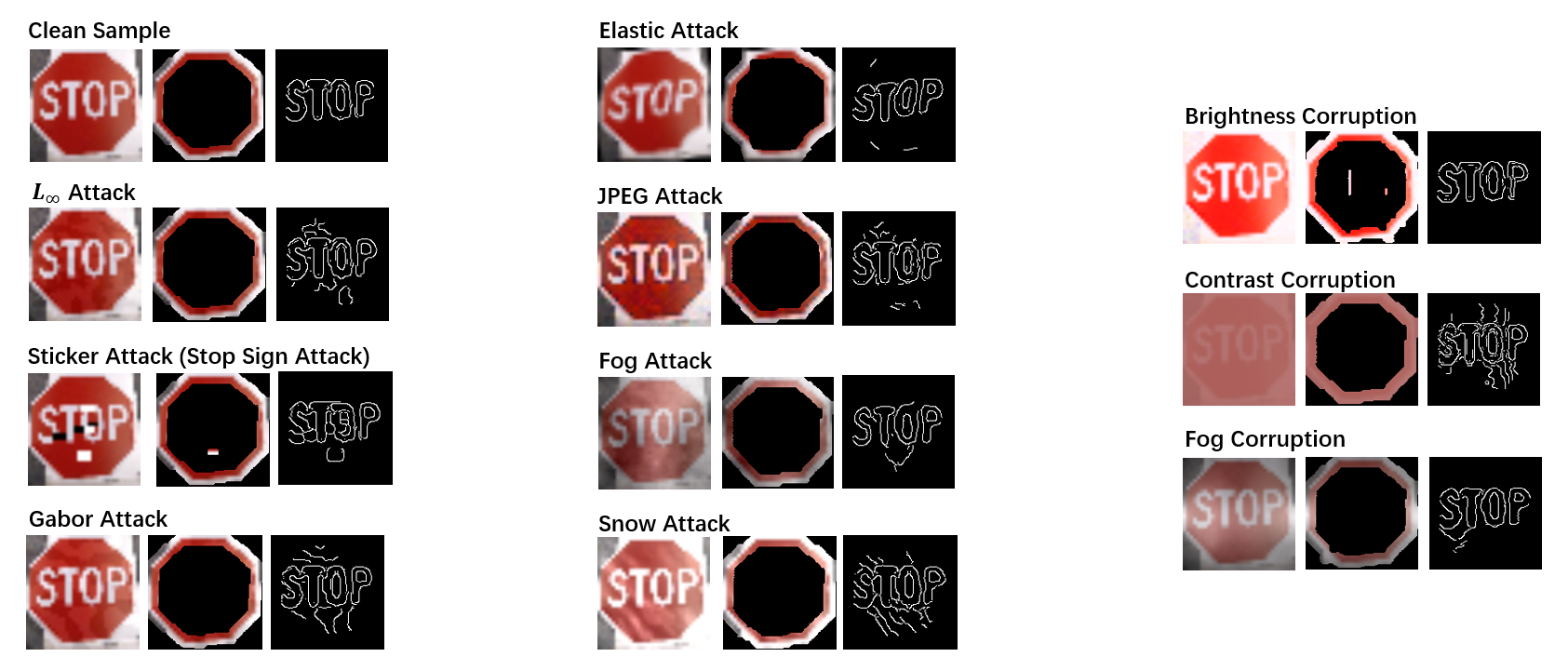}
\caption{Visualization of adversarial examples and corrupted samples.}
\label{fig:generated_adversarial_examples}
\end{figure*}

In Figure~\ref{fig:generated_adversarial_examples}, we provide a visualization of the generated adversarial examples~(corrupted samples) that are used for robustness evaluation in our work. For each type of attack~(corruption), we present the generated example~(the first image in each block), the extracted border~(the second image in each block), and the sign content~(the third image in each block) from the sample. 

As we can see, although the adversarial examples can easily fool an end-to-end neural network based main task model, the non-neural GrabCut algorithm and edge operator can still correctly extract the border and sign content from them. This allows other auxiliary models help to rectify the mistakes made by the main task model.

\subsection{Additional Experiment Results}
\label{appendix:additional_experiment_results}

In the main text, we have presented our evaluation results under the setting of whitebox sensor attack. In this subsection, we present the evaluation results of $\mathcal{L}_\infty$ attack and unforeseen attacks under blackbox sensor and blackbox pipeline attack settings. Specifically, we present the two blackbox results for $\mathcal{L}_\infty$ attack in table~\ref{table:black_box_sensor_linf} and table~\ref{table:black_box_pipeline_linf}, and accordingly the two blackbox results for unforeseen attacks in table~\ref{table:black_box_sensor_unforeseen} and table~\ref{table:black_box_pipeline_unforeseen}.

As shown, similar trends in whitebox sensor attack setting can also be observed in these two blackbox attack settings, which indicates that the robustness is not just coming from gradient masking~\citep{carlini2017towards,athalye2018obfuscated}.

\begin{table*}[tp]
    \caption{Adversarial accuracy under black-box sensor $\mathcal{L}_\infty$ attack, $\alpha=0.2$ (Accuracy $\%$)}
    \label{table:black_box_sensor_linf}
    \centering
    \resizebox{0.7\textwidth}{!}{
    \begin{tabular}{c|c|c|c|c|c|c}
        \toprule
             & & $\epsilon=0$ & $\epsilon=4$ & $\epsilon=8$ & $\epsilon=16$ & $\epsilon=32$ \\
             \hline
        \hline
            \multirow{2}{*}{GTSRB-CNN} & Main & $\bm{99.38}$ & $85.16$ & $67.98$ & $47.56$ & $25.69$ \\
            \cline{2-7} & KEMLP & $98.28(\tr{-1.10})$ & $91.36(\tg{+6.20})$ & $79.53(\tg{+11.55})$ & $61.21(\tg{+13.65})$ & $41.85(\tg{+16.16})$\\
            \hline
        \hline
            \multirow{2}{*}{AdvTrain ($\epsilon=4$)} & Main & $97.94$ & $94.88$ & $90.23$ & $72.99$ & $50.75$ \\
            \cline{2-7} & KEMLP & $97.89(\tr{-0.05})$ & $\bm{95.88}(\tg{+1.00})$ & $\bm{90.66}(\tg{+0.43})$ & $77.01(\tg{+4.02})$ & $55.56(\tg{+4.81})$\\
            \hline
        \hline
            \multirow{2}{*}{AdvTrain ($\epsilon=8$)} & Main & $93.72$ & $91.49$ & $89.02$ & $80.56$ & $64.76$ \\
            \cline{2-7} & KEMLP & $96.79(\tg{+3.07})$ & $94.29(\tg{+2.80})$ & $90.23(\tg{+1.21})$ & $\bm{81.40}(\tg{+0.84})$ & $65.92(\tg{+1.16})$\\
            \hline
        \hline
            \multirow{2}{*}{AdvTrain ($\epsilon=16$)} & Main & $84.54$ & $83.05$ & $82.00$ & $79.76$ & $\bm{73.20}$ \\
            \cline{2-7} & KEMLP & $94.68(\tg{+10.14})$ & $90.72(\tg{+7.67})$ & $86.52(\tg{+4.52})$ & $80.02(\tg{+0.26})$ & $70.47(\tr{-2.73})$\\
            \hline
        \hline
            \multirow{2}{*}{AdvTrain ($\epsilon=32$)} & Main & $74.74$ & $73.64$ & $72.79$ & $71.91$ & $67.77$ \\
            \cline{2-7} & KEMLP & $91.46(\tg{+16.72})$ & $86.60(\tg{+12.96})$ & $81.66(\tg{+8.87})$ & $75.69(\tg{+3.78})$ & ${66.77}(\tr{-1.00})$\\
            \hline
        \hline
            \multirow{2}{*}{DOA (5x5)} & Main & $97.43$ & $84.93$ & $70.70$ & $52.44$ & $33.15$ \\
            \cline{2-7} & KEMLP & $97.45(\tg{+0.02})$ & $92.21(\tg{+7.28})$ & $81.56(\tg{+10.86})$ & $64.07(\tg{+11.63})$ & $45.70(\tg{+12.55})$\\
            \hline
        \hline
            \multirow{2}{*}{DOA (7x7)} & Main & $97.27$ & $79.48$ & $65.77$ & $48.71$ & $30.99$ \\
            \cline{2-7} & KEMLP & $97.22(\tr{-0.05})$ & $90.56(\tg{+11.08})$ & $80.20(\tg{+14.43})$ & $62.55(\tg{+13.84})$ & $44.24(\tg{+13.25})$\\
            \hline
        \bottomrule
    
    \end{tabular}
    }
\end{table*}

\begin{table*}[tp]
    \caption{Adversarial accuracy under black-box pipeline $\mathcal{L}_\infty$ attack, $\alpha=0.2$ (Accuracy $\%$)}
    \label{table:black_box_pipeline_linf}
    \centering
    \resizebox{0.7\textwidth}{!}{
    \begin{tabular}{c|c|c|c|c|c|c}
        \hline
             & & $\epsilon=0$ & $\epsilon=4$ & $\epsilon=8$ & $\epsilon=16$ & $\epsilon=32$ \\
             \hline
        \hline
            \multirow{2}{*}{GTSRB-CNN} & Main & $\bm{99.38}$ & $81.17$ & $60.52$ & $37.60$ & $24.28$ \\
            \cline{2-7} & KEMLP & $98.28(\tr{-1.10})$ & $89.76(\tg{+8.59})$ & $76.18(\tg{+15.66})$ & $56.07(\tg{+18.47})$ & $37.50(\tg{+13.22})$\\
            \hline
        \hline
            \multirow{2}{*}{AdvTrain ($\epsilon=4$)} & Main & $97.94$ & $94.42$ & $88.32$ & $66.08$ & $46.60$ \\
            \cline{2-7} & KEMLP & $97.89(\tr{-0.05})$ & $\bm{95.88}(\tg{+1.46})$ & $\bm{89.61}(\tg{+1.29})$ & $71.91(\tg{+5.83})$ & $51.57(\tg{+4.97})$\\
            \hline
        \hline
            \multirow{2}{*}{AdvTrain ($\epsilon=8$)} & Main & $93.72$ & $90.72$ & $87.11$ & $75.49$ & $58.64$ \\
            \cline{2-7} & KEMLP & $96.79(\tg{+3.07})$ & $94.16(\tg{+3.44})$ & $89.40(\tg{+2.29})$ & ${77.31}(\tg{+1.82})$ & $60.26(\tg{+1.62})$\\
            \hline
        \hline
            \multirow{2}{*}{AdvTrain ($\epsilon=16$)} & Main & $84.54$ & $82.87$ & $81.46$ & $77.13$ & $\bm{70.09}$ \\
            \cline{2-7} & KEMLP & $94.68(\tg{+10.14})$ & $90.87(\tg{+8.00})$ & $86.37(\tg{+4.91})$ & $\bm{78.06}(\tg{+0.93})$ & ${68.44}(\tr{-1.65})$\\
            \hline
        \hline
            \multirow{2}{*}{AdvTrain ($\epsilon=32$)} & Main & $74.74$ & $73.66$ & $72.35$ & $70.16$ & $66.08$ \\
            \cline{2-7} & KEMLP & $91.46(\tg{+16.72})$ & $86.70(\tg{+13.04})$ & $81.74(\tg{+9.39})$ & $73.46(\tg{+3.30})$ & $65.23(\tr{-0.85})$\\
            \hline
        \hline
            \multirow{2}{*}{DOA (5x5)} & Main & $97.43$ & $81.94$ & $66.13$ & $48.28$ & $33.26$ \\
            \cline{2-7} & KEMLP & $97.45(\tg{+0.02})$ & $91.13(\tg{+9.19})$ & ${78.88}(\tg{+12.75})$ & $61.42(\tg{+13.14})$ & $42.36(\tg{+9.10})$\\
            \hline
        \hline
            \multirow{2}{*}{DOA (7x7)} & Main & $97.27$ & $77.85$ & $63.68$ & $46.55$ & $31.79$ \\
            \cline{2-7} & KEMLP & $97.22(\tr{-0.05})$ & $89.84(\tg{+11.99})$ & $77.78(\tg{+14.10})$ & $60.39(\tg{+13.84})$ & $40.90(\tg{+9.11})$\\
            \hline
        \hline
    
    \end{tabular}
    }
\end{table*}

\begin{table*}[tp]
    \caption{Adversarial accuracy under black-box sensor unforeseen attack, $\alpha=0.2$ (Accuracy $\%$)}
    \label{table:black_box_sensor_unforeseen}
    \centering
    \resizebox{1.0\textwidth}{!}{
    \begin{tabular}{c|c|c|c|c|c|c|c|c|c|c|c|c}
        \hline
             & & clean & fog-256 & fog-512 & snow-0.25 & snow-0.75 & jpeg-0.125 &jpeg-0.25 & gabor-20 & gabor-40 & elastic-1.5 & elastic-2.0\\
            \hline
        \hline
            \multirow{2}{*}{GTSRB-CNN} & Main & $\bm{99.38}$ & $77.55$  & $59.93$  & $78.50$ & $45.34$ & $83.10$ & $65.90$ & $75.36$ & $59.26$ & $77.16$ & $57.64$\\
            \cline{2-13} & KEMLP & ${98.28}(\tr{-1.10})$ & $84.03(\tg{+6.48})$  & $68.54(\tg{+8.61})$  & $83.08(\tg{+4.58})$ & $57.77(\tg{+12.43})$ & $88.97(\tg{+5.87})$ & $74.90(\tg{+9.00})$ & $84.88(\tg{+9.52})$ & $70.04(\tg{+10.78})$ & $82.10(\tg{+4.94})$ & $66.69(\tg{+9.05})$\\
            \hline
        \hline
            \multirow{2}{*}{AdvTrain ($\epsilon=4$)} & Main & $97.94$ & $70.68$ & $54.06$ & $77.70$ & $49.67$ & $87.45$ &$72.84$ & $88.14$ & $68.21$ & $83.38$ & $70.09$\\
            \cline{2-13} & KEMLP & ${97.89}(\tr{-0.05})$ & $79.37(\tg{+8.69})$  & $64.38(\tg{+10.32})$  & $82.38(\tg{+4.68})$ & $59.21(\tg{+9.54})$ & $\bm{91.80}(\tg{+4.35})$ & $80.09(\tg{+7.25})$ & $91.51(\tg{+3.37})$ & $75.05(\tg{+6.84})$ & $84.80(\tg{+1.42})$ & $73.12(\tg{+3.03})$\\
            \hline
        \hline
            \multirow{2}{*}{AdvTrain ($\epsilon=8$)} & Main & $93.72$ & $67.70$ & $53.73$ & $76.13$ & $51.75$ & $86.27$ &$76.75$ & $89.25$ & $76.47$ & $80.71$ & $67.85$\\
            \cline{2-13} & KEMLP & ${96.79}(\tg{+3.07})$ & $76.70(\tg{+9.00})$  & $64.97(\tg{+11.24})$  & $80.99(\tg{+4.86})$ & $60.39(\tg{+8.64})$ & $91.02(\tg{+4.75})$ & $82.54(\tg{+5.79})$ & $\bm{91.56}(\tg{+2.31})$ & $79.45(\tg{+2.98})$ & $83.26(\tg{+2.55})$ & $71.37(\tg{+3.52})$\\
            \hline
        \hline
            \multirow{2}{*}{AdvTrain ($\epsilon=16$)} & Main & $84.54$ & $66.44$ & $49.64$ & $75.15$ & $52.73$ & $81.58$ &$77.78$ & $83.90$ & $82.48$ & $76.23$ & $68.26$\\
            \cline{2-13} & KEMLP & ${94.68}(\tg{+10.14})$ & $77.11(\tg{+10.67})$  & $63.84(\tg{+14.20})$  & $81.58(\tg{+6.43})$ & $60.73(\tg{+8.00})$ & $87.68(\tg{+6.10})$ & $\bm{82.77}(\tg{+4.99})$ & $89.27(\tg{+5.37})$ & $\bm{83.44}(\tg{+0.96})$ & $81.07(\tg{+4.84})$ & $71.55(\tg{+3.29})$\\
            \hline
        \hline
            \multirow{2}{*}{AdvTrain ($\epsilon=32$)} & Main & $74.74$ & $65.82$ & $50.18$ & $71.97$ & $52.37$ & $72.61$ &$71.09$ & $76.26$ & $77.16$ & $68.03$ & $64.38$\\
            \cline{2-13} & KEMLP & ${91.46}(\tg{+16.72})$ & $77.62(\tg{+11.80})$  & $64.56(\tg{+14.38})$  & $79.60(\tg{+7.63})$ & $61.09(\tg{+8.72})$ & $83.85(\tg{+11.24})$ & $79.30(\tg{+8.21})$ & $85.60(\tg{+9.34})$ & $80.09(\tg{+2.93})$ & $77.67(\tg{+9.64})$ & $70.81(\tg{+6.43})$\\
            \hline
        \hline
            \multirow{2}{*}{DOA (5x5)} & Main & $97.43$ & $78.24$ & $62.32$ & $79.55$ & $56.69$ & $86.55$ & $71.32$ & $82.23$ & $67.28$ & $87.96$ & $75.75$\\
            \cline{2-13} & KEMLP & ${97.41}(\tr{-0.02})$ & $\bm{84.26}(\tg{+6.02})$  & $\bm{69.08}(\tg{+6.76})$  & $83.36(\tg{+3.81})$ & $\bm{62.58}(\tg{+5.89})$ & ${90.41}(\tg{+3.86})$ & $77.98(\tg{+6.66})$ & $87.06(\tg{+4.83})$ & $73.69(\tg{+6.41})$ & $\bm{86.09}(\tr{-1.87})$ & $\bm{75.90(\tg{+0.15})}$\\
            \hline
        \hline
            \multirow{2}{*}{DOA (7x7)} & Main & $97.27$ & $76.34$ & $61.32$ & $79.30$ & $55.94$ & $83.20$ & $66.10$ & $82.25$ & $67.54$ & $86.73$ & $73.77$\\
            \cline{2-13} & KEMLP & ${97.22}(\tr{-0.05})$ & $82.74(\tg{+6.40})$  & ${68.52}(\tg{+7.20})$  & $\bm{83.74}(\tg{+4.44})$ & ${62.47}(\tg{+6.53})$ & $89.04(\tg{+5.84})$ & ${76.44}(\tg{+10.34})$ & $87.60(\tg{+5.35})$ & $74.51(\tg{+6.97})$ & $85.91(\tr{-0.82})$ & $75.49(\tg{+1.72})$\\
            \hline
        \hline
    \end{tabular}
    }
\end{table*}

\begin{table*}[tp]
    \caption{Adversarial accuracy under black-box pipeline unforeseen attack, $\alpha=0.2$ (Accuracy $\%$)}
    \label{table:black_box_pipeline_unforeseen}
    \centering
    \resizebox{1.0\textwidth}{!}{
    \begin{tabular}{c|c|c|c|c|c|c|c|c|c|c|c|c}
        \hline
             & & clean & fog-256 & fog-512 & snow-0.25 & snow-0.75 & jpeg-0.125 &jpeg-0.25 & gabor-20 & gabor-40 & elastic-1.5 & elastic-2.0\\
            \hline
        \hline
            \multirow{2}{*}{GTSRB-CNN} & Main & $\bm{99.38}$ & $71.17$  & $49.13$  & $70.73$ & $36.45$ & $75.44$ & $51.98$ & $72.61$ & $53.47$ & $70.88$ & $54.53$\\
            \cline{2-13} & KEMLP & ${98.28}(\tr{-1.10})$ & $78.96(\tg{+7.79})$  & $60.65(\tg{+11.52})$  & $80.02(\tg{+9.29})$ & $52.16(\tg{+15.71})$ & $85.31(\tg{+9.87})$ & $67.64(\tg{+15.66})$ & $84.13(\tg{+11.52})$ & $69.24(\tg{+15.77})$ & $80.66(\tg{+9.78})$ & $67.80(\tg{+13.27})$\\
            \hline
        \hline
            \multirow{2}{*}{AdvTrain ($\epsilon=4$)} & Main & $97.94$ & $66.23$ & $47.33$ & $73.46$ & $42.10$ & $84.23$ &$65.07$ & $87.29$ & $66.95$ & $82.10$ & $68.80$\\
            \cline{2-13} & KEMLP & ${97.89}(\tr{-0.05})$ & $74.97(\tg{+8.74})$  & $58.62(\tg{+11.29})$  & ${80.63}(\tg{+7.17})$ & $54.09(\tg{+11.99})$ & $\bm{90.84}(\tg{+6.61})$ & $76.00(\tg{+10.93})$ & $90.61(\tg{+3.32})$ & $74.77(\tg{+7.82})$ & $84.85(\tg{+2.75})$ & $74.95(\tg{+6.15})$\\
            \hline
        \hline
            \multirow{2}{*}{AdvTrain ($\epsilon=8$)} & Main & $93.72$ & $63.14$ & $45.14$ & $72.87$ & $46.66$ & $84.59$ & $71.35$ & $88.86$ & $73.74$ & $80.30$ & $67.88$\\
            \cline{2-13} & KEMLP & ${96.79}(\tg{+3.07})$ & $72.89(\tg{+9.75})$  & $58.02(\tg{+12.88})$  & $79.73(\tg{+6.86})$ & $55.86(\tg{+9.20})$ & $90.59(\tg{+6.00})$ & $80.02(\tg{+8.67})$ & $\bm{90.92}(\tg{+2.06})$ & $77.93(\tg{+4.19})$ & $83.80(\tg{+3.50})$ & $73.77(\tg{+5.89})$\\
            \hline
        \hline
            \multirow{2}{*}{AdvTrain ($\epsilon=16$)} & Main & $84.54$ & $62.32$ & $42.98$ & $73.23$ & $50.08$ & $80.97$ &$76.26$ & $83.51$ & $81.22$ & $75.80$ & $68.75$\\
            \cline{2-13} & KEMLP & ${94.68}(\tg{+10.14})$ & $73.48(\tg{+11.16})$  & $58.18(\tg{+15.20})$  & $80.45(\tg{+7.22})$ & $57.54(\tg{+7.46})$ & $86.99(\tg{+6.02})$ & $\bm{80.92}(\tg{+4.66})$ & $88.30(\tg{+4.79})$ & $\bm{82.23}(\tg{+1.01})$ & $81.71(\tg{+5.91})$ & $72.69(\tg{+3.94})$\\
            \hline
        \hline
            \multirow{2}{*}{AdvTrain ($\epsilon=32$)} & Main & $74.74$ & $61.86$ & $45.01$ & $70.47$ & $50.57$ & $72.38$ & $69.70$ & $76.16$ & $76.39$ & $68.65$ & $64.99$\\
            \cline{2-13} & KEMLP & ${91.46}(\tg{+16.72})$ & $73.33(\tg{+11.47})$  & $58.49(\tg{+13.48})$  & $78.94(\tg{+8.47})$ & $58.67(\tg{+8.10})$ & $83.33(\tg{+10.95})$ & $77.42(\tg{+7.72})$ & $84.95(\tg{+8.79})$ & $79.09(\tg{+2.70})$ & $78.37(\tg{+9.72})$ & $71.45(\tg{+6.46})$\\
            \hline
        \hline
            \multirow{2}{*}{DOA (5x5)} & Main & $97.43$ & $75.01$ & $56.97$ & $77.67$ & $53.14$ & $83.15$ & $63.79$ & $82.07$ & $65.77$ & $\bm{88.17}$ & $\bm{78.88}$\\
            \cline{2-13} & KEMLP & ${97.41}(\tr{-0.02})$ & $\bm{80.40}(\tg{+5.39})$  & $\bm{64.40}(\tg{+7.43})$  & $82.28(\tg{+4.61})$ & $59.52(\tg{+6.38})$ & ${88.89}(\tg{+5.74})$ & $73.69(\tg{+9.90})$ & $87.04(\tg{+4.97})$ & $73.43(\tg{+7.66})$ & ${86.99}(\tr{-1.18})$ & ${77.88}(\tr{-1.00})$\\
            \hline
        \hline
            \multirow{2}{*}{DOA (7x7)} & Main & $97.27$ & $73.97$ & $57.05$ & $77.21$ & $53.55$ & $81.40$ & $62.68$ & $82.15$ & $67.28$ & $87.42$ & $78.27$\\
            \cline{2-13} & KEMLP & ${97.22}(\tr{-0.05})$ & $80.04(\tg{+6.07})$  & ${64.17}(\tg{+7.12})$  & $\bm{82.46}(\tg{+5.25})$ & $\bm{59.75}(\tg{+6.20})$ & $88.30(\tg{+6.90})$ & $73.48(\tg{+10.80})$ & $87.09(\tg{+4.94})$ & $73.95(\tg{+6.67})$ & $86.42(\tr{-1.00})$ & $78.58(\tg{+0.31})$ \\
            \hline
        \hline
    \end{tabular}
    }
\end{table*}

\end{document}